\newcommand{\comment}[1]{}
\DeclareMathOperator{\Exp}{\mathbb{E}}
\newcommand{\hfunc}{f_H}
\newcommand{\bfunc}{f_B}
\newcommand{\Sbar}{\tilde \Scal}
\newcommand{\Set}[2]{\Scal_{#1,\lambda(#2)}}
\newcommand{\Pm}{P^-}
\newcommand{\Pp}{P^+}
\newcommand{\fm}{f^-}
\newcommand{\fp}{f^+}
\newcommand{\given}{\,\mid\,}
\newcommand{\xhdr}[1]{\vspace{1mm} \noindent {\bf #1.}}
\title{Human-Aligned Calibration for AI-Assisted Decision Making}
\author[1,2]{Nina~L.~Corvelo~Benz}
\author[1]{Manuel~Gomez~Rodriguez}
\affil[1]{%
	Max Planck Institute for Software Systems, \{ninacobe, manuel\}@mpi-sws.org
}
\affil[2]{%
	Department of Biosystems Science and Engineering, ETH Zurich
}
\date{}
\begin{document}
\maketitle

\begin{abstract}
Whenever a binary classifier is used to provide decision support, it typically provides 
both a label prediction and a confidence value. 
Then, the decision maker is supposed to use the confidence value to calibrate how 
much to trust the prediction.
In this context, it has been often argued that the confidence value should correspond 
to a well calibrated estimate of the probability that the predicted label matches the 
ground truth label.
%
However, multiple lines of empirical evidence suggest that decision makers have 
difficulties at developing a good sense on when to trust a prediction using these 
confidence values.
%
In this paper, our goal is first to understand why and then investigate how to construct
more useful confidence values. 
We first argue that, for a broad class of utility functions, there exist data distributions for which 
a rational decision maker is, in general, unlikely to discover the optimal decision policy using 
the above confidence values---an optimal decision maker would need to sometimes place more 
(less) trust on predictions with lower (higher) confidence values. 
%
However, we then show that, if the confidence values satisfy a natural alignment property 
with respect to the decision maker'{}s confidence on her own predictions, 
there always exists an optimal decision policy under which the level of trust the decision 
maker would need to place on predictions is monotone on the confidence values, facilitating
its discoverability.
%
Further, we show that multicalibration with respect to the decision maker'{}s confidence on her 
own predictions is a sufficient condition for alignment.
%
Experiments on four different AI-assisted decision making tasks where a classifier provides decision 
support to real human experts validate our theoretical results and suggest that alignment may lead to 
better decisions.
\end{abstract}

\vspace{-2mm}
\section{Introduction} 
\label{sec:introduction}
\vspace{-2mm}
In recent years, there has been an increasing excitement on the potential of machine learning models to improve 
decision making in a variety of high-stakes domains such as medicine, education or criminal justice~\citep{jiao2020deep, whitehill2017mooc, dressel2018accuracy}.
One of the main focus has been binary classification tasks, where a classifier helps a decision maker
by predicting a binary label of interest using a set of observable features~\cite{corbett2017algorithmic,kilbertus2020fair,valera2018enhancing,rothblum2023decision}.
For example, in medical treatment, the classifier may help a doctor by predicting whether a patient may benefit from a 
treatment.
In college admissions, it may help an admissions committee by predicting whether a candidate may successfully 
complete an undergraduate program.
In loan decisions, it may help a bank by predicting whether a prospective customer may default on a loan. 
In all these scenarios, the decision maker---the doctor, the committee or the bank---aim to use these
predictions, together with their own predictions, to take good decisions that maximize a given utility function.
In this context, since the predictions are unlikely to always match the truth,
%
it has been widely agreed that the classifier should also provide a confidence value together with each 
prediction~\cite{guo2017calibration,zadrozny2001obtaining}.

While the conventional wisdom is that the confidence value should be a well calibrated estimate of the probability that the 
predicted label matches the true label~\cite{Gneiting2007,hebert2018multicalibration,gupta2021distribution,sahoo2021reliable,zhao2021calibrating,huang2020tutorial,wang2022improving},
%
%
multiple lines of empirical evidence have recently shown that decision makers have difficulties at developing a good sense on when to 
trust a prediction using these confidence values~\cite{vodrahalli2022uncalibrated, yona2022useful, straitouri2023improving}.
Therein, Vodrahali et al.~\cite{vodrahalli2022uncalibrated} have shown that, in certain scenarios, decision makers take better decisions 
using uncalibrated probability estimates rather than calibrated ones.
However, a theoretical framework explaining this puzzling observation has been missing and
it is yet unclear what properties we should be looking for to guarantee that confidence values are useful for AI-assisted
decision making. In our work, we aim to bridge this gap.

\xhdr{Our contributions}
%
We start by formally characterizing AI-assisted decision making using a structural causal model (SCM)~\cite{pearl2009causality}, as seen in Figure~\ref{fig:scm}.
Building upon this characterization, we first argue that, if a decision maker is rational, the level of trust she places 
on predictions will be monotone on the confidence values---she will place more (less) trust on predictions with 
higher (lower) confidence values.
%
Then, we show that, for a broad class of utility functions, there are data distributions for which a rational decision 
maker can never take optimal decisions using calibrated estimates of the probability that the predicted label matches the 
true label as confidence values.
%
However, we further show that, if the confidence values a decision maker uses satisfy a natural alignment property 
with respect to the confidence she has on her own predictions, which we refer to as human-alignment,
then the decision maker can both be rational and take optimal decisions.
%
In addition, we demonstrate that human-alignment can be achieved via multicalibration~\cite{hebert2018multicalibration}, 
a statistical notion introduced in the context of algorithmic fairness.
In particular, we show that multicalibration with respect to the decision maker'{}s confidence on her own predictions is a 
sufficient condition for human-alignment.
%
Finally, we validate our theoretical framework using real data from four different AI-assisted decision making tasks where a classifier 
provides decision support to human decision makers in four different binary classification tasks. 
%
Our results suggest that, comparing across tasks, classifiers providing human-aligned confidence values facilitate better decisions 
%
%
than classifiers providing confidence values that are not human-aligned. 
Moreover, our results also suggest that rational decision makers' trust level increases monotonically with the classifier'{}s provided 
confidence.

\xhdr{Further related work} 
Our work builds upon a rapidly increasing literature on AI-assisted decision making (refer to Lai et al.~\cite{lai2023towards} for a recent 
review). 
More specifically, it is motivated by several empirical studies showing that decision makers have difficulties at modulating trust using confidence 
values~\cite{vodrahalli2022uncalibrated, yona2022useful, straitouri2023improving}, as discussed previously. 
In this context, it is also worth noting that other empirical studies have analyzed how other factors such as model explanations and accuracy 
modulate trust~\citep{papenmeier2019model, wang2021explanations, yin2019understanding, nourani2020role, zhang2020effect}.
However, except for a very recent notable exception~\cite{chen2023machine}, theoretical frameworks, which could be used to 
better understand the mixed findings found by these empirical studies, have been missing.
%
More broadly, our work also relates to a flurry of recent work on reinforcement learning with human feedback~\cite{christiano2017deep,ouyang2022training,zhu2023principled}, 
which aims to better align the outputs of large language models (LLMs) with human preferences.
However, our formulation is fundamentally different and our technical contributions are orthogonal to 
theirs.


\vspace{-2mm}
\section{A Causal Model of AI-Assisted Decision Making} 
\label{sec:problem-setting}
\vspace{-2mm}
We consider an AI-assisted decision making process where, for each realization of the process, 
a decision maker first observes a set of features $(x, v) \in \Xcal \times \Vcal$, 
then takes a binary decision $t \in \{0, 1\}$ informed by a classifier'{}s prediction $\hat y = \argmax_{y} f_y(x)$, 
as well as confidence $f_{\hat y}(x) \in [0, 1]$, of a binary label of interest $y \in \{0, 1\}$,
and finally receives a utility $u(t, y) \in \RR$.
Such an AI-assisted decision making process fits a variety of real-world applications. 
For example, in medical treatment, the features $(x, v)$ may comprise multiple sources of information 
regarding a patient'{}s health\footnote{Our formulation allows for a subset of the features $v$ 
to be available only to the decision maker but not to the classifier.}, 
the label $y$ may indicate whether a patient would benefit from a specific treatment,
the decision $t$ may indicate whether the doctor applies the specific treatment to the patient,
and the utility $u(t, y)$ may quantify the trade-off between health benefit to the patient and 
economic cost to the decision maker.

In what follows, rather than working with both $\hat y$ and $f_{\hat y}(x)$, we will work with just $b = f_{1}(x)$, which 
we will refer to as classifier'{}s confidence, without loss of generality\footnote{We can recover $\hat y$ and $f_{\hat y}(x)$ from 
$b$, \ie, if $b > 0.5$, we have that $\hat y = 1$ and $f_{\hat y}(x) = b$; if $b < 0.5$, $\hat y = 0$ and 
$f_{\hat y}(x) = 1-b$.}.
%
%
Moreover, we will assume that the utility $u(t, y)$ is greater if the value of $t$ and $y$ coincide,
%
\ie,
\begin{equation} \label{eq:utility_prop_new}
    u(1,1) > u(1,0), \quad u(1,1) > u(0,1), \quad u(0,0) > u(1,0), \quad \text{and} \quad u(0,0) \geq u(0,1), 
\end{equation}
a condition that we think it is natural under an appropriate choice of label and decision values.
For example, in medical diagnosis, if $t = 1$ means the patient is tested early for 
a disease and $y = 1$ means the patient suffers the disease, 
the above condition implies that the utility of either testing a patient who suffers the disease or 
not testing a patient who does not suffer the disease are
greater than the utility of either not testing a patient who suffers the disease or testing a patient 
who does not suffer the disease.
In condition~\ref{eq:utility_prop_new}, we allow for a non-strict inequality $u(0,0) \geq u(0,1)$ because, in settings in 
which the label $Y$ is only realized whenever the decision $t = 1$ (\eg, in our previous example on medical treatment, we can only
observe if a treatment is eventually beneficial or not if the patient is treated), it has been argued that,
whenever $t = 0$, any choice of utility must be independent of the label value~\cite{corbett2017algorithmic,kilbertus2020fair,valera2018enhancing}, 
\ie, $u(0, 0) = u(0, 1) = u(0)$. 
%
%
%
%
\begin{figure}[t]
\centering
\includegraphics[width=.8\textwidth]{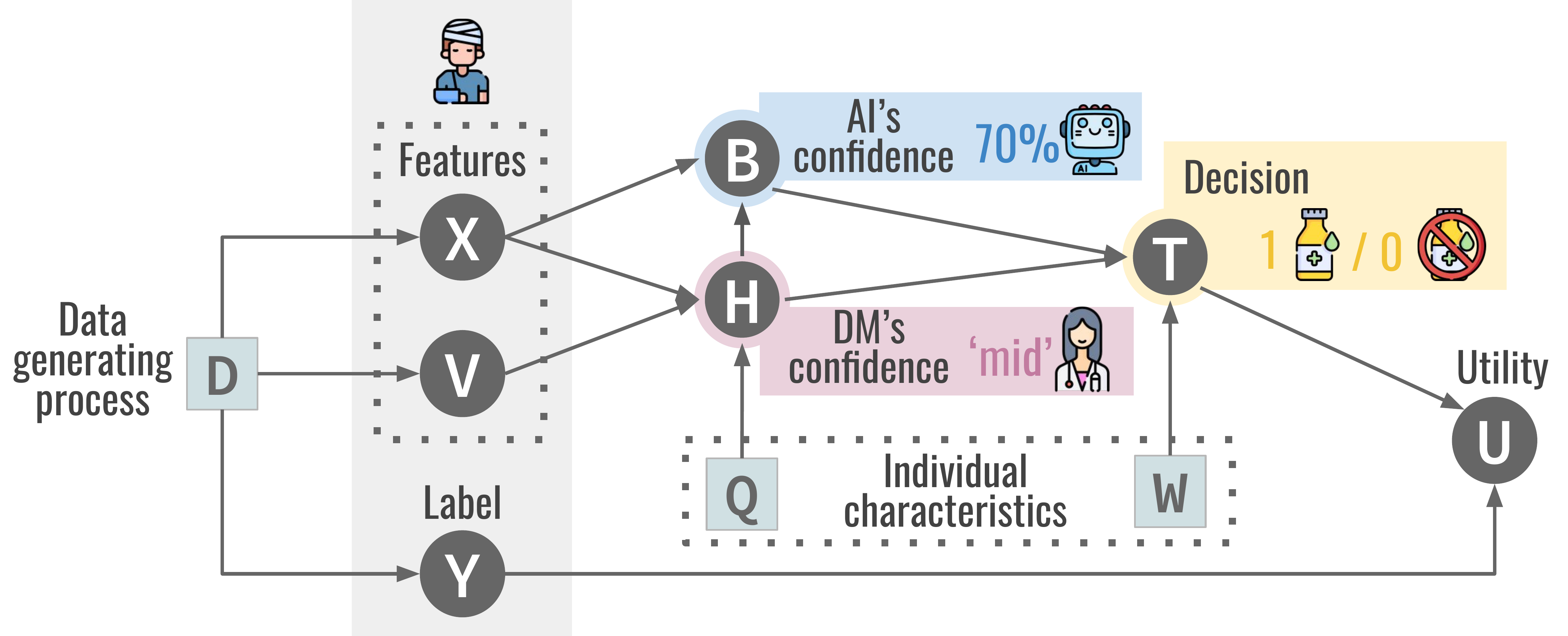}
\caption{Our structural causal model $\Mcal$. Orange circles represent endogenous random variables and blue boxes represent exogenous 
random variables. The value of each endogenous variable is given by a function of the values of its ancestors in the structural causal model, as 
defined by Eqs.~\ref{eq:scm-1} and~\ref{eq:scm-2}. The value of each exogenous variable is sampled independently from a given distribution.
}
\label{fig:scm}
\end{figure}

Next, we characterize the above AI-assisted decision making process using a structural causal model (SCM)~\cite{pearl2009causality}, 
which we denote as $\Mcal$. 
The SCM $\Mcal$ is defined by a set of assignments, which entail a distribution $P^{\Mcal}$ and divide 
naturally into two subsets.
One subset comprises the features and the label\footnote{We denote random variables with capital letters and realizations of 
random variables with lower case letters.}, \ie, 
\begin{equation} \label{eq:scm-1}
X=f_X(D) \quad V=f_V(D) \quad \text{and} \quad Y = f_{Y}(D),
\end{equation}
where $D$ is an independent exogenous random variable, often called exogenous noise, characterizing the data generating process 
and $f_X$, $f_V$ and $f_Y$ are given functions\footnote{Our model allows both for causal and 
anticausal features~\cite{scholkopf2012causal}.}. 
The second subset comprises the decision maker and the classifier, \ie,
\begin{equation} \label{eq:scm-2}
    H = \hfunc(X, V, Q), \quad B = \bfunc(X, H), \quad T = \pi(H, B, W) \quad \text{and} \quad U = u(T, Y),
\end{equation}
where $\hfunc$ and $\bfunc$ are given functions, which determine the decision maker'{}s confidence $H$ and 
classifier'{}s confidence $B$ that the value of the label of interest is $Y = 1$,
$\pi$ is a given AI-assisted decision policy, which determines the decision maker'{}s decision $T$,
$u$ is a given utility function, which determines the utility $U$,
and $Q$ and $W$ are independent exogenous variables modeling the decision maker'{}s individual 
characteristics influencing her own confidence $H$ and her decision $T$, respectively. 
%
%
By distinguishing both sources of noise, we allow for the presence of uncertainty on the decision $T$ even after 
conditioning on fixed confidence values $h$ and $b$. 
This accounts for the fact that, in reality, a decision maker may take different decisions $T$ for instances with the 
same confidence values $h$ and $b$.
For example, in medical treatment, for two different patients with the same confidence $h$ and $b$, a 
doctor's decision may differ due to limited resources.

In our SCM $\Mcal$, the decision maker'{}s confidence $H$ refers to the confidence the decision maker has that the 
label $Y = 1$ \emph{before} observing the classifier'{}s confidence $B$. 
Moreover, following previous behavioral studies showing that human'{}s confidence $H$ is discretized in a few distinct 
levels~\cite{lisi2021discrete,zhang2015human}, we assume $H$ takes values $h$ from a totally ordered discrete set $\Hcal$.
We say that the decision maker's confidence $f_{H}$ is monotone (with respect to the probability distribution $P(Y=1)$) if, 
for all $h, h' \in \Hcal$ such that $h \leq h'$, it holds that $P(Y = 1\given H=h)\leq P(Y = 1\given H=h')$.
Further, we allow the classifier'{}s confidence $B$ to depend on the decision maker'{}s confidence $H$ 
because this will be necessary to achieve human-alignment via multicalibration in Section~\ref{sec:achieving-alignment}. 
However, our negative result in Section~\ref{sec:impossibility} also holds if the classifier'{}s confidence 
$\bfunc(X, H) = \bfunc(X)$ only depends on the features, as usual in most classifiers designed for AI-assisted 
decision making.
In the remainder, we will use $Z = (X, H)$ and denote the space of features and human confidence values as 
$\Zcal = \Xcal \times \Hcal$.
%
%
Figure~\ref{fig:scm} shows a visual representation of our SCM $\Mcal$.

Under this characterization, we argue that, if a rational decision maker has decided $t$ under confidence values $b$ and $h$, then 
she would have decided $t' \geq t$ had the confidence values been $b' \geq b$ and $h' \geq h$ while holding ``everything else 
fixed''~\cite{pearl2009causality}.
For example, in medical treatment, assume a doctor'{}s and a classifier'{}s confidence that a patient would benefit from treatment is 
$b = h = 0.7$ and the doctor decides to treat the patient, then we argue that, if the doctor is rational, she would have treated the patient had the
doctor'{}s and the classifier'{}s confidence been $b' = h' = 0.8 > 0.7$.
Further, we say that any AI-assisted decision policy $\pi$ that satisfies this property is monotone, \ie,
\begin{definition}[Monotone AI-assisted decision policy]
    An AI-assisted decision policy $\pi$ is monotone if and only if, for any $b, b' \in [0, 1]$ and $h, h' \in \Hcal$ such that 
    $b\leq b'$ and $h\leq h'$, it holds that $\pi(h, b, w) \leq \pi(h', b', w)$ for any $w \sim P^{\Mcal}(w)$.
\end{definition}
Finally, note that, under any monotone AI-assisted decision policy, it trivially follows that 
%
\begin{equation}
    \EE[T\given H=h, B=b] \leq \EE[T\given H=h', B=b'],
    \label{eq:exp_inequality}
\end{equation}
where the expectation is over the uncertainty on the decision maker'{}s individual characteristics and the data generating
process. 
%
%

\vspace{-2mm}
\section{Impossibility of AI-Assisted Decision Making Under Calibration} 
\label{sec:impossibility}
\vspace{-2mm}
In AI-assisted decision making, classifiers are usually demanded to provide calibrated 
confidence values~\cite{Gneiting2007,hebert2018multicalibration,gupta2021distribution,sahoo2021reliable,zhao2021calibrating,huang2020tutorial,wang2022improving}.
A confidence function $\bfunc : \Zcal \to [0, 1]$ is said to be perfectly calibrated if, for any $b \in [0, 1]$, it holds that 
$P(Y=1 \given \bfunc(Z)=b) = b$.
Unfortunately, using finite amounts of (calibration) data, one can only hope to construct approximately calibrated 
confidence functions.
There exist many different notions of approximate calibration, which have been proposed over the years.
Here, for concreteness, we adopt the notion of $\alpha$-calibration\footnote{Note that, if $\alpha=0$ and $\Scal = \Zcal$, 
the confidence function $f$ is perfectly calibrated.} introduced by H{\'e}bert-Johnson et al.~\citep{hebert2018multicalibration}, 
however, our theoretical results can be easily adapted to other notions of approximate calibration\footnote{All proofs can be 
found in Appendix~\ref{apx:proofs}.}.
\begin{definition}[Calibration]\label{def:calibration}
    %
    A confidence function $\bfunc : \Zcal \to [0,1]$ satisfies \emph{$\alpha$-calibration with respect to $\Scal \subseteq \Zcal$} if there 
    exists some $\Scal' \subseteq \Scal$, with $ |\Scal'| \geq (1-\alpha) |\Scal|$, such that, for any $b \in [0, 1]$, it holds that
    \begin{equation}
        | P(Y=1 \given \bfunc(Z)=b, Z \in \Scal') - b | \leq \alpha,
    \end{equation}
\end{definition}
If the decision maker'{}s decision $T$ only depends on the classifier'{}s confidence $B$, \ie, $\pi(H, B, W) = \pi(B)$ and $f_{B}$ satisfies
$\alpha$-calibration with respect to $\Zcal$, then, it readily follows from previous work that, for any utility function that satisfies Eq.~\ref{eq:utility_prop_new}, 
a simple monotone AI-assisted decision policy $\pi^{*}_{B}$ that takes decisions by thresholding the confidence values is optimal~\cite{corbett2017algorithmic,kilbertus2020fair,valera2018enhancing,rothblum2023decision}, \ie, 
$\pi^{*}_{B} = \argmax_{\pi \in \Pi(B)} \EE_{\pi}[ u(T, Y) ]$, where the expectation is with respect to the probability distribution $P^{\Mcal}$ and 
$\Pi(B)$ denotes the class of AI-assisted decision policies using $B$.
However, one of the main motivations to favor AI-assisted decision making over fully automated decision making is that the decision
maker may have access to additional features $V$ and may like to weigh the classifier'{}s confidence $B$ against her own 
confidence $H$.
Hence, the decision maker may seek for the optimal decision policy $\pi^{*}$ over the class $\Pi(H, B)$ of AI-assisted decision policies 
using $H$ and $B$, \ie, $\pi^{*} = \argmax_{\pi \in \Pi(H, B)} \EE_{\pi}[ u(T, Y) ]$, since it may offer greater expected utility 
than $\pi^{*}_{B}$.

Unfortunately, the following negative result shows that, in general, a rational decision maker may be unable to discover such an optimal 
decision policy $\pi^{*}$ using (perfectly) calibrated confidence values and this is true even if $f_{H}$ is monotone:
\begin{theorem}\label{th:suboptimal_monoAP}
    %
    %
    There exist (infinitely many) AI-assisted decision making processes $\Mcal$ satisfying Eqs.~\ref{eq:scm-1} and~\ref{eq:scm-2}, 
    with utility functions $u(T, Y)$ satisfying Eq.~\ref{eq:utility_prop_new}, such that $f_{B}$ is perfectly calibrated and $f_{H}$ is 
    monotone but any AI-assisted decision policy $\pi \in \Pi(H, B)$ that satisfies monotonicity is suboptimal, \ie, $\EE_{\pi}[ u(T, Y)] < \EE_{\pi^{*}}[ u(T, Y) ]$.
    %
\end{theorem}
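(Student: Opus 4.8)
The plan is to prove the theorem by an explicit construction, exhibiting an open family of SCMs and checking that each member is a counterexample; essentially all the content sits in arranging a Simpson-type reversal that is simultaneously compatible with calibration of $\bfunc$ and monotonicity of $\hfunc$. I would reduce to the smallest nontrivial lattice: take $\Hcal = \{0,1\}$ with $0 < 1$ and let $B$ take two values $\beta_0 < \beta_1$, so $(H,B)$ has four cells $(i,j)$ with $i,j \in \{0,1\}$. A member of the family is fixed by the cell masses $q_{ij} = P(H=i, B=\beta_j)$ and the label rates $p_{ij} = P(Y=1 \given H=i, B=\beta_j)$, and any such law on $(Y,H,B)$ is realizable by an SCM obeying Eqs.~\ref{eq:scm-1} and~\ref{eq:scm-2}: let the exogenous noise draw a cell according to $(q_{ij})$ together with a uniform used to set $Y \sim \mathrm{Bernoulli}(p_{ij})$, let $X$ encode the cell, and read off $H = \hfunc(X)$ and $B = \bfunc(X)$ as row and column. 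Note that this makes $B$ a function of $X$ alone, so the construction also covers the $\bfunc(X,H) = \bfunc(X)$ variant noted after Eq.~\ref{eq:scm-2}.

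Second, I would record how optimality and monotonicity act on this lattice. Writing $s_{ij} = \EE[T \given H=i, B=\beta_j]$, the expected utility is separable over cells and affine in each $s_{ij}$ with slope $p_{ij}\,(u(1,1)-u(0,1)) - (1-p_{ij})\,(u(0,0)-u(1,0))$; by Eq.~\ref{eq:utility_prop_new} both differences are positive, so the slope is positive exactly when $p_{ij} > \tau$, where $\tau = (u(0,0)-u(1,0)) / \big((u(1,1)-u(0,1)) + (u(0,0)-u(1,0))\big) \in (0,1)$. Hence the optimal policy $\pi^{*}$ over $\Pi(H,B)$ is the unique (when no $p_{ij} = \tau$) assignment with $s_{ij} = 1$ iff $p_{ij} > \tau$, whereas any monotone $\pi$ must obey $s_{00} \le s_{01} \le s_{11}$ and $s_{00} \le s_{10} \le s_{11}$.

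Third, I would choose numbers forcing a reversal in the bottom row. Fix the utility $u(1,1) = u(0,0) = 1$ and $u(1,0) = u(0,1) = 0$ (so $\tau = \tfrac12$), all masses $q_{ij} = \tfrac14$, and rates $p_{00} = 0.7$, $p_{01} = 0.3$, $p_{10} = 0.2$, $p_{11} = 0.9$. A one-line check gives the perfectly calibrated values $\beta_0 = 0.45$ and $\beta_1 = 0.6$, the strict monotonicity $P(Y=1 \given H=0) = 0.5 < 0.55 = P(Y=1 \given H=1)$, and optimal decisions $s_{00} = 1$, $s_{01} = 0$. Since $(0,0) \le (0,1)$, monotonicity demands $s_{00} \le s_{01}$, which $\pi^{*}$ violates; as the objective is separable with a unique per-cell maximizer and cells $(0,0)$ and $(0,1)$ carry positive mass, every monotone $\pi$ must disagree with $\pi^{*}$ on at least one of them and so attains strictly smaller utility, \ie, $\EE_{\pi}[u(T,Y)] < \EE_{\pi^{*}}[u(T,Y)]$. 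Infinitely many examples follow by perturbing the $p_{ij}$, the masses, or the utility within the open region where all the strict inequalities persist.

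The step I expect to be the crux is the joint feasibility in the third paragraph, \ie, realizing the reversal without breaking calibration or $\hfunc$-monotonicity. Calibration pins the column averages of $p_{ij}$ to the increasing order $\beta_0 < \beta_1$, so $B$ is \emph{marginally} positively predictive of $Y$, and $\hfunc$-monotonicity pins the row averages to an increasing order; yet the counterexample needs $B$ to be \emph{negatively} predictive within the $H = 0$ subgroup ($p_{00} > p_{01}$). This is precisely a Simpson's-paradox configuration, compatible with both marginal orderings only because the two $H$-subgroups have different base rates with a strong enough positive within-group association in the $H = 1$ row to dominate the aggregate; checking that such rates exist and form an open set is the one genuinely load-bearing computation, while the threshold characterization and the strict-suboptimality argument are routine once the reversal is in place.
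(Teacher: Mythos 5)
Your proof is correct, and its skeleton matches the paper's: both hinge on a Simpson-type reversal in which some conditional probabilities $P(Y=1\given H,B)$ straddle the utility threshold (your $\tau$ is exactly the paper's constant $c$ from its Lemma~\ref{lem:trivialpolicies}) in an order opposite to the $(H,B)$ order, while calibration and $\hfunc$-monotonicity only pin down column and row aggregates. I checked your numbers: $\beta_0=0.45<\beta_1=0.6$ gives perfect calibration, $0.5<0.55$ gives monotonicity of $\hfunc$, and the optimal thresholding policy $(s_{00},s_{01},s_{10},s_{11})=(1,0,0,1)$ violates $s_{00}\le s_{01}$, so every monotone policy loses strictly on a positive-mass cell. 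The differences from the paper are in the instantiation rather than the idea. First, your counterexample is a minimal $2\times 2$ lattice with the reversal along $B$ inside the row $H=0$; the paper's smallest example is $3\times 3$ with the reversal along $H$ inside the column $B=b_2$ (it factors the final step into a reusable Lemma~\ref{lem:optimalpi} and remarks that the roles of $H$ and $B$ can be swapped, which is the variant you use). Second, you fix the $0$--$1$ utility and obtain infinitude by perturbation within an open region, whereas the paper's construction is parametrized by an \emph{arbitrary} utility satisfying Eq.~\ref{eq:utility_prop_new}, choosing distributions $P^{-},P^{+}$ on either side of $c$; your construction generalizes the same way (pick the four rates in a small neighborhood of $\tau$ with the required row/column orderings), but as written it only covers utilities near the symmetric one---still sufficient for the theorem's existential statement. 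Third, the paper goes on to build families with arbitrary $|\Hcal|=k$, $|\Bcal|=m$ and with continuous $\Bcal\subseteq[0,1]$ where $\bfunc(X)=P(Y=1\given X)$ is the Bayes-optimal confidence; this extra work is what shows the phenomenon is not an artifact of tiny discrete confidence spaces or of a contrived $\bfunc$, a robustness claim your minimal example does not deliver. What your route buys is economy: a fully explicit, checkable-by-hand counterexample with no auxiliary lemmas.
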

In the proof of the above result in Appendix~\ref{app:suboptimal_monoAP}, we show that there always exist 
a perfectly calibrated $\bfunc(Z) = \bfunc(X,H)$ that depends on both $X$ and $H$ for which any monotone AI-assisted decision 
policy is suboptimal. 
This is due to the fact that $\bfunc(Z)$ is calibrated on average over $H$, however, it may not be calibrated, nor even monotone, 
after conditioning on a specific value $H=h$.
Further, we also show that, even if $\bfunc(Z) = P^{\Mcal}(Y = 1 \given X)$ matches the true distribution of the label $Y$ given 
the features $X$, 
which has been typically the ultimate goal in the machine learning literature,
there always exists a monotone $\hfunc$ for which any monotone AI-assisted decision policy is suboptimal. 
This is due to the fact that the decision maker'{}s confidence $H$ can differ across instances with the same value for features $X$ 
because it also depends on the features $V$ and noise $Q$.
Hence, $\hfunc$ may not be monotone after conditioning on a specific value $X=x$ .
%
In both cases, when a rational decision maker compares pairs of confidence values $h, b$ and $h', b'$, the rate of positive outcomes $Y=1$
for each pair may appear \emph{contradictory} with the magnitude of confidence.
In what follows, we will show that, if $\bfunc$ satisfies a natural alignment property with respect to $\hfunc$, which we refer to as
human-alignment, there always exists an optimal AI-assisted decision policy that is monotone.

\vspace{-2mm}
\section{AI-Assisted Decision Making Under Human-Aligned Calibration} 
\label{sec:aligned-calibration}
\vspace{-2mm}
%

%
Intuitively, to avoid that pairs of confidence values $B$ and $H$ appear as contradictory to a rational decision maker, we 
need to make sure that, with high probability, both $\bfunc$ and $\hfunc$ are monotone after conditioning on specific 
values of $H$ and $B$, respectively. 
Next, we formalize this intuition by means of the following property, which we refer to as $\alpha$-alignment:
%
\begin{definition}[Human-alignment]\label{def:alignment}
    A confidence function $\bfunc$ satisfies $\alpha$-alignment with respect to a confidence function $\hfunc$ if, 
    for any $h \in \Hcal$, there exists some $\Sbar_h \subseteq \Scal_{h}$, with $\Scal_{h} = \{ (x,H) \in \Zcal \given H=h \}$ and $|\Sbar_h| \geq (1-\alpha/2) |\Scal_{h}|$, 
    such that, for any $b',b'' \in [0,1]$ and $h',h'' \in \Hcal$ such that $b' \leq b''$ and $h'\leq h''$, it holds that
    \begin{equation}\label{eq:alignment}
        \begin{split}
    	P(Y=1 \given f_{B}(X,H)=b', (X,H) \in \Sbar_{h'}) 
        - P(Y=1 \given f_{B}(X,H)=b'', (X,H) \in \Sbar_{h''}) \leq \alpha
        \end{split}
    \end{equation}
\end{definition}
The above definition just means that, if $\bfunc$ is $\alpha$-aligned with respect to $\hfunc$ then, for any $h, h' \in \Hcal$, 
we can bound any violation of monotonicity by $\bfunc$ between at least a $(1 - \alpha/2)$ fraction of the subspaces of features 
$\Scal_{h}$ and $\Scal_{h'}$.
Moreover, note that, if $\bfunc$ is $0$-aligned with respect to $\hfunc$, then there are no violations of monotonicity, \ie,
$P(Y=1 \given f_{B}(X,H)=b', (X,H) \in \Sbar_{h'}) \leq P(Y=1 \given f_{B}(X,H)=b'', (X,H) \in \Sbar_{h''})$, and we 
say that $\bfunc$ is perfectly aligned with respect to $\hfunc$.
%

Given the above definition, we are now ready to state our main result, which shows that human-alignment allows for AI-assisted decision 
policies that satisfy monotonicity and (near-)optimality:
\begin{theorem}\label{th:optimal_monoAP}
Let $\Mcal$ be any AI-assisted decision making process satisfying Eqs.~\ref{eq:scm-1} and~\ref{eq:scm-2}, with an utility function $u(T, Y)$ satisfying Eq.~\ref{eq:utility_prop_new}
If $\bfunc$ satisfies $\alpha$-alignment w.r.t. $\hfunc$, then there always exists an AI-assisted decision policy 
$\pi \in \Pi(H, B)$ that satisfies monotonicity and is near-optimal, \ie,
%
 \begin{equation}
    \EE_{\pi^{*}}[ u(T, Y) ] \leq \EE_{\pi}[ u(T, Y) ] + \alpha  \cdot \left[ u(1,1)-u(0,1)+ \frac{3}{2} (u(0,0)-u(1,0))\right]
 \end{equation}
where 
$\pi^{*} = \argmax_{\pi \in \Pi(H, B)} \EE_{\pi}[ u(T, Y) ]$ is the optimal policy.
%
\end{theorem}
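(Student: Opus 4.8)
The plan is to first identify the optimal policy $\pi^{*}$ explicitly, then repair monotonicity using the approximate structure that alignment guarantees, and finally bound the regret by splitting each feature fiber into an aligned part and a small residual.

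First, I would characterize $\pi^{*}$. Since any $\pi \in \Pi(H,B)$ acts on each pair $(h,b)$ independently, its expected utility decouples across $(h,b)$, so the per-instance choice $T=1$ beats $T=0$ exactly when $p(h,b) := P(Y=1 \given H=h, B=b)$ exceeds the threshold $\tau = (u(0,0)-u(1,0))/\Delta$ with $\Delta := u(1,1)-u(1,0)-u(0,1)+u(0,0)$. Eq.~\ref{eq:utility_prop_new} gives $\Delta>0$ and $\tau\in(0,1)$, hence $\pi^{*}(h,b)=\mathbb{1}[p(h,b)\ge\tau]$. The only obstruction to monotonicity is that $p$ need not be monotone in $(h,b)$.

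Next, I would use $\alpha$-alignment to build a monotone surrogate. On the aligned sets $\Sbar_h$ define $\tilde p(h,b) := P(Y=1 \given \bfunc(X,H)=b, (X,H)\in\Sbar_h)$; the alignment inequality is precisely $\tilde p(h',b') - \tilde p(h'',b'') \le \alpha$ whenever $(h',b')\le(h'',b'')$, i.e.\ $\tilde p$ is monotone up to slack $\alpha$. I would then form the monotone envelope $\bar p(h,b) := \sup_{(h',b')\le(h,b)} \tilde p(h',b')$, which is non-decreasing in both coordinates and, by approximate monotonicity, satisfies $\tilde p(h,b)\le\bar p(h,b)\le\tilde p(h,b)+\alpha$. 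The candidate is $\pi(h,b)=\mathbb{1}[\bar p(h,b)\ge\tau]$, which is monotone because thresholding a non-decreasing function preserves order, and which trivially satisfies the defining condition of a monotone policy since it ignores $w$.

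Finally, I would bound $\EE_{\pi^{*}}[u]-\EE_{\pi}[u]$, which equals the $P^{\Mcal}$-expectation of the per-instance regret $\Delta(p-\tau)$ on $\{\pi=0,\pi^{*}=1\}$ plus $\Delta(\tau-p)$ on $\{\pi=1,\pi^{*}=0\}$. For each $(h,b)$ I would write $p=q\tilde p+(1-q)p^{c}$, where $q=P((X,H)\in\Sbar_h \given H=h,B=b)$ and $p^{c}$ is the residual posterior, and split the regret accordingly. On the aligned part the envelope relation confines any disagreement to $\tilde p\in[\tau-\alpha,\tau)$, contributing regret at most $\Delta\alpha$; on the residual the per-instance regret is at most a single utility gap, while its total mass is $\sum_{h,b} P(h,b)(1-q) = P((X,H)\notin\Sbar_H)\le \alpha/2$ by the size condition in the alignment definition. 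A careful accounting that separates over-treatment regret (charged at rate $u(0,0)-u(1,0)$) from under-treatment regret (charged at rate $u(1,1)-u(0,1)$) should collect the aligned contribution into $\alpha\Delta=\alpha[(u(1,1)-u(0,1))+(u(0,0)-u(1,0))]$ and the residual contribution into $\tfrac{\alpha}{2}(u(0,0)-u(1,0))$, yielding the stated bound. The hard part will be exactly this step: the surrogate $\tilde p$ is controlled only in aggregate over $\Scal_h$ (the removed fraction is $\alpha/2$ of all of $\Scal_h$, not of each $b$-fiber), so I cannot compare $p(h,b)$ and $\tilde p(h,b)$ pointwise and must aggregate the fiber-wise discrepancies $(1-q)$ against the global budget $\alpha/2$; and keeping the correct signs in $\Delta(\tau-\tilde p)=(1-\tilde p)(u(0,0)-u(1,0))-\tilde p\,(u(1,1)-u(0,1))$ is what produces the asymmetric $1$ versus $3/2$ coefficients rather than a symmetric bound.
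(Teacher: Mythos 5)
Your plan follows the same skeleton as the paper's proof. Writing $\Delta := u(1,1)-u(1,0)+u(0,0)-u(0,1)$, your threshold $\tau=(u(0,0)-u(1,0))/\Delta$ is the paper's constant $c$ from Lemma~\ref{lem:trivialpolicies}; your envelope rule ``decide $T=1$ iff some $(h',b')\le(h,b)$ has $\tilde p(h',b')\ge\tau$'', with $\tilde p(h,b):=P(Y=1\given \bfunc(Z)=b,\,Z\in\Sbar_h)$, carves out exactly the decision regions of the paper's cutoff $\hat b_{h}$; and both arguments bound the regret cell by cell by splitting $P(Y=1\given H=h,B=b)$ into an aligned part, controlled by the $\alpha$ slack of Definition~\ref{def:alignment}, and a residual part, controlled by the $\alpha/2$ mass budget, charging over-treatment at rate $u(0,0)-u(1,0)$ and under-treatment at rate $u(1,1)-u(0,1)$. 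The one structural difference is that the paper edits an optimal (possibly randomized) policy only on the noise sets where monotonicity is violated, whereas you define a deterministic threshold rule from scratch; for the part of the argument you have written down, this difference is immaterial.

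The genuine gap is exactly the step you flag, and it cannot be closed by ``careful accounting'' over your policy. Your case analysis yields a bound of the form $\alpha\Delta+m_o\,(u(0,0)-u(1,0))+m_u\,(u(1,1)-u(0,1))$ with $m_o+m_u\le\alpha/2$, where $m_o,m_u$ are the residual masses landing on over- and under-treatment cells; since Eq.~\ref{eq:utility_prop_new} imposes no order between $u(1,1)-u(0,1)$ and $u(0,0)-u(1,0)$, this only gives $\alpha\Delta+\tfrac{\alpha}{2}\max\{u(0,0)-u(1,0),\,u(1,1)-u(0,1)\}$, which is strictly weaker than the claimed bound whenever $u(1,1)-u(0,1)>u(0,0)-u(1,0)$, i.e., whenever $\tau<1/2$. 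Worse, this is a property of your construction, not looseness of analysis: take a single $h$ and three fibers $b_1<b_2<b_3$ with $\tilde p(b_1)=\tau-\epsilon$, $\tilde p(b_2)=\tau$, $\tilde p(b_3)=\tau-\alpha$; put the entire residual budget $\alpha/2$ inside the $b_1$ fiber with residual posterior $1$, give $b_2$ negligible mass and $b_3$ essentially all the mass. Alignment holds, your policy decides $(0,1,1)$, the optimal policy decides $(1,\cdot,0)$, and your regret is approximately $\tfrac{\alpha}{2}\bigl(u(1,1)-u(0,1)\bigr)+\alpha\Delta$, which exceeds the stated bound for small $\alpha$ once $\tau<1/2$ (the all-zero policy, which is monotone, does meet the bound here, so the theorem itself is not contradicted---only your policy fails it). Closing the gap therefore requires more than tightening constants: either compare against additional candidate monotone policies, or anchor the construction to $\pi^{*}$ as the paper does. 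For what it is worth, the paper's own proof faces the identical dichotomy---its Case~2 residual is charged at $u(0,0)-u(1,0)$ and its Case~3 residual at $u(1,1)-u(0,1)$---and reaches the stated constant by bounding the expectation over $B$ by the \emph{maximum} of the two case totals rather than their sum; so the step you honestly identified as the hard part is indeed where all of the difficulty of this theorem lives.
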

\begin{corollary}
If $\bfunc$ is perfectly aligned with respect to $\hfunc$, then there always exists an AI-assisted decision policy $\pi \in \Pi(H, B)$ that 
satisfies monotonicity and is optimal. 
\end{corollary}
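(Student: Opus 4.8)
The plan is to obtain this corollary as the limiting case $\alpha = 0$ of Theorem~\ref{th:optimal_monoAP}. By the remark immediately following Definition~\ref{def:alignment}, perfect alignment of $\bfunc$ with respect to $\hfunc$ is precisely $0$-alignment, so the hypotheses of Theorem~\ref{th:optimal_monoAP} are met with $\alpha = 0$. Substituting $\alpha = 0$ annihilates the additive slack term $\alpha \cdot [u(1,1)-u(0,1)+\tfrac{3}{2}(u(0,0)-u(1,0))]$, and the theorem already hands us a monotone policy $\pi \in \Pi(H, B)$ satisfying $\EE_{\pi^{*}}[u(T,Y)] \le \EE_{\pi}[u(T,Y)]$.

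The remaining step is a one-line sandwich. Since $\pi^{*} = \argmax_{\pi' \in \Pi(H,B)} \EE_{\pi'}[u(T,Y)]$ is optimal over the entire class $\Pi(H,B)$ and the monotone policy $\pi$ produced above lies in $\Pi(H,B)$, the reverse inequality $\EE_{\pi}[u(T,Y)] \le \EE_{\pi^{*}}[u(T,Y)]$ holds by definition of the maximizer. Combining the two yields $\EE_{\pi}[u(T,Y)] = \EE_{\pi^{*}}[u(T,Y)]$, so $\pi$ is simultaneously monotone and optimal, which is exactly the claim. The only points worth checking are that $\alpha = 0$ is admissible in Theorem~\ref{th:optimal_monoAP} (it is, as the theorem imposes no positive lower bound on $\alpha$) and that the policy extracted from its proof genuinely belongs to $\Pi(H,B)$.

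For completeness, I would also note that in the perfectly aligned regime one can exhibit the optimal monotone policy directly, bypassing the slack bound entirely. When $\alpha = 0$ the cardinality requirement $|\Sbar_h| \ge (1-\alpha/2)|\Scal_h|$ forces $\Sbar_h = \Scal_h$, so the inequality in Eq.~\ref{eq:alignment} asserts exactly that the posterior $P(Y=1 \given H=h, B=b)$ is nondecreasing in the pair $(h,b)$ jointly. For a fixed realization $w$, the action maximizing expected utility at $(h,b)$ is determined by the sign of $p\,[u(1,1)-u(0,1)] - (1-p)\,[u(0,0)-u(1,0)]$ with $p = P(Y=1 \given H=h, B=b)$, which by Eq.~\ref{eq:utility_prop_new} is strictly increasing in $p$. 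Hence the pointwise-optimal rule is a threshold on $p$, and because $p$ is monotone in $(h,b)$ this threshold rule is itself monotone in $(h,b)$. A policy selecting this action (independently of $w$, which carries no information about $Y$) is then both monotone and pointwise utility-maximizing, hence optimal.

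Given the directness of the $\alpha = 0$ reduction, I do not anticipate any substantive obstacle; the main point requiring care is the mild one flagged above for the alternative direct argument, namely verifying that perfect alignment collapses $\Sbar_h$ to the full fiber $\Scal_h$ and that the resulting threshold policy is well defined and measurable, both of which are routine.
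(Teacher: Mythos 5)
Your main argument is exactly the paper's (implicit) proof: the paper gives no separate proof of the corollary, treating it as the $\alpha = 0$ instance of Theorem~\ref{th:optimal_monoAP}, where the slack term vanishes and optimality of $\pi^{*}$ over $\Pi(H,B)$ supplies the reverse inequality, precisely your sandwich step. Your supplementary direct construction---noting that $0$-alignment forces $\Sbar_h = \Scal_h$, so that $P(Y=1 \given H, B)$ is jointly monotone and the pointwise threshold rule at $c$ from Lemma~\ref{lem:trivialpolicies} is itself a monotone optimal policy---is also correct and is a nice self-contained alternative, but it is not needed for the claim.
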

Finally, in many high-stakes applications, we may like to make sure that the confidence values provided by $\bfunc$ are both useful and
interpretable~\cite{bhatt2021uncertainty}. Hence, we may like to seek for confidence functions $\bfunc$ that satisfy human-aligned 
calibration, which we define as follows:
\begin{definition}[Human-aligned calibration]
    A confidence function $\bfunc$ satisfies $\alpha$-aligned calibration with respect to a confidence function $\hfunc$ if and only 
    if $\bfunc$ satisfies $\alpha$-alignment with respect to $\hfunc$ and it satisfies $\alpha$-calibration with respect to $\Zcal$.
\end{definition}
In the next section, we will show how to achieve human-alignment and human-aligned calibration via multicalibration, a statistical notion 
introduced in the context of algorithmic fairness~\cite{hebert2018multicalibration}.

\vspace{-2mm}
\section{Achieving Human-Aligned Calibration via Multicalibration}
\label{sec:achieving-alignment}
\vspace{-2mm}
Multicalibration was introduced by H{\'e}bert-Johnson et al.~\cite{hebert2018multicalibration} as a notion to achieve 
fairness in supervised learning. 
It strengthens the notion of calibration by requiring that the confidence function is calibrated simultaneously across a 
large collection of subspaces of features $\Ccal \subseteq 2^\Zcal$ which may or may not be disjoint.
More formally, it is defined as follows:
\begin{definition}[Multicalibration]
    A confidence function $\bfunc : \Zcal \to \Bcal$ satisfies $\alpha$-multicalibration with respect to $\Ccal \subseteq 2^\Zcal$ 
    if $\bfunc$ satisfies $\alpha$-calibration with respect to every $\Scal \in \Ccal$.
\end{definition}
Then, we can show that, for an appropriate choice of $\Ccal$, if $\bfunc$ satisfies $\alpha$-multicalibration with respect to $\Ccal$, then it 
satisfies $\alpha$-aligned calibration with respect to $\hfunc$. More specifically, we have the following result:
\begin{theorem} \label{th:multcal_to_alignment}
    If $\bfunc$ satisfies $(\alpha/2)$-multicalibration with respect to $\{\Scal_{h}\}_{h \in \Hcal}$, with $\Scal_h = \{ (x, H) \in \Zcal \given H = h \}$,
    then $\bfunc$ satisfies $\alpha$-aligned calibration with respect to $\hfunc$. 
\end{theorem}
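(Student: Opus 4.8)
The plan is to prove both halves of the definition of $\alpha$-aligned calibration separately: I must show that $\bfunc$ is $\alpha$-aligned with respect to $\hfunc$ and, independently, that $\bfunc$ is $\alpha$-calibrated with respect to $\Zcal$. The starting point is to unpack the hypothesis: $(\alpha/2)$-multicalibration with respect to $\{\Scal_h\}_{h\in\Hcal}$ means that for every $h\in\Hcal$ there is a subset $\Scal_h' \subseteq \Scal_h$ with $|\Scal_h'| \ge (1-\alpha/2)|\Scal_h|$ on which $|P(Y=1 \given \bfunc(Z)=b,\, Z\in \Scal_h') - b| \le \alpha/2$ for every $b$. My plan is to reuse these exact subsets twice: I take $\Sbar_h = \Scal_h'$ as the alignment certificate, and I take their union $\Scal' = \bigcup_{h} \Scal_h'$ as the calibration certificate on $\Zcal$. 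The structural fact I would exploit throughout is that $\{\Scal_h\}_{h\in\Hcal}$ partitions $\Zcal$, since every $(x,h)$ lies in exactly one $\Scal_h$.

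First I would verify alignment. The size condition $|\Sbar_h| \ge (1-\alpha/2)|\Scal_h|$ holds by construction. For the inequality, fix $b' \le b''$ and $h' \le h''$ and apply the one-sided consequences of calibration on each level: $P(Y=1 \given \bfunc(Z)=b',\, Z\in\Sbar_{h'}) \le b' + \alpha/2$ and $P(Y=1 \given \bfunc(Z)=b'',\, Z\in\Sbar_{h''}) \ge b'' - \alpha/2$. Subtracting and using $b'-b'' \le 0$ bounds the difference by $(b'-b'') + \alpha \le \alpha$, which is exactly the alignment condition. The point worth emphasizing is that the level indices $h',h''$ only pick out which subset to condition on; monotonicity across $h$-levels comes "for free," because calibration pins each conditional positive rate near its own $b$ regardless of $h$.

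Next I would establish $\alpha$-calibration with respect to $\Zcal$ using $\Scal'$, which I expect to be the main step. Because the $\Scal_h'$ are disjoint, $|\Scal'| = \sum_h |\Scal_h'| \ge (1-\alpha/2)|\Zcal|$, which comfortably satisfies the required $(1-\alpha)$ mass. For the calibration bound, I would write $P(Y=1 \given \bfunc(Z)=b,\, Z\in\Scal')$ as a ratio of sums over $h$ and recognize it as the convex combination $\sum_h w_h\, q_h$, where $q_h = P(Y=1 \given \bfunc(Z)=b,\, Z\in\Scal_h')$ and the weights $w_h$ are proportional to $P(\bfunc(Z)=b,\, Z\in\Scal_h')$ and sum to one. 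Since each $q_h$ lies within $\alpha/2$ of $b$, any convex combination does too, giving $|P(Y=1 \given \bfunc(Z)=b,\, Z\in\Scal') - b| \le \alpha/2 \le \alpha$.

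The only genuine obstacle is justifying this convex-combination decomposition cleanly: one has to do the bookkeeping of conditioning on the union versus the individual pieces, restricting the sum to those $h$ for which the event $\{\bfunc(Z)=b,\, Z\in\Scal_h'\}$ has positive probability (degenerate terms contribute nothing and are simply dropped). Everything else reduces to a one-line application of the calibration inequalities, so I expect the write-up to be short. Conceptually, all the content lies in observing that the partition structure of $\{\Scal_h\}_{h\in\Hcal}$ converts "calibrated on each level" simultaneously into "calibrated on the whole space" and "monotone across levels," which are precisely the two properties that human-aligned calibration demands.
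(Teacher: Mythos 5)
Your proposal is correct and follows essentially the same route as the paper: the same per-level subsets $\Sbar_h$ serve as the alignment certificate via the one-sided bounds $P(Y=1 \given \bfunc(Z)=b', Z\in\Sbar_{h'}) - P(Y=1 \given \bfunc(Z)=b'', Z\in\Sbar_{h''}) \leq \alpha + b' - b'' \leq \alpha$, and calibration on $\Zcal$ follows because $\{\Scal_h\}_{h\in\Hcal}$ partitions $\Zcal$. The only difference is cosmetic: the paper asserts the union-calibration step in one sentence, whereas you spell out the convex-combination bookkeeping that justifies it.
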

The above theorem suggests that, given a classifier'{}s confidence function $\bfunc$, we can multicalibrate $\bfunc$ with respect to
$\{\Scal_h\}_{h\in \Hcal}$ to achieve $\alpha$-aligned calibration with respect to $\hfunc$.
To achieve multicalibration guarantees using finite amounts of (calibration) data, multicalibration algorithms need to
discretize the range of $\bfunc$~\cite{hebert2018multicalibration, zadrozny2001obtaining, gupta2021distribution}. 
In what follows, we briefly revisit two 
algorithms, which carry out this discretization differently, 
and discuss their complexity and data requirements with respect to achieving $\alpha$-aligned calibration.

\xhdr{Multicalibration algorithm via $\lambda$-discretization}
This algorithm, which was introduced by H{\'e}bert-Johnson et al.~\cite{hebert2018multicalibration}, discretizes
the range of $\bfunc$, \ie, the interval $[0, 1]$, into bins of fixed size $\lambda > 0$ with values
$\Lambda[0,1]=\{\frac{\lambda}{2},\frac{3\lambda}{2}, \dots, 1-\frac{\lambda}{2}\}$.
%

%

Let $\lambda(b)=[b-\lambda/2, b+\lambda/2)$. The algorithm partitions each subspace $\Scal_h$ into $1/\lambda$ 
groups $\Scal_{h, \lambda(b)} = \{ (x, h) \in \Scal_h \given \bfunc(x, h) \in \lambda(b)\}$, with $b \in \Lambda[0,1]$. 
It iteratively updates the confidence values of function $f_B$ for these groups until $f_B$ satisfies a discretized notion of $\alpha'$-multicalibration over these groups. 
The algorithm then returns a discretized confidence function $f_{B, \lambda}(x, h) = \EE[\bfunc(X,H) \given \bfunc(X, H) \in \lambda(b)]$, with $b \in \Lambda[0,1]$ 
such that $\bfunc(x,h) \in \lambda(b)$, which is guaranteed to satisfy $(\alpha'+\lambda)$-multicalibration. Refer to Algorithm~\ref{alg:multical_alg} in Appendix~\ref{sec:link_multical} for a pseudocode of the algorithm.

Then, as a direct consequence of Theorem~\ref{th:multcal_to_alignment}, we can obtain a (discretized) confidence function $f_{B, \lambda}$ 
that satisfies $\alpha$-aligned calibration by setting $\alpha' = \lambda = \alpha/4$. 
However, the following proposition shows that, to satisfy just $\alpha$-alignment, it is enough to set $\alpha' = \frac{3}{8}\alpha > \alpha/4$ and 
$\lambda = \alpha/4$:
\begin{proposition}\label{prop:multical_to_alignment}
The discretized confidence function $f_{B, \lambda}$ returned by Algorithm~\ref{alg:multical_alg} satisfies $(2\alpha' + \lambda)$-alignment
with respect to $\hfunc$.
\end{proposition}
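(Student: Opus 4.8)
The plan is to prove the $(2\alpha'+\lambda)$-alignment bound directly, by reopening the argument behind Theorem~\ref{th:multcal_to_alignment} rather than invoking it as a black box. Feeding the algorithm's output guarantee---that $f_{B,\lambda}$ is $(\alpha'+\lambda)$-multicalibrated---into Theorem~\ref{th:multcal_to_alignment} would only yield $2(\alpha'+\lambda)$-alignment; the sharper constant $2\alpha'+\lambda$ comes from handling the discretization slack more carefully, charging the bin half-width $\lambda/2$ only once on each side of the comparison instead of the full calibration error $\alpha'+\lambda$.

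First I would fix the good subsets. The algorithm drives $\bfunc$ to a discretized $\alpha'$-multicalibration over the groups $\{\Set{h}{b}\}$, so for each $h\in\Hcal$ there is a subset $\Sbar_h\subseteq\Scal_h$ with $|\Sbar_h|\ge(1-\alpha')|\Scal_h|$ on which, for every bin value $b\in\Lambda[0,1]$, the average label is within $\alpha'$ of the average of $\bfunc$, i.e. $\bigl|\EE[Y\given (X,H)\in\Set{h}{b}\cap\Sbar_h]-\EE[\bfunc(X,H)\given(X,H)\in\Set{h}{b}\cap\Sbar_h]\bigr|\le\alpha'$. Since $1-\alpha'\ge 1-(2\alpha'+\lambda)/2$, this $\Sbar_h$ already meets the size requirement in Definition~\ref{def:alignment} for the target parameter $2\alpha'+\lambda$, so the very same subsets will serve as the witnesses for alignment.

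Next I would verify the alignment inequality. Because $f_{B,\lambda}$ is constant on each bin and its per-bin value (the conditional mean of $\bfunc$) is monotone in the bin index, a pair of output values $b'\le b''$ corresponds to bins $\lambda(c')$, $\lambda(c'')$ with $c'\le c''$, and the relevant level sets are exactly $\Set{h'}{c'}\cap\Sbar_{h'}$ and $\Set{h''}{c''}\cap\Sbar_{h''}$. On each such bin $\bfunc$ lies in an interval of width $\lambda$, so $\EE[\bfunc\given\Set{h'}{c'}\cap\Sbar_{h'}]< c'+\lambda/2$ and $\EE[\bfunc\given\Set{h''}{c''}\cap\Sbar_{h''}]\ge c''-\lambda/2$. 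Combining with the $\alpha'$-multicalibration bound above gives
\begin{equation*}
P(Y=1\given f_{B,\lambda}(X,H)=b',(X,H)\in\Sbar_{h'})-P(Y=1\given f_{B,\lambda}(X,H)=b'',(X,H)\in\Sbar_{h''})\le (c'-c'')+\lambda+2\alpha',
\end{equation*}
and since $c'\le c''$ the right-hand side is at most $2\alpha'+\lambda$, which is exactly $(2\alpha'+\lambda)$-alignment.

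The main obstacle, and the only place genuine care is needed, is the bookkeeping that produces $2\alpha'+\lambda$ rather than $2\alpha'+2\lambda$: one must compare the two conditional label rates against the \emph{bin endpoints} $c'+\lambda/2$ and $c''-\lambda/2$ (using the raw width of the discretization) rather than against the reported constant value $f_{B,\lambda}$, which would reintroduce a full $\lambda$ on each side. A secondary check is that the single good subset $\Sbar_h$ furnished by the calibration guarantee is valid simultaneously for all bins and that its size $(1-\alpha')|\Scal_h|$ clears the $(1-(2\alpha'+\lambda)/2)|\Scal_h|$ threshold of the alignment definition; both hold as noted above.
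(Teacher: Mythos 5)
Your proof is correct and takes essentially the same approach as the paper's: the paper merely factors the identical argument through an intermediate discretized notion of $(\alpha,\lambda)$-alignment (Theorems~\ref{th:lambdamultcal_to_lambdaalignment} and~\ref{th:lambdaalignment_to_alignment}), using the same three ingredients --- forming $\Sbar_h$ by discarding the at most $1/\lambda$ bins of mass below $\alpha'\lambda|\Scal_h|$, paying $\alpha'$ twice for calibration of the two compared groups, and paying $\lambda$ once because the discretized values preserve bin order. Your explicit check that $(1-\alpha')|\Scal_h|$ clears the $(1-(2\alpha'+\lambda)/2)|\Scal_h|$ threshold of Definition~\ref{def:alignment} is, if anything, slightly tidier bookkeeping than the paper's two-step version.
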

Finally, it is worth noting that, to implement Algorithm~\ref{alg:multical_alg}, we need to compute empirical estimates of the expectations and 
probabilities above using a calibration set $\Dcal$. 
In this context, Theorem 2 in H{\'e}bert-Johnson et al.~\cite{hebert2018multicalibration} shows that,
if we use a calibration set of size $O(\log( |\Hcal|/(\alpha\gamma\xi)  ) / \alpha^{11/2} \gamma^{3/2} )$, with $P( (X,H) \in \Scal_h ) > \gamma$ 
for all $h \in \Hcal$, then $f_{B, \lambda}$ is guaranteed to satisfy $\alpha$-multicalibration with probability at least $1-\xi$ in time $O(|\Hcal| \cdot \text{poly}(1/\alpha,1/\gamma))$.


\xhdr{Multicalibration algorithm via uniform mass binning} 
Uniform mass binning (UMD)~\citep{zadrozny2001obtaining,gupta2021distribution} has been originally designed to calibrate $\bfunc$ with 
respect to $\Zcal$ using a calibration set $\Dcal$. 
However, since the subspaces $\{\Scal_h\}_{h \in \Hcal}$ are disjoint, \ie, $\Scal_h \cap \Scal_{h'} = \emptyset $ for every $h \neq h'$, we can multicalibrate $\bfunc$ with respect to 
$\{\Scal_h\}_{h \in \Hcal}$ by just running $|\Hcal|$ instances of UMD, each using the subset of samples $\Dcal \cap \Scal_h$.
Here, we would like to emphasize that we can use UMD to achieve multicalibration because, in our setting, the subspaces $\{\Scal_h\}_{h \in \Hcal}$ are disjoint.

Each instance of UMD discretizes the range of $\bfunc$, \ie, the interval $[0, 1]$, into $N = 1/\lambda$ bins with values
$\Lambda_h[0, 1] = \{ \hat{P}(Y = 1 \given \bfunc(X, h) \in [0, \hat{q}_1] ), \ldots, \hat{P}(Y = 1 \given \bfunc(X, h) \in [\hat{q}_{N-1}, \hat{q}_N] ) \}$,
where $\hat{q}_i$ denotes the $(i/N)$-th empirical quantile of the confidence values $\bfunc(x,h)$ of the samples $(x,h) \in \Dcal \cap \Scal_h$
and $\hat{P}$ denotes an empirical estimate of the probability using samples from $\Dcal \cap \Scal_h$, aswell.
Here, note that, by construction, the bins have similar probability mass.
Then, for each $(x, h) \in \Zcal$, the corresponding instance of UMD provides the value of the discretized confidence function 
$f_{B, \lambda}(x,h) = b$, where $b \in \Lambda_h[0,1]$ denotes the value of the bin whose corresponding defining interval 
includes $\bfunc(x,h)$. 
Finally, we have the following theorem, which guarantees that, as long as the calibration set is large enough, the discretized confidence
function $f_{B, \lambda}$ satisfies $\alpha$-aligned calibration with respect to $\hfunc$ with high probability:
\begin{theorem}\label{th:UMD_calibration}
    The discretized confidence function $f_{B, \lambda}$ returned by $|\Hcal|$ instances of UMD, one per $\Scal_h$, satisfies $\alpha$-aligned calibration
    with respect to $\hfunc$ with probability at least $1-\xi$ as long as the size of the calibration set $|\Dcal |= O\left(\frac{ |\Hcal| }{\alpha^2 \lambda \gamma} \log\left(\frac{|\Hcal|}{\lambda \xi}\right) \right)$, with $P((X,H)\in \Scal_h) \geq \gamma$.
\end{theorem}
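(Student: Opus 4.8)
The plan is to reduce the claimed statistical guarantee to the purely deterministic implication already established in Theorem~\ref{th:multcal_to_alignment}, so that the only thing left to prove is a finite-sample multicalibration bound for the UMD construction. By Theorem~\ref{th:multcal_to_alignment}, it suffices to show that, with probability at least $1-\xi$, the discretized confidence function $f_{B,\lambda}$ returned by the $|\Hcal|$ UMD instances satisfies $(\alpha/2)$-multicalibration with respect to $\{\Scal_h\}_{h\in\Hcal}$; the conclusion of $\alpha$-aligned calibration then follows immediately. Since the subspaces $\{\Scal_h\}_{h\in\Hcal}$ are disjoint and the instance on $\Scal_h$ uses only the samples $\Dcal\cap\Scal_h$, $(\alpha/2)$-multicalibration reduces to each instance being $(\alpha/2)$-calibrated with respect to its own $\Scal_h$, and the per-instance failure events are independent once the allocation of samples to subspaces is fixed. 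Hence I would prove the bound for a single instance and close with a union bound over $h$.

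First I would control the sample counts. Writing $n_h=|\Dcal\cap\Scal_h|$, the assumption $P((X,H)\in\Scal_h)\geq\gamma$ together with a multiplicative Chernoff bound gives $n_h\geq\tfrac12\gamma|\Dcal|$ for every $h$ simultaneously, except on an event of probability at most $|\Hcal|\exp(-\gamma|\Dcal|/8)$, which is subsumed by $\xi$ once $|\Dcal|=\Omega(\gamma^{-1}\log(|\Hcal|/\xi))$. Next, fixing $h$ and the UMD instance on $\Scal_h$: it partitions $[0,1]$ into $N=1/\lambda$ empirical-quantile bins and assigns to each point the value $b=\hat P(Y=1\given \bfunc(X,h)\in I)$ of its bin $I$. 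Because bin values are the empirical within-bin frequencies, the calibration error at value $b$ is exactly $|P(Y=1\given f_{B,\lambda}(X,H)=b,(X,H)\in\Scal_h)-b|=|P_I-\hat P_I|$, the gap between the true and empirical conditional probability of $Y=1$ over the population falling in bin $I$. By the uniform-mass construction each bin carries about $\lambda n_h$ samples, so a Hoeffding bound gives $|P_I-\hat P_I|\leq\sqrt{\log(2/\delta)/(2\lambda n_h)}$ except with probability $\delta$ per bin. Taking $\delta=\lambda\xi/(2|\Hcal|)$ and union-bounding over the $1/\lambda$ bins and the $|\Hcal|$ subspaces keeps the total failure probability below $\xi/2$ and forces every bin value to be $(\alpha/2)$-calibrated as soon as $n_h=\Omega\!\big(\alpha^{-2}\lambda^{-1}\log(|\Hcal|/(\lambda\xi))\big)$. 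Substituting $n_h\geq\tfrac12\gamma|\Dcal|$ and solving for $|\Dcal|$ recovers the stated order, the explicit $|\Hcal|$ and $\gamma^{-1}$ factors reflecting, respectively, the union bound over instances and the worst-case subspace mass. The slack in Definition~\ref{def:calibration}, i.e. the freedom to discard an $\alpha/2$ fraction $\Scal'\subseteq\Scal_h$, is used to absorb bins whose empirical mass deviates from $\lambda$ and the rare event that quantile estimation misplaces a boundary, so that no under-sampled bin can violate the guarantee. Combining this $(\alpha/2)$-multicalibration with Theorem~\ref{th:multcal_to_alignment} yields $\alpha$-aligned calibration with respect to $\hfunc$ with probability at least $1-\xi$.

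The main obstacle is the dependence created by reusing the same samples both to form the empirical-quantile bin boundaries and to estimate the within-bin frequencies $\hat P_I$: the events $\{\bfunc(X,h)\in I\}$ over which I condition are themselves random and correlated with the $Y$ labels, so a naive Hoeffding bound over a fixed set does not directly apply. I would resolve this either by invoking the existing distribution-free finite-sample guarantees for uniform-mass binning~\cite{gupta2021distribution}, which already account for data-dependent bins, or by a sample-splitting argument that determines the bin boundaries on one half of $\Dcal\cap\Scal_h$ and estimates the frequencies on the other, at the cost of only a constant factor in $|\Dcal|$. The remaining steps, namely the Chernoff concentration of the $n_h$, the per-bin Hoeffding estimates, and the two union bounds, are routine and I would not expect them to present difficulties.
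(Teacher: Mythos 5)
Your proposal is correct and follows essentially the same route as the paper's proof: the paper likewise reduces $\alpha$-aligned calibration to per-subspace $(\alpha/2)$-calibration of the $|\Hcal|$ UMD instances (via the argument of Theorem~\ref{th:multcal_to_alignment}, phrased there through PAC-style conditional calibration with $\Sbar_h = \Scal_h$), invokes the finite-sample guarantee of Gupta et al.~\cite{gupta2021distribution} (their Theorem~3) for each instance---which is exactly your primary fix for the data-dependent-bin obstacle you correctly flag---and finishes with a Chernoff bound on the counts $|\Dcal \cap \Scal_h|$ plus union bounds over bins and over $h \in \Hcal$. The only substantive difference is bookkeeping: your accounting yields a sufficient calibration-set size of order $\frac{1}{\alpha^2 \lambda \gamma}\log\left(\frac{|\Hcal|}{\lambda\xi}\right)$, without the multiplicative $|\Hcal|$ factor that the paper incurs through a deliberately loose choice of $|\Dcal|$ in its Chernoff step, and since the theorem only asserts a sufficient size, your tighter bound still implies the statement.
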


%

\vspace{-2mm}
\section{Experiments}
\label{sec:experiments}
\vspace{-2mm}
In this section, we validate our theoretical results using a dataset with real expert predictions in an AI-assisted 
decision making scenario comprising four different binary classification tasks\footnote{We release the code 
to reproduce our analysis at \href{https://github.com/Networks-Learning/human-aligned-calibration}{https://github.com/Networks-Learning/human-aligned-calibration}.}.

\begin{figure*}[t!]
    \centering
\subfloat[Art]{\includegraphics[width=0.48\textwidth]{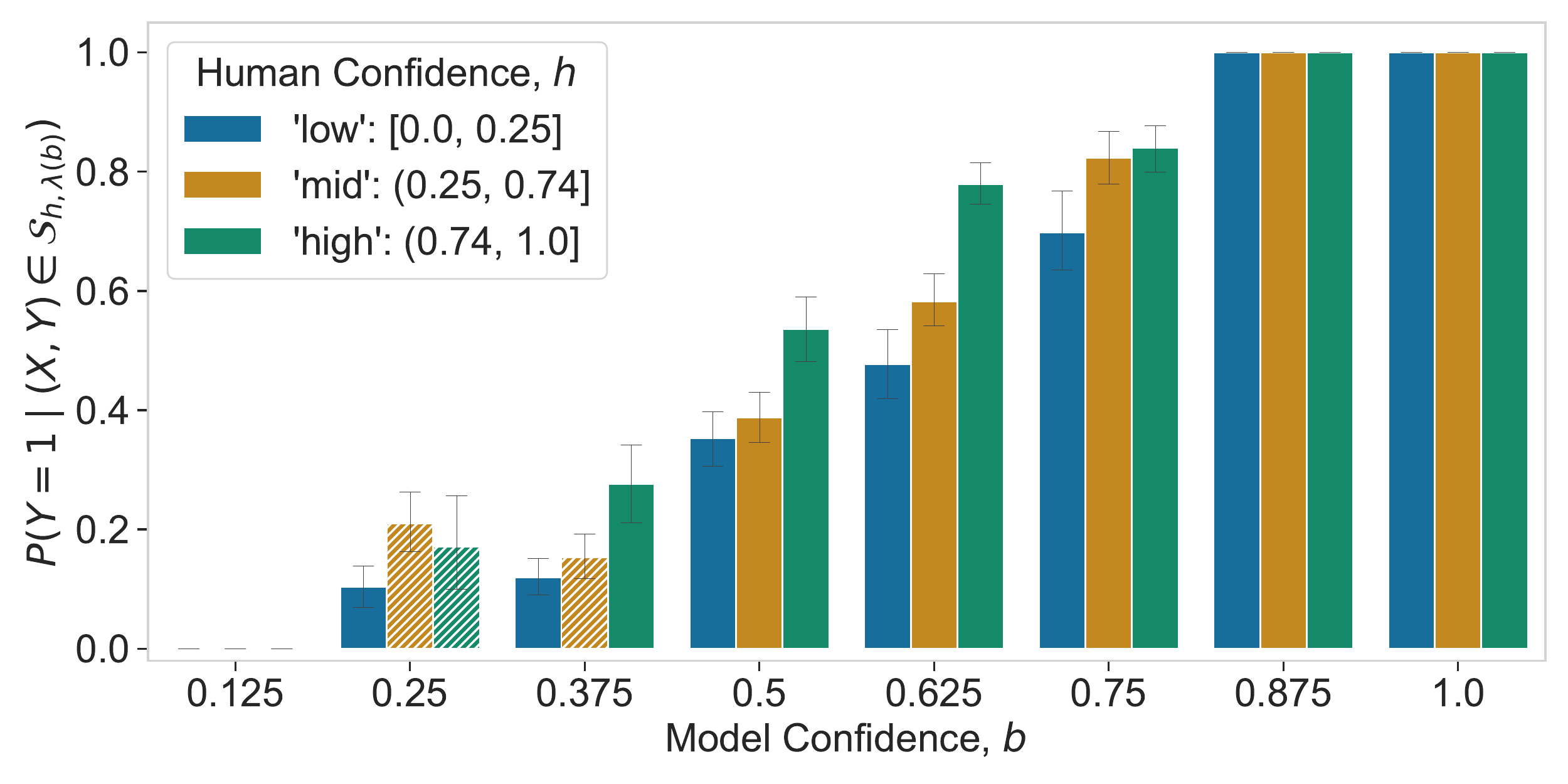}}
\hspace{1em}
\subfloat[Sarcasm]{\includegraphics[width=0.48\textwidth]{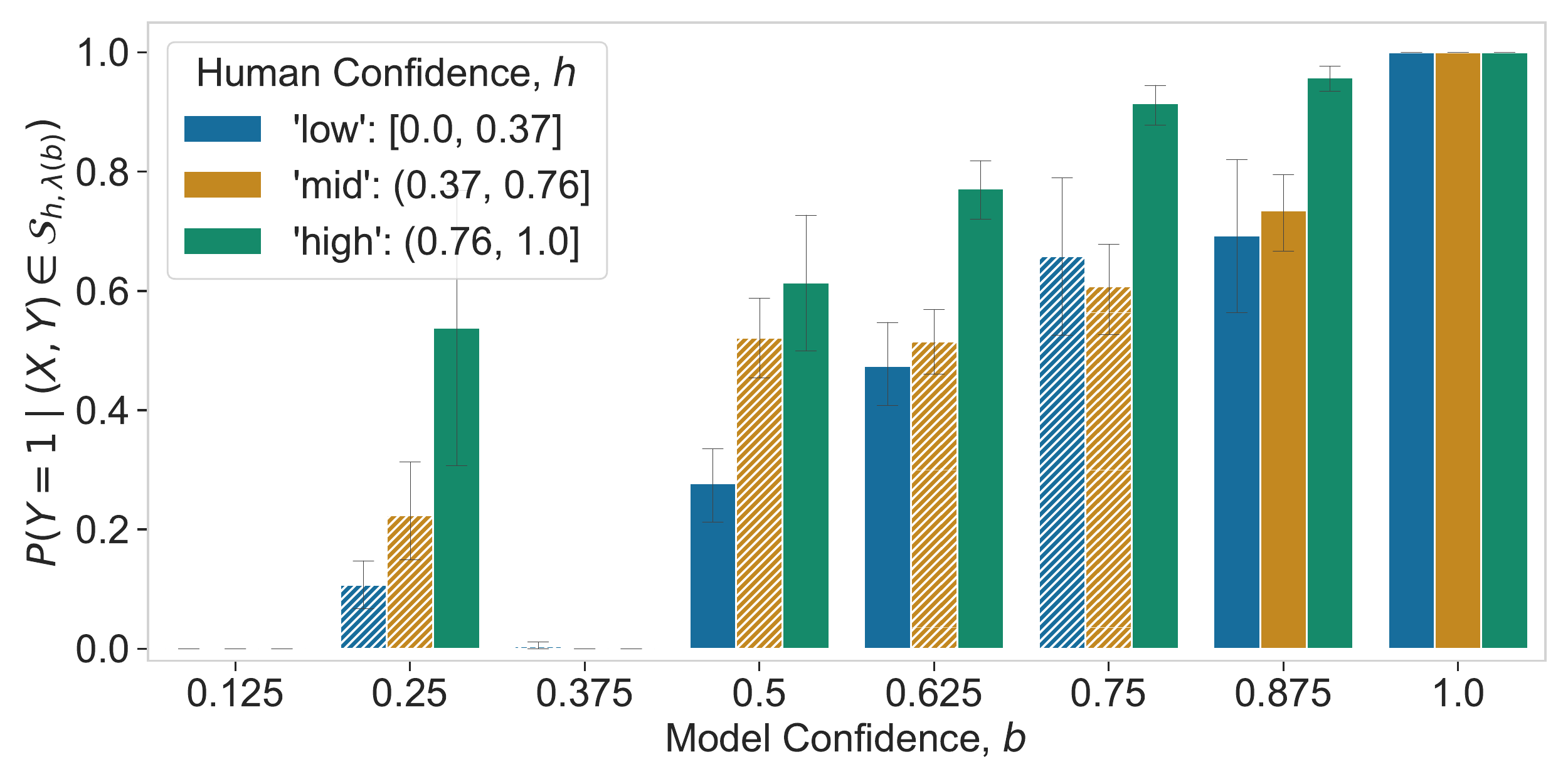}} 
\\ 
    \subfloat[Cities]{\includegraphics[width=0.48\textwidth]{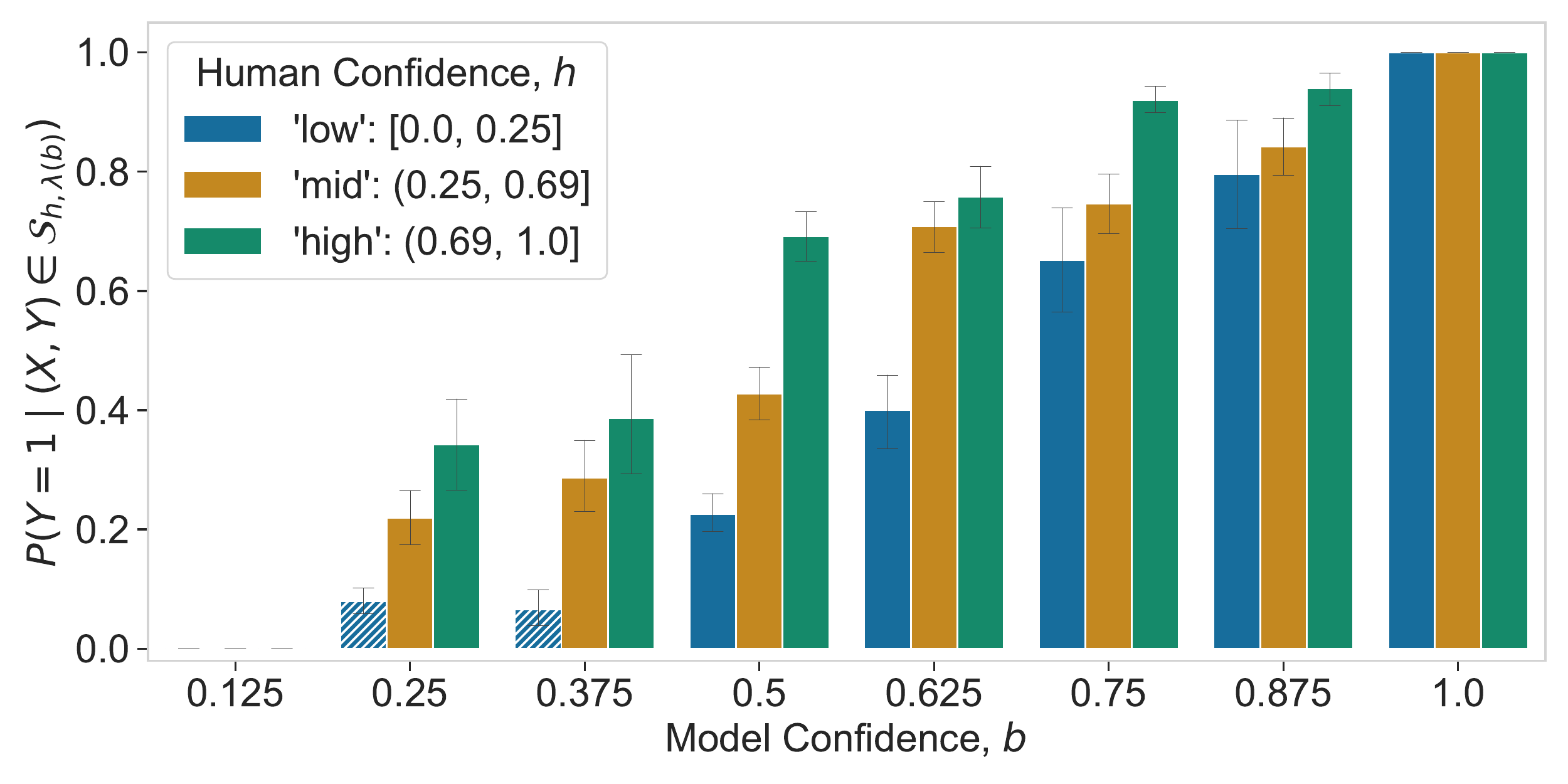}}
\hspace{1em}
    \subfloat[Census]{\includegraphics[width=0.48\textwidth]{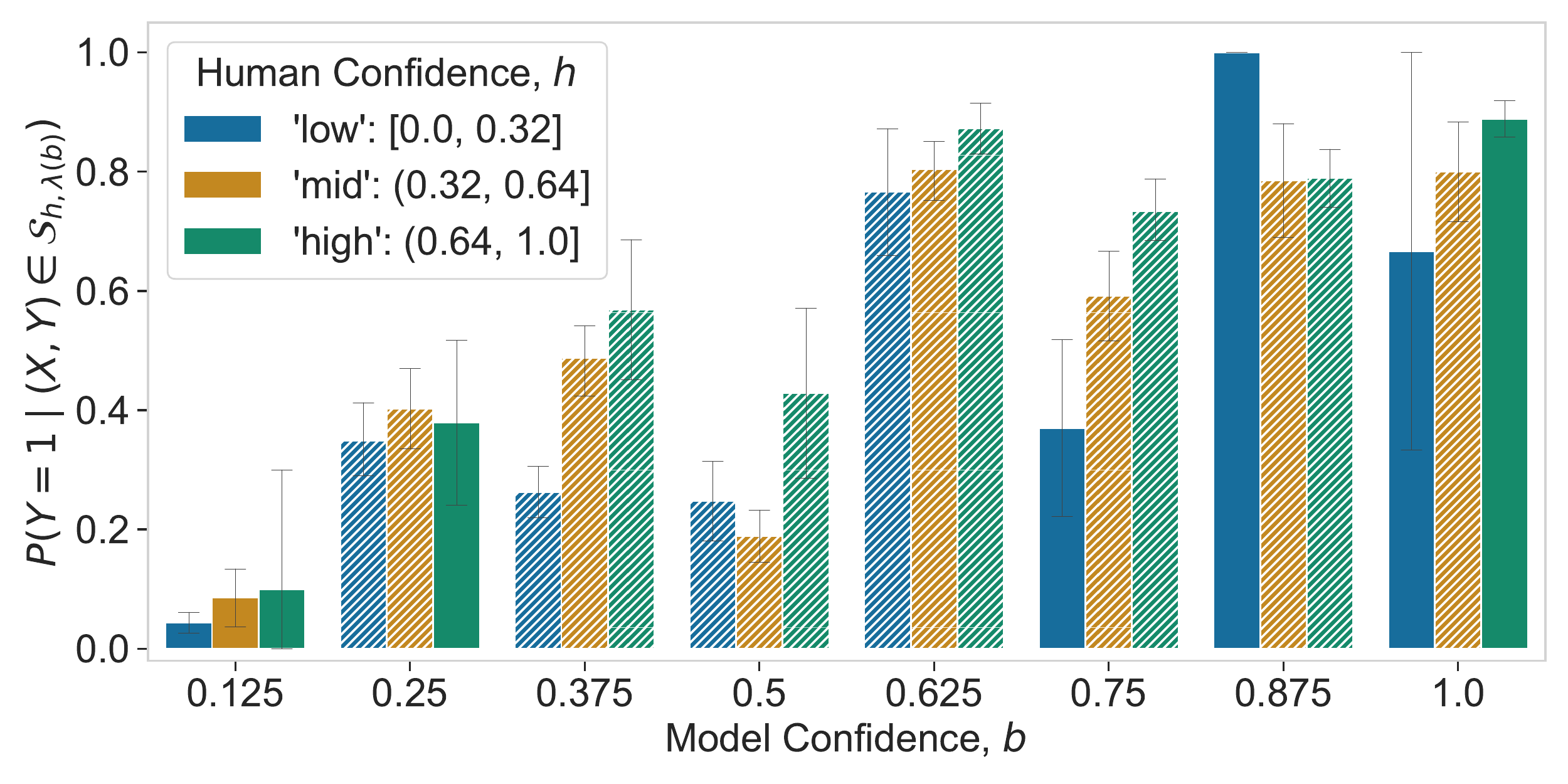}}
    \caption{
        Empirical estimate of the probabilities $P(Y = 1 \given (X,Y) \in \Scal_{h, \lambda(b)})$, 
        where $b \in \Lambda[0,1]$ and $h \in \{\text{low},\text{mid},\text{high}\}$ are the discretized confidence values for the classifiers and human participants, 
        respectively. 
        Error bars represent $90$\% confidence intervals and hatched bars mark alignment violations between confidence pairs $(h, b)$ 
        with $|\Scal_{h, \lambda(b)}| \geq 30$. 
        %
    }
    \vspace{-2mm}
    \label{fig:alignment-plot}
\end{figure*}

\xhdr{Data description}
We experiment with the publicly available Human-AI Interactions dataset~\cite{vodrahalli2022humans}.
The dataset comprises $34{,}783$ unique predictions from $1{,}088$ different human participants on four different 
binary prediction tasks (``Art'', ``Sarcasm'', ``Cities'' and ``Census'').
Overall, there are approximately $32$ different instances per task. 
In the ``Art'' task, participants need to determine the art period of a painting given two choices and, overall,
there are paintings from four art periods.
In the ``Sarcasm'' task, participants need to detect if sarcasm is present in text snippets from the Reddit sarcasm 
dataset~\cite{khodak2017large}.
In the ``Cities" task, participants need to determine which large US city is depicted in an image given two choices
and, overall, there are images of four different US cities.
Finally, in the ``Census'' task, participants need to determine if an individual earns more than $50$k a year 
based on certain demographic information in tabular form.
For ``Sarcasm'', $x$ is a representation of the text snippets and we set $y = 1$ if sarcasm is present,
for ``Art'' and ``Cities'', $x$ is a representation of the images and we set $y=1$ and $y=0$ at random for each 
different instance and,
for ``Census'', $x$ summarizes demographic information and we set $y = 1$ if an individual earns more than $50$k a year.
In each of the tasks, human participants provide confidence values about their predictions before ($h$) and after 
($h_{\text{+AI}}$) receiving AI advice from a classifier in form of the classifier's confidence values $b$.\footnote{Refer 
to Appendix~\ref{sec:app_experiments} for more details on the dataset.}
%
The original dataset contains predictions by participants from different, but overlapping, sets of countries across tasks, who were told the AI advice had
different values of accuracy.\footnote{Participants were also either told that the advice is from a ``Human'' or from an ``AI'' based on a random assignment 
of participants to a treatment or control group.
Since the actual advice received in both groups was identical for the same instance and the "perceived advice source" is randomized, we use data from both 
treatment and control groups in the experiments.}
In our experiments, to control for these confounding factors, we focus on participants from the US who were told the AI advice was $80$\% accurate,
resulting in $15{,}063$ unique predictions from $471$ different human participants.
\begin{table*}[t]
    \centering
    \small
    \setlength{\tabcolsep}{4pt}
    \caption{Misalignment, miscalibration and AUC.}\label{tab:metrics_table}
    \begin{tabular}{r | cc | cc | ccc}
       \toprule
        \multirow{2}{*}{\bf Task} & \multicolumn{2}{c|}{\bf Misalignment} & 
        \multicolumn{2}{c|}{\bf Miscalibration} & \multicolumn{3}{c}{\bf AUC} \\
	     &
        \bf EAE & \bf MAE & 
        \bf ECE & \bf MCE &\bfseries $\pi_B$ &\bfseries $\pi_H$ &\bfseries $\pi_{H_{\text{+AI}}}$ \\
      \midrule 
	    Art & $4.5 \cdot 10^{-4}$ & $0.058$ & $0.084$ & $0.186$ & $86.7$\%&  $72.7$\%& $82.0$\% \\
	    Sarcasm & $3.8 \cdot 10^{-3}$ & $0.224$ & $0.085$ & $0.310$ & $89.9$\%&  $82.5$\%& $86.5$\%\\
	    Cities   & $6.2 \cdot 10^{-5}$ & $0.013$ & $0.066$ & $0.158$ & $84.4$\%&  $79.0$\%& $84.7$\% \\
	    Census  & $9.0\cdot 10^{-3}$ & $0.298$ & $0.109$ & $0.270$ & $80.0$\%&  $77.3$\%& $79.9$\% \\
      \bottomrule 
    \end{tabular}
    \vspace{-3mm}
\end{table*}

\xhdr{Experimental setup and evaluation metrics}
%
%
For each of the tasks, we first measure
(i) the degree of misalignment between the classifiers' confidence values $b$ and the participants' confidence values $h$ before receiving 
AI advice $b$ and
(ii) the difference $(h_{\text{+AI}} - h)$ between the human participant'{}s confidence values before and after receiving AI advice $b$.
Then, we compare the utility achieved by a AI-assisted decision policy $\pi_{H_{\text{+AI}}}$ that predicts the value of $y$ by thresholding 
the humans' confidence values $h_{\text{+AI}}$ after observing the classifier's confidence values against two baselines: 
(i) a decision policy $\pi_B$ that predicts the value of $y$ by thresholding the classifier's confidence values $b$ and 
(ii) a decision policy $\pi_H$ that predicts the value of $y$ by thresholding the humans' confidence values $h$ before 
observing the classifier's confidence values. 
%
%

To measure the degree of misalignment, we discretize the confidence values $b$ and $h$ into bins.
%
%
For the classifiers' confidence $b$, we use $8$ uniform sized bins per task with (centered) values $\Lambda[0,1]$, where $\lambda = 1/8$.
For the human participants' confidence $h$ before receiving AI advice $b$, we use three bins per task ('low', 'mid' and 'high'), where we set the bin boundaries so that each bin contains approximately the same probability mass 
and set the bin values to the average confidence value within each bin. 
In what follows, we refer to the pairs of discretized confidence values $(h, b)$ as cells, where samples $(x, y) \in \Zcal$ whose confidence values 
lie in the cell $(h, b)$ define the group $\Scal_{h, \lambda(b)}$, 
and note that we choose a rather low number of bins for both $b$ and $h$ so that most cells have sufficient data samples 
%
%
to reliable estimate several misalignment metrics, which we describe next.
%
%

We use three different misalignment metrics: 
%
%
%
(i) the number of alignment violations between cell pairs, 
(ii) the expected alignment error (EAE) and
(iii) the maximum alignment error (MAE).
There is an alignment violation between cells pairs $(h, b)$ and $(h', b')$, with $h\leq h'$ and $b\leq b'$, if 
\begin{equation*}
P(Y=1|(X,Y)\in S_{h,\lambda(b)}) > P(Y=1|(X,Y)\in S_{h',\lambda(b')}).
\end{equation*}
Moreover, we have that:
%
\begin{align*}
\text{EAE} &= \frac{1}{N} \cdot \sum_{h \leq h', b \leq b'} \left[ P(Y = 1 \given (X,Y) \in \Scal_{h, \lambda(b)}) - P(Y = 1 \given (X,Y) \in \Scal_{h', \lambda(b')}) \right]_{+}, \\
\text{MAE} &= \max_{h \leq h', b \leq b'} P(Y = 1 \given (X,Y) \in \Scal_{h, \lambda(b)}) - P(Y = 1 \given (X,Y) \in \Scal_{h', \lambda(b')}),
\end{align*}
where $N=|\{h \leq h', b \leq b'\}|$.
Here, note that the number of alignment violations tells us how frequently is the left hand side of Eq.~\ref{eq:alignment} 
positive across cell pairs given $\tilde{S}_h=S_h$ and the EAE and MAE quantify the average and maximum value of the left 
hand side of Eq.~\ref{eq:alignment} across cells violating alignment.
%
%
%
To obtain reliable estimates of the above metrics, we only consider cells $(h, b)$ with $|\Scal_{h, \lambda(b)} | \geq 30$ samples. 
Moreover, we also report the expected calibration error (ECE) and maximum calibration error (MCE)~\cite{silva2021classifier, gupta2021distribution}, which 
are natural counterparts to EAE and MAE, respectively. 
%

%
%
%
As a measure of utility, we estimate the true positive rate (TPR) and false positive rate (FPR) of the decision policies $\pi_{B}$, $\pi_{H}$ and $\pi_{H_{\text{+AI}}}$ for all possible choices of threshold values, which we summarize using the area under the ROC curve (AUC) and, in Appendix~{\ref{sec:app_experiments}}, we also report ROC curves.

%
%
\begin{figure*}[t!]
    \centering
\subfloat[Art]{\includegraphics[width=0.48\textwidth]{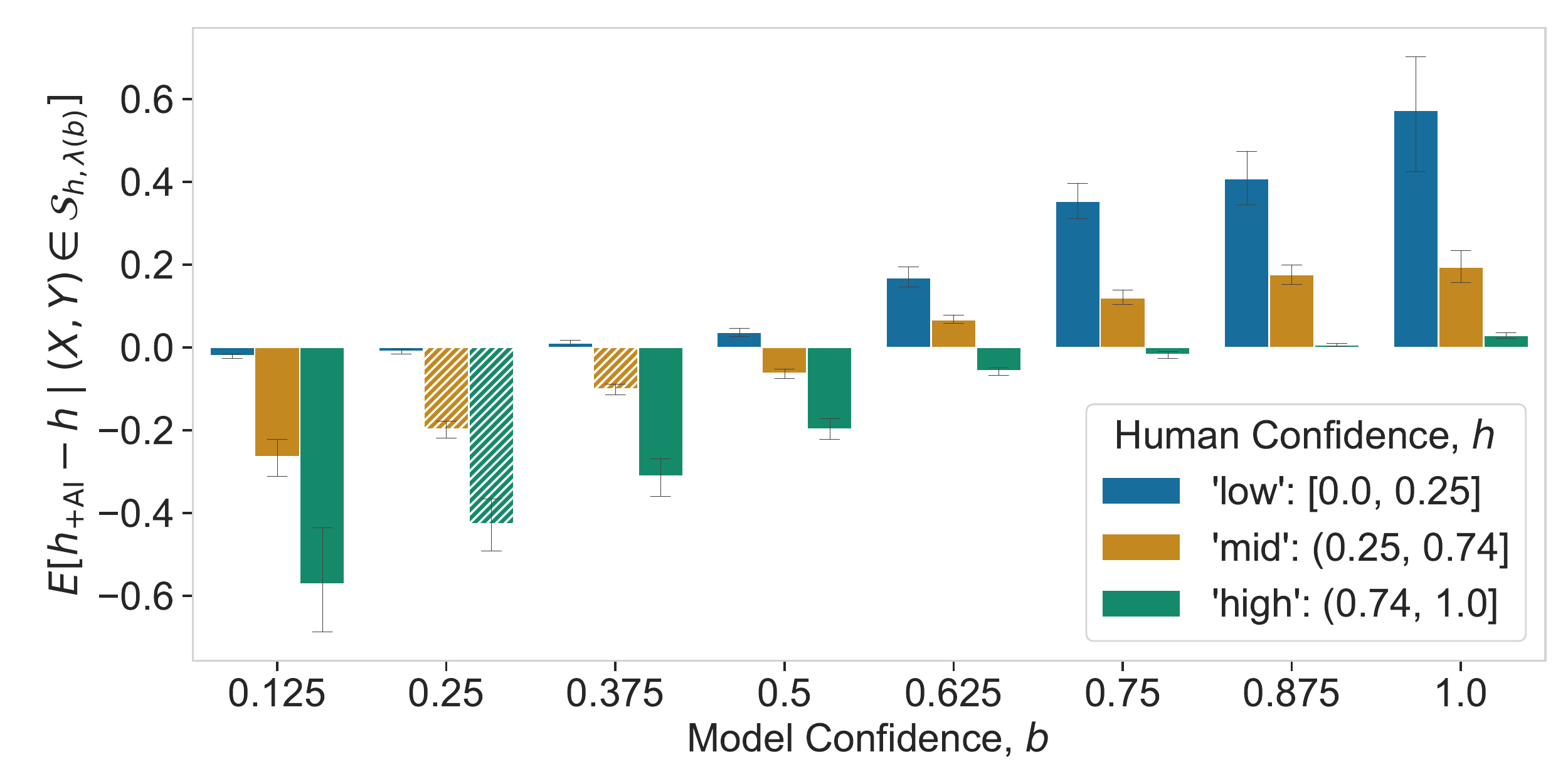}}
\hspace{1em}
\subfloat[Sarcasm]{\includegraphics[width=0.48\textwidth]{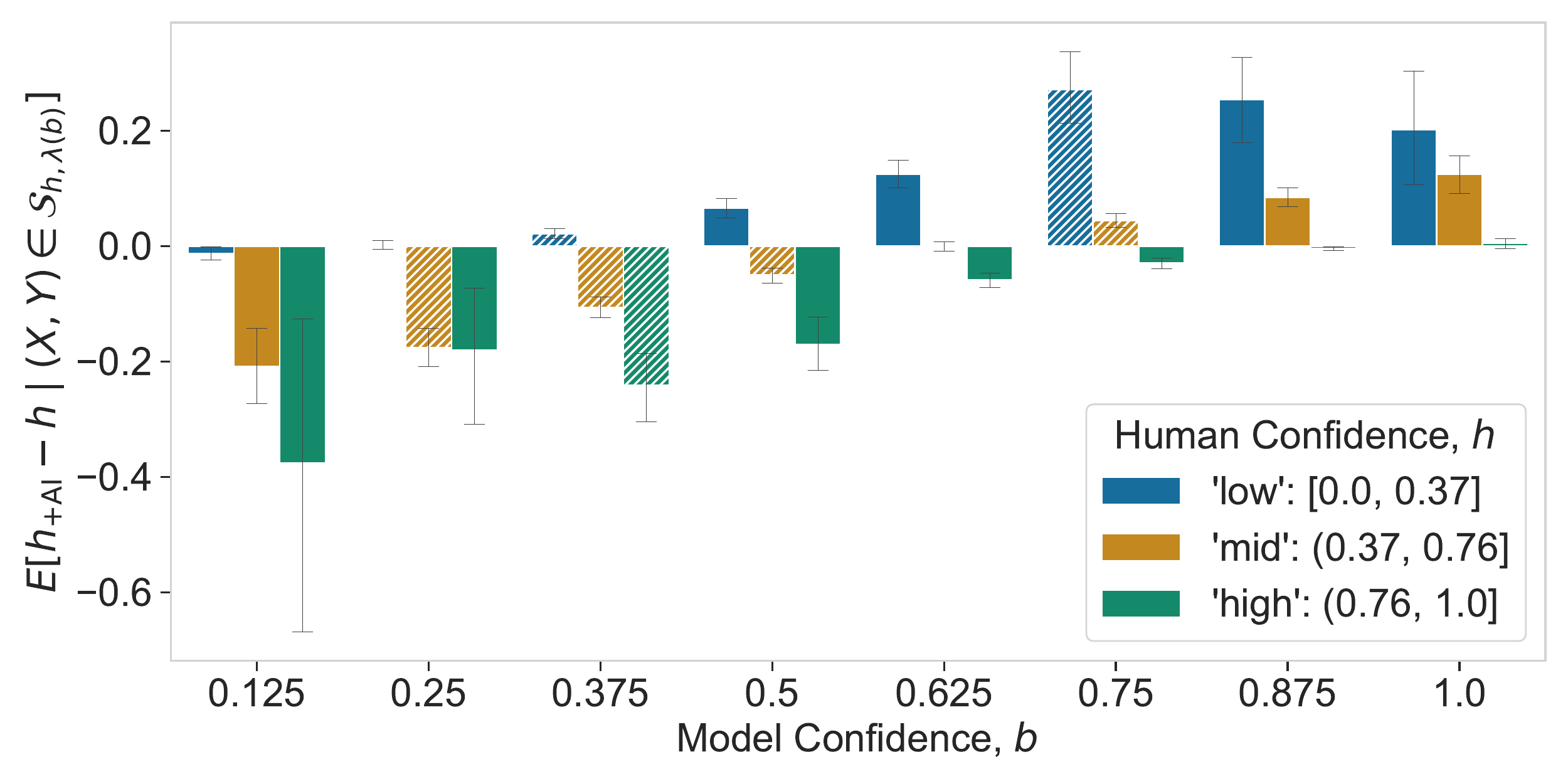}}
\\
%
    \subfloat[Cities]{\includegraphics[width=0.48\textwidth]{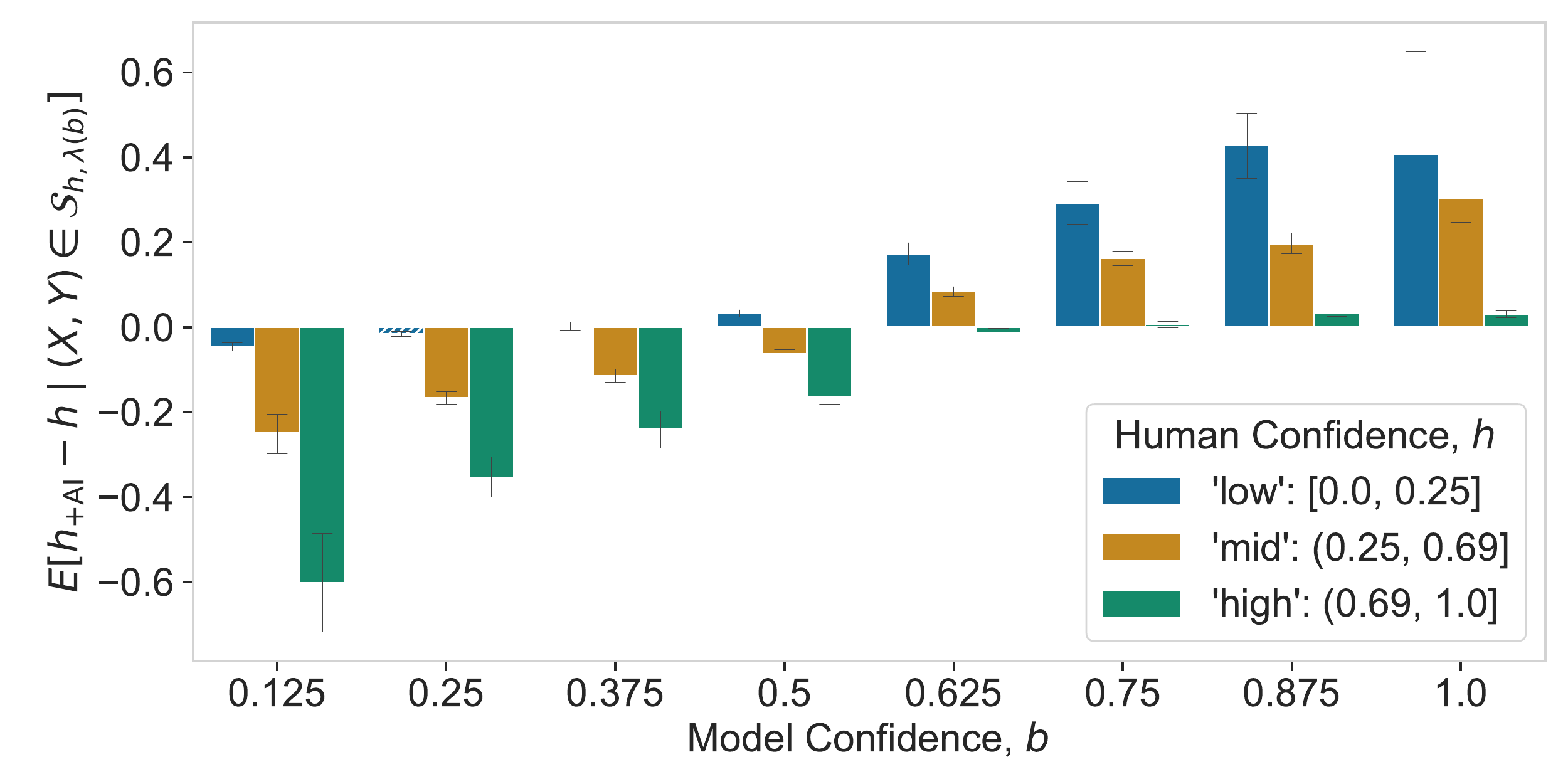}}
\hspace{1em}
    \subfloat[Census]{\includegraphics[width=0.48\textwidth]{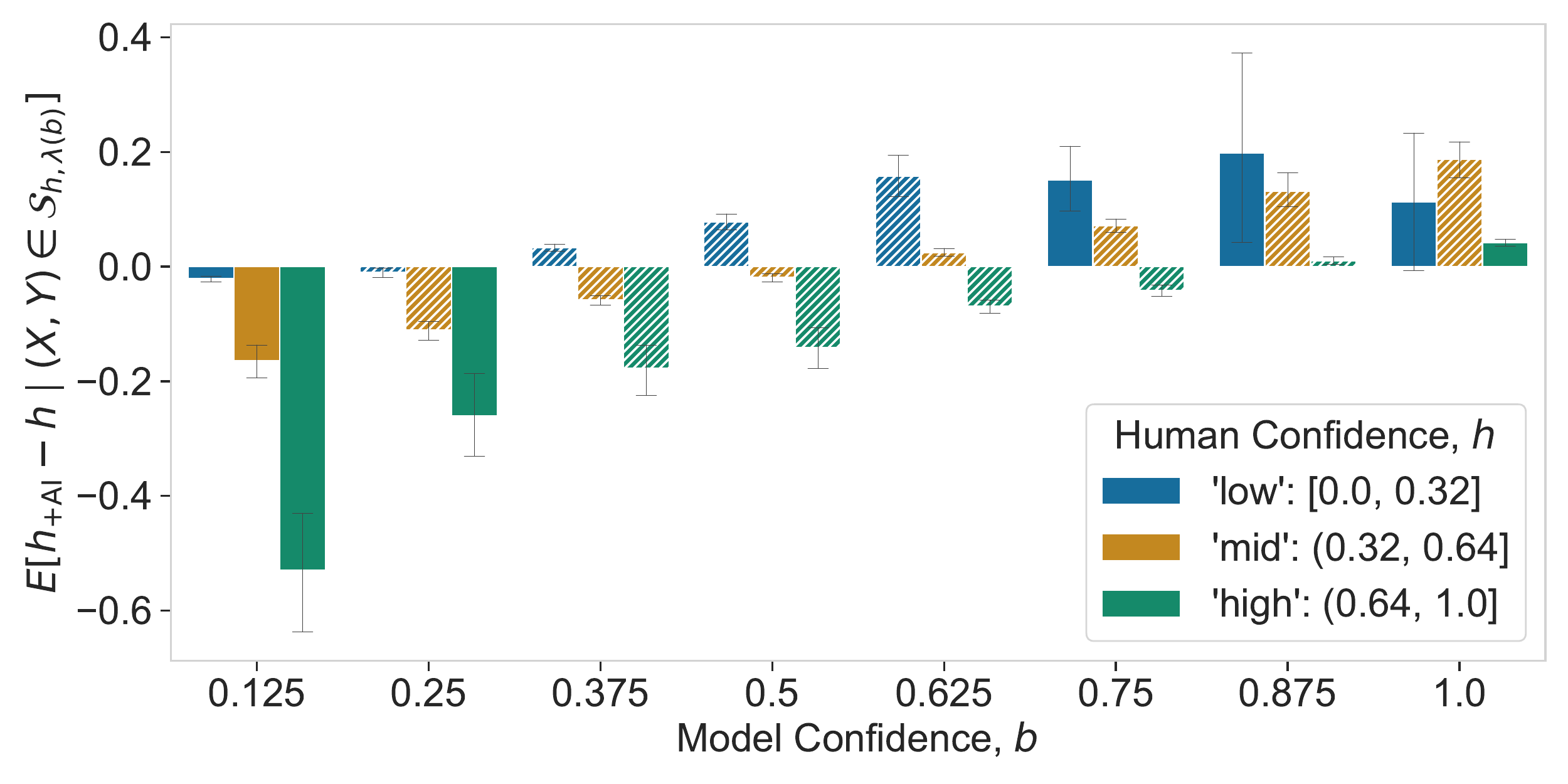}}
    \caption{
        Empirical estimate of the average difference $\EE[h_{\text{+AI}} - h \given (X, Y) \in \Scal_{h, \lambda(b)}]$, where 
        $b \in \Lambda[0,1]$ and $h \in \{\text{low},\text{mid},\text{high}\}$ are the discretized confidence values for the classifier and 
        human participants, respectively.
        Error bars represent $90$\% confidence intervals and hatched bars mark alignment violations between confidence pairs $(h, b)$ 
        with $|\Scal_{h, \lambda(b)}| \geq 30$.
    }
    \vspace{-4mm}
    \label{fig:monotonicity-plot}
\end{figure*}

\xhdr{Results}
We start by looking at the empirical estimates of the probabilities $P(Y = 1 \given (X,Y) \in \Scal_{h, \lambda(b)})$ and of our measures of misalignment (EAE, MAE) and miscalibration (ECE, MCE) in Figure~\ref{fig:alignment-plot} and Table~\ref{tab:metrics_table} (left and middle columns). 
%
The results show that, for ``Cities'', 
the probabilities $P(Y = 1 \given (X,Y) \in \Scal_{h, \lambda(b)})$ are (approximately) monotonically increasing with respect to the classifier'{}s confidence
values $b$.
More specifically, as shown in Figure~{\ref{fig:alignment-plot}}, there is only one alignment violation between cell pairs and, hence, our metrics of misalignment 
acquire also very low values.
%
In contrast, for ``Art'', ``Sarcasm'' and especially ``Census'', there is an increasing number of alignment violations and our misalignment metrics acquire higher 
values, up to several orders of magnitude higher for ``Census''.
%
%
%
These results also show that misalignment and miscalibration go hand in hand, however, in terms of miscalibration, ``Census'' does not stand up so strongly.

Next, we look at the difference $h_{\text{+AI}} - h$ between the human participant'{}s recorded confidence values before and after receiving AI advice $b$ across samples in each of the subsets 
$\Scal_{h, \lambda(b)}$ induced by the discretized confidence values used above.
%
%
Figure~{\ref{fig:monotonicity-plot}} summarizes the results, which reveal that the difference $h_{\text{+AI}} - h$ increases monotonically with respect to the 
classifier'{}s confidence $b$. 
%
%
This suggests that participants always expect $b$ to reflect the probability of a positive outcome irrespectively of their confidence value $h$ before receiving 
AI advice,
%
%
%
providing support for our hypothesis that (rational) decision makers implement monotone AI-assisted decisions policies. 
%
%
Further, this finding also implies that, for ``Art'', ``Sarcasm'' and ``Census'', any policy $\pi_{H_{\text{+AI}}}$ that predicts the value of the label $y$ by thresholding 
the confidence value $h_{\text{+AI}}$ will be necessarily suboptimal because the probabilities $P(Y = 1 \given (X,Y) \in \Scal_{h, \lambda(b)})$ are not monotone increasing with $b$.

%
Finally, we look at the AUC achieved by decision policies $\pi_{B}$, $\pi_{H}$ and $\pi_{H_{\text{+AI}}}$. 
Table~{\ref{tab:metrics_table}} (right columns) summarize the results, 
which shows that $\pi_{H_{\text{+AI}}}$ outperforms $\pi_{H}$ consistently across all tasks but it only outperforms $\pi_{B}$ in 
a single task (``Cities'') out of four.
These findings provide empirical support for Theorem~\ref{th:suboptimal_monoAP}, which predicts that, in the presence of
human-alignment violations as those observed in ``Art'', ``Sarcasm'' and ``Census'', 
any monotone AI-assisted decision policy will be suboptimal,
and they also provide support for Theorem~\ref{th:optimal_monoAP}, which predicts that, under human-alignment, there exist 
near-optimal AI-assisted decision policies satisfying monotonicity.

\vspace{-2mm}
\section{Discussion and Limitations}
\label{sec:limitations}
\vspace{-2mm}
In this section, we discuss the intended scope of our work and identify several limitations of our theoretical and experimental results,
which may serve as starting points for future work.

\xhdr{Decision making setting}
We have focused on decision making settings where both decisions and outcomes are binary.
However, we think that it may be feasible to extend our theoretical analysis to settings with multi-categorical (or real-valued) 
outcome variables and decisions. 
%
%
One of the main challenges would be to identify which natural conditions utility functions may satisfy in such settings. 
Further, we also think that it would be significantly more challenging to extend our theoretical analysis to sequential 
settings---multicalibration in sequential settings is an open area of research---but our ideas may still be a useful starting 
point.
In addition, our theoretical analysis assumes that the decision makers aim to maximize the average utility of their decisions.
However, whenever human decisions are consequential to individuals, the decision maker may have fairness desiderata. 
%
%

\xhdr{Confidence values}
In our causal model of AI-assisted decision making, we allow the classifier'{}s confidence values to depend on the decision 
maker'{}s confidence values because this is necessary to achieve human-alignment via multicalibration as described in 
Section~\ref{sec:achieving-alignment}.
However, we would like to clarify that both Theorems~\ref{th:suboptimal_monoAP} and~\ref{th:optimal_monoAP} still 
hold if the classifier'{}s confidence values do not depend on the decision maker's confidence, as it is typically the status quo 
today.
Looking into the future, our work questions this status quo by showing that, by allowing the classifier'{}s confidence values
to depend on the decision maker's confidence values, a decision maker may end up taking decisions with higher utility.
Moreover, we would also like to clarify that, while the motivation behind our work is AI-assisted human decision making, our 
theoretical results do not depend on who---be it a classifier or another human---gives advice. 
As long as the advice comes in the form of confidence values, our results are valid. 
%
%
%
%
%
Finally, while we have shown that human-alignment can be achieved via multicalibration, we hypothesize that algorithms specifically 
designed to achieve human-alignment may have lower data and computational requirements than multicalibration algorithms. 
%
%

\xhdr{Experimental results}
Our experimental results demonstrate that, \emph{across tasks}, the average utility achieved by decision makers is relatively higher 
if the classifier they use satisfies human-alignment.
However, they do not empirically demonstrate that, \emph{for a fixed task}, there is an improvement in average utility
achieved by decision makers if the classifier they use satisfies human-alignment.
The reason why we could not demonstrate the latter is because, in our experiments, we used an observational dataset gathered by others~\cite{vodrahalli2022humans}.
Looking into the future, it would be very important to run a human subject study to empirically demonstrate the latter and, for now, 
treat our conclusions with caution.
%
%


\vspace{-2mm}
\section{Conclusions}
\label{sec:conclusions}
\vspace{-2mm}
We have introduced a theoretical framework to investigate what properties confidence values should
have to help decision makers take better decisions.
We have shown that there exists data distribution for which a rational decision maker using calibrated confidence 
values will always take suboptimal decisions.
However, we have further shown that, if the confidence values satisfy a natural alignment property,
which can be achieved via multicalibration, then a rational decision maker using these confidence 
values can take optimal decisions.
Finally, we have illustrated our theoretical results using real human predictions on four AI-assisted decision 
making tasks.
%
%

\vspace{2mm}
\xhdr{Acknowledgements} We would like to thank Nastaran Okati for fruitful discussions at an early stage of the project. Gomez-Rodriguez acknowledges support from the European Research Council (ERC) under the European Union'{}s Horizon 2020 research and innovation programme (grant agreement No. 945719).

\bibliographystyle{unsrt}
\bibliography{human-aligned-calibration}

\begin{thebibliography}{10}

\bibitem{jiao2020deep}
Wei Jiao, Gurnit Atwal, Paz Polak, Rosa Karlic, Edwin Cuppen, Alexandra Danyi,
  Jeroen de~Ridder, Carla van Herpen, Martijn~P Lolkema, Neeltje Steeghs,
  et~al.
\newblock A deep learning system accurately classifies primary and metastatic
  cancers using passenger mutation patterns.
\newblock {\em Nature communications}, 11(1):1--12, 2020.

\bibitem{whitehill2017mooc}
Jacob Whitehill, Kiran Mohan, Daniel Seaton, Yigal Rosen, and Dustin Tingley.
\newblock Mooc dropout prediction: How to measure accuracy?
\newblock In {\em Proceedings of the fourth (2017) acm conference on learning@
  scale}, pages 161--164, 2017.

\bibitem{dressel2018accuracy}
Julia Dressel and Hany Farid.
\newblock The accuracy, fairness, and limits of predicting recidivism.
\newblock {\em Science advances}, 4(1):eaao5580, 2018.

\bibitem{corbett2017algorithmic}
Sam Corbett-Davies, Emma Pierson, Avi Feller, Sharad Goel, and Aziz Huq.
\newblock Algorithmic decision making and the cost of fairness.
\newblock In {\em Proceedings of the 23rd acm sigkdd international conference
  on knowledge discovery and data mining}, pages 797--806, 2017.

\bibitem{kilbertus2020fair}
Niki Kilbertus, Manuel~Gomez Rodriguez, Bernhard Sch{\"o}lkopf, Krikamol
  Muandet, and Isabel Valera.
\newblock Fair decisions despite imperfect predictions.
\newblock In {\em International Conference on Artificial Intelligence and
  Statistics}, pages 277--287. PMLR, 2020.

\bibitem{valera2018enhancing}
Isabel Valera, Adish Singla, and Manuel Gomez-Rodriguez.
\newblock Enhancing the accuracy and fairness of human decision making.
\newblock In {\em Advances in Neural Information Processing Systems}, 2018.

\bibitem{rothblum2023decision}
Guy~N Rothblum and Gal Yona.
\newblock Decision-making under miscalibration.
\newblock In {\em Innovations in Theoretical Computer Science}, 2023.

\bibitem{guo2017calibration}
Chuan Guo, Geoff Pleiss, Yu~Sun, and Kilian~Q Weinberger.
\newblock On calibration of modern neural networks.
\newblock In {\em International conference on machine learning}, 2017.

\bibitem{zadrozny2001obtaining}
Bianca Zadrozny and Charles Elkan.
\newblock Obtaining calibrated probability estimates from decision trees and
  naive bayesian classifiers.
\newblock In {\em Proceedings of the 18th International Conference on Machine
  Learning}, 2001.

\bibitem{Gneiting2007}
Tilmann Gneiting, Fadoua Balabdaoui, and Adrian Raftery.
\newblock Probabilistic forecasts, calibration and sharpness.
\newblock {\em Journal of the Royal Statistical Society: Series B (Statistical
  Methodology)}, 2007.

\bibitem{hebert2018multicalibration}
Ursula H{\'e}bert-Johnson, Michael Kim, Omer Reingold, and Guy Rothblum.
\newblock Multicalibration: Calibration for the (computationally-identifiable)
  masses.
\newblock In {\em Proceedings of the 35th International Conference on Machine
  Learning}, 2018.

\bibitem{gupta2021distribution}
Chirag Gupta and Aaditya~K. Ramdas.
\newblock Distribution-free calibration guarantees for histogram binning
  without sample splitting.
\newblock In {\em Proceedings of the 38th International Conference on Machine
  Learning}, 2021.

\bibitem{sahoo2021reliable}
Roshni Sahoo, Shengjia Zhao, Alyssa Chen, and Stefano Ermon.
\newblock Reliable decisions with threshold calibration.
\newblock {\em Advances in Neural Information Processing Systems},
  34:1831--1844, 2021.

\bibitem{zhao2021calibrating}
Shengjia Zhao, Michael~P Kim, Roshni Sahoo, Tengyu Ma, and Stefano Ermon.
\newblock Calibrating predictions to decisions: A novel approach to multi-class
  calibration.
\newblock In {\em Advances in Neural Information Processing Systems}, 2021.

\bibitem{huang2020tutorial}
Yingxiang Huang, Wentao Li, Fima Macheret, Rodney~A Gabriel, and Lucila
  Ohno-Machado.
\newblock A tutorial on calibration measurements and calibration models for
  clinical prediction models.
\newblock {\em Journal of the American Medical Informatics Association},
  27(4):621--633, 2020.

\bibitem{wang2022improving}
Lequn Wang, Thorsten Joachims, and Manuel Gomez-Rodriguez.
\newblock Improving screening processes via calibrated subset selection.
\newblock In {\em Proceedings of the 39th International Conference on Machine
  Learning}, 2023.

\bibitem{vodrahalli2022uncalibrated}
Kailas Vodrahalli, Tobias Gerstenberg, and James Zou.
\newblock Uncalibrated models can improve human-ai collaboration.
\newblock In {\em Advances in Neural Information Processing Systems}, 2022.

\bibitem{yona2022useful}
Gal Yona, Amir Feder, and Itay Laish.
\newblock Useful confidence measures: Beyond the max score.
\newblock {\em arXiv preprint arXiv:2210.14070}, 2022.

\bibitem{straitouri2023improving}
Eleni Straitouri, Lequn Wang, Nastaran Okati, and Manuel Gomez-Rodriguez.
\newblock Improving expert predictions with conformal prediction.
\newblock In {\em Proceedings of the 40th International Conference on Machine
  Learning}, 2023.

\bibitem{pearl2009causality}
Judea Pearl.
\newblock {\em Causality}.
\newblock Cambridge university press, 2009.

\bibitem{lai2023towards}
Vivian Lai, Chacha Chen, Q~Vera Liao, Alison Smith-Renner, and Chenhao Tan.
\newblock Towards a science of human-ai decision making: a survey of empirical
  studies.
\newblock In {\em Proceedings of the ACM Conference on Fairness,
  Accountability, and Transparency}, 2023.

\bibitem{papenmeier2019model}
Andrea Papenmeier, Gwenn Englebienne, and Christin Seifert.
\newblock How model accuracy and explanation fidelity influence user trust.
\newblock {\em arXiv preprint arXiv:1907.12652}, 2019.

\bibitem{wang2021explanations}
Xinru Wang and Ming Yin.
\newblock Are explanations helpful? a comparative study of the effects of
  explanations in ai-assisted decision-making.
\newblock In {\em 26th International Conference on Intelligent User
  Interfaces}, pages 318--328, 2021.

\bibitem{yin2019understanding}
Ming Yin, Jennifer Wortman~Vaughan, and Hanna Wallach.
\newblock Understanding the effect of accuracy on trust in machine learning
  models.
\newblock In {\em Proceedings of the 2019 chi conference on human factors in
  computing systems}, pages 1--12, 2019.

\bibitem{nourani2020role}
Mahsan Nourani, Joanie~T. King, and Eric~D. Ragan.
\newblock The role of domain expertise in user trust and the impact of first
  impressions with intelligent systems.
\newblock {\em ArXiv}, abs/2008.09100, 2020.

\bibitem{zhang2020effect}
Yunfeng Zhang, Q~Vera Liao, and Rachel~KE Bellamy.
\newblock Effect of confidence and explanation on accuracy and trust
  calibration in ai-assisted decision making.
\newblock In {\em Proceedings of the 2020 Conference on Fairness,
  Accountability, and Transparency}, pages 295--305, 2020.

\bibitem{chen2023machine}
Chacha Chen, Shi Feng, Amit Sharma, and Chenhao Tan.
\newblock Machine explanations and human understanding.
\newblock {\em Transactions of Machine Learning Research}, 2023.

\bibitem{christiano2017deep}
Paul~F Christiano, Jan Leike, Tom Brown, Miljan Martic, Shane Legg, and Dario
  Amodei.
\newblock Deep reinforcement learning from human preferences.
\newblock {\em Advances in neural information processing systems}, 2017.

\bibitem{ouyang2022training}
Long Ouyang, Jeffrey Wu, Xu~Jiang, Diogo Almeida, Carroll Wainwright, Pamela
  Mishkin, Chong Zhang, Sandhini Agarwal, Katarina Slama, Alex Ray, et~al.
\newblock Training language models to follow instructions with human feedback.
\newblock {\em Advances in Neural Information Processing Systems}, 2022.

\bibitem{zhu2023principled}
Banghua Zhu, Jiantao Jiao, and Michael Jordan.
\newblock Principled reinforcement learning with human feedback from pairwise
  or $k$-wise comparisons.
\newblock {\em arXiv preprint arXiv:2301.11270}, 2023.

\bibitem{scholkopf2012causal}
Bernhard Sch{\"o}lkopf, Dominik Janzing, Jonas Peters, Eleni Sgouritsa, Kun
  Zhang, and Joris Mooij.
\newblock On causal and anticausal learning.
\newblock In {\em Proceedings of the 29th International Coference on
  International Conference on Machine Learning}, 2012.

\bibitem{lisi2021discrete}
Matteo Lisi, Gianluigi Mongillo, Georgia Milne, Tessa Dekker, and Andrei Gorea.
\newblock Discrete confidence levels revealed by sequential decisions.
\newblock {\em Nature Human Behaviour}, 5(2):273--280, 2021.

\bibitem{zhang2015human}
Hang Zhang, Nathaniel~D Daw, and Laurence~T Maloney.
\newblock Human representation of visuo-motor uncertainty as mixtures of
  orthogonal basis distributions.
\newblock {\em Nature neuroscience}, 18(8):1152--1158, 2015.

\bibitem{bhatt2021uncertainty}
Umang Bhatt, Javier Antor{\'a}n, Yunfeng Zhang, Q~Vera Liao, Prasanna
  Sattigeri, Riccardo Fogliato, Gabrielle Melan{\c{c}}on, Ranganath Krishnan,
  Jason Stanley, Omesh Tickoo, et~al.
\newblock Uncertainty as a form of transparency: Measuring, communicating, and
  using uncertainty.
\newblock In {\em Proceedings of the 2021 AAAI/ACM Conference on AI, Ethics,
  and Society}, pages 401--413, 2021.

\bibitem{vodrahalli2022humans}
Kailas Vodrahalli, Roxana Daneshjou, Tobias Gerstenberg, and James Zou.
\newblock Do humans trust advice more if it comes from ai? an analysis of
  human-ai interactions.
\newblock In {\em Proceedings of the 2022 AAAI/ACM Conference on AI, Ethics,
  and Society}, pages 763--777, 2022.

\bibitem{khodak2017large}
Mikhail Khodak, Nikunj Saunshi, and Kiran Vodrahalli.
\newblock A large self-annotated corpus for sarcasm.
\newblock {\em arXiv preprint arXiv:1704.05579}, 2017.

\bibitem{silva2021classifier}
Telmo Silva~Filho, Hao Song, Miquel Perello-Nieto, Raul Santos-Rodriguez,
  Meelis Kull, and Peter Flach.
\newblock Classifier calibration: How to assess and improve predicted class
  probabilities: a survey.
\newblock {\em arXiv e-prints}, pages arXiv--2112, 2021.

\end{thebibliography}

\newpage
\appendix
\onecolumn
\section{Proofs}
\label{apx:proofs}

%

\subsection{Additional Lemmas} 

%
%
%
\begin{lemma}[Monotonicity]\label{lem:monotonicity_u}
    If a utility function $u$ satisfies Eq.~\ref{eq:utility_prop_new},
    then $u$ is monotone with respect to the probability that $Y=1$, \ie, for any $P, P' \in \Pcal(\{0, 1\})$ such that $P(Y=1) \leq P'(Y=1)$, 
    it holds that
        $\EE_{Y\sim P}[u(1,Y)] \leq \EE_{Y\sim P'}[u(1,Y)]$.
\end{lemma}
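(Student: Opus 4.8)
The plan is to exploit the fact that, since $Y$ is binary, the quantity $\EE_{Y\sim P}[u(1,Y)]$ is an affine function of the single scalar $p := P(Y=1)$. Writing out the expectation over the two-point distribution gives
\begin{equation*}
    \EE_{Y\sim P}[u(1,Y)] = p\cdot u(1,1) + (1-p)\cdot u(1,0),
\end{equation*}
and likewise $\EE_{Y\sim P'}[u(1,Y)] = p'\cdot u(1,1) + (1-p')\cdot u(1,0)$ with $p' := P'(Y=1)$. The whole argument then reduces to comparing these two linear expressions.

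First I would form the difference of the two expectations and collect terms, which telescopes neatly:
\begin{equation*}
    \EE_{Y\sim P'}[u(1,Y)] - \EE_{Y\sim P}[u(1,Y)] = (p'-p)\cdot u(1,1) - (p'-p)\cdot u(1,0) = (p'-p)\,\bigl(u(1,1)-u(1,0)\bigr).
\end{equation*}
Next I would observe that the hypothesis $P(Y=1)\leq P'(Y=1)$ gives $p'-p\geq 0$, while the first inequality in condition~\ref{eq:utility_prop_new}, namely $u(1,1) > u(1,0)$, gives $u(1,1)-u(1,0) > 0$. The product of a nonnegative and a positive factor is nonnegative, so the difference is $\geq 0$, which is exactly the claimed inequality $\EE_{Y\sim P}[u(1,Y)] \leq \EE_{Y\sim P'}[u(1,Y)]$.

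There is essentially no obstacle here: the only part of condition~\ref{eq:utility_prop_new} that is actually needed is $u(1,1) > u(1,0)$, and the binary structure of $Y$ makes the expectation affine in $p$ with slope exactly $u(1,1)-u(1,0)$. The result is therefore a one-line monotonicity-of-an-affine-function argument, and the main thing to be careful about is simply fixing the decision at $t=1$ throughout so that the relevant utility gap is $u(1,1)-u(1,0)$ rather than one involving $t=0$.
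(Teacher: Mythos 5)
Your proof is correct and follows essentially the same approach as the paper: both expand the expectation over the binary distribution as an affine function of $P(Y=1)$ and invoke $u(1,1) > u(1,0)$ together with $P(Y=1) \leq P'(Y=1)$. The only cosmetic difference is that you factor the difference of the two expectations as $(p'-p)\bigl(u(1,1)-u(1,0)\bigr)$, whereas the paper writes the same fact as a single chain of inequalities.
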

\begin{proof}
   We readily have that 
   \begin{align*}
    \EE_{Y\sim P}[u(1,Y)] &= P(Y=1) \cdot u(1,1) + (1- P(Y=1)) \cdot u(1,0)\\
    & \leq P'(Y=1) \cdot u(1,1) + (1- P'(Y=1)) \cdot u(1,0) \\
    &= \EE_{Y\sim P'}[u(1,Y)],
    \end{align*}
    where, in the above inequality, we use that $u(1,1)>u(1,0)$ and $P(Y=1) \leq P'(Y=1)$.
\end{proof}
\begin{lemma}[Trivial policies are not always optimal]\label{lem:trivialpolicies}
    If a utility function $u$ satisfies Eq.~\ref{eq:utility_prop_new},
    then there exist $P, P' \in \Pcal(\{0, 1\})$ such that the trivial policies $\pi$ that either always decide $T = 1$ or always 
    decide $T = 0$ are suboptimal.
    In particular, for any $P, P' \in \Pcal(\{0, 1\})$ such that $P(Y=1) < c $ and $P'(Y=1) > c$, where
    \begin{equation}
        c = \frac{u(0,0)-u(1,0)}{u(1,1)-u(1,0)+u(0,0)-u(0,1)} \in (0,1), \label{eq:c_def}
    \end{equation}
    it holds that 
    \begin{equation}
        \EE_{Y\sim P}[u(1,Y)] < \EE_{Y\sim P}[u(0,Y)] \quad \text{and} \quad
        \EE_{Y\sim P'}[u(1,Y)] > \EE_{Y\sim P'}[u(0,Y)].
    \end{equation}
\end{lemma}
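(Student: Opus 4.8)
The plan is to reduce the whole statement to a single linear comparison in the parameter $p := P(Y=1)$. First I would write out the expected utilities of the two trivial actions under a Bernoulli distribution with parameter $p$, namely $\EE_{Y\sim P}[u(1,Y)] = p\, u(1,1) + (1-p)\, u(1,0)$ and $\EE_{Y\sim P}[u(0,Y)] = p\, u(0,1) + (1-p)\, u(0,0)$. Subtracting one from the other and rearranging, deciding $T=1$ is strictly preferred to deciding $T=0$ exactly when $p\,(u(1,1)-u(0,1)) > (1-p)\,(u(0,0)-u(1,0))$.

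Next I would introduce the shorthand $A := u(1,1)-u(0,1)$ and $B := u(0,0)-u(1,0)$, so that this preference inequality becomes $pA > (1-p)B$, equivalently $p(A+B) > B$, equivalently $p > B/(A+B)$. The key observation is that this threshold is precisely $c$ as defined in Eq.~\ref{eq:c_def}: the numerator is $B = u(0,0)-u(1,0)$, and the denominator $A+B = u(1,1)-u(0,1)+u(0,0)-u(1,0)$ equals the stated denominator $u(1,1)-u(1,0)+u(0,0)-u(0,1)$ after reordering of terms.

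I would then verify $c\in(0,1)$: the conditions $u(1,1)>u(0,1)$ and $u(0,0)>u(1,0)$ from Eq.~\ref{eq:utility_prop_new} give $A>0$ and $B>0$ strictly, hence $A+B>0$, so $c = B/(A+B) > 0$, and $c<1$ because $B < A+B$. The two claimed inequalities then follow immediately from the comparison derived above: for any $P$ with $P(Y=1)<c$ the preference inequality fails, yielding $\EE_{Y\sim P}[u(1,Y)] < \EE_{Y\sim P}[u(0,Y)]$, so the policy that always decides $T=1$ is suboptimal; and for any $P'$ with $P'(Y=1)>c$ it holds, yielding $\EE_{Y\sim P'}[u(1,Y)] > \EE_{Y\sim P'}[u(0,Y)]$, so the policy that always decides $T=0$ is suboptimal. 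Since $c\in(0,1)$, distributions $P$ and $P'$ with $P(Y=1)<c<P'(Y=1)$ exist, which also settles the existence claim in the first sentence.

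There is essentially no hard step here—the entire argument is one linear inequality in $p$. The only point requiring a little care is confirming that the threshold arising from the decision comparison matches the specific algebraic form of $c$ in Eq.~\ref{eq:c_def}, and noting that it is the \emph{strict} inequalities $u(1,1)>u(0,1)$ and $u(0,0)>u(1,0)$ (rather than the non-strict $u(0,0)\geq u(0,1)$) that guarantee $c$ lands strictly inside $(0,1)$ rather than at an endpoint.
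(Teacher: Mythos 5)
Your proposal is correct and takes essentially the same approach as the paper: both reduce the claim to a single linear comparison of $\EE_{Y\sim P}[u(1,Y)]$ and $\EE_{Y\sim P}[u(0,Y)]$ in the parameter $p = P(Y=1)$, with $c$ as the indifference threshold. The only cosmetic difference is direction---the paper starts from $p < c$ and rearranges to the utility inequality, whereas you derive the threshold and match it to Eq.~\ref{eq:c_def}---and your explicit note that the strict inequalities $u(1,1)>u(0,1)$ and $u(0,0)>u(1,0)$ are what place $c$ strictly inside $(0,1)$ makes a point the paper leaves implicit.
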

\begin{proof}
    Let $P$ be any distribution such that
    \begin{equation*}
        P(Y=1) < c = \frac{u(0,0)-u(1,0)}{u(1,1)-u(1,0)+u(0,0)-u(0,1)},
    \end{equation*}
    where $c \in (0,1)$ because, by assumption, $u$ satisfies Eq.~\ref{eq:utility_prop_new}.
    Now, by rearranging the above inequality, we have that
    \begin{equation*}
        P(Y=1) \cdot u(1,1) + (1- P(Y=1)) \cdot u(1,0) < P(Y=1) \cdot u(0,1) + (1- P(Y=1)) \cdot u(0,0), 
    \end{equation*}
    and, using the definition of the expectation, it immediately follows that 
    \begin{equation*}
    \EE_{Y\sim P}[u(1,Y)] < \EE_{Y\sim P}[u(0,Y)].
    \end{equation*}
    The same argument can be used to show that, for any distribution $P'$ such that $P'(Y=1) > c$, it holds that 
    $\EE_{Y\sim P'}[u(1,Y)] > \EE_{Y\sim P'}[u(0,Y)]$. 
    Finally, note that, since $c \in (0,1)$, we know that such distributions $P$ and $P'$ exist.
    %
\end{proof}

\subsection{Proof of Theorem~\ref{th:suboptimal_monoAP}} \label{app:suboptimal_monoAP}

Before proving Theorem~\ref{th:suboptimal_monoAP}, we rewrite the expected utility with respect to the probability distribution $P^\Mcal$ 
in terms of confidence $H$ and $B$ by using the law of total expectation,
\begin{equation*}
    \EE_\pi[u(T,Y)] 
    = \EE_{H, B\sim P^\Mcal(H,B)} \left[\EE_{\pi}[u(T,Y) | H, B]\right].
\end{equation*}
Here, to simplify notation, we will write
\begin{equation*}
    \EE_{H,B} \left[\EE_{\pi}[u(T,Y) \given H, B]\right], 
\end{equation*}
where note that, using the law of total expectation, we can write the inner expectation in the above expression in terms of the utilities of the 
trivial policies, \ie,
    \begin{multline}\label{eq:inner_exp}
        \EE_{\pi}[u(T,Y) \given H, B] = \EE[u(1,Y) \given H, B] \cdot P_\pi(T=1\given H,B) \\
        + \EE[u(0,Y) \given H, B] \cdot P_\pi(T=0 \given H,B),
    \end{multline}
%
and we will use $P$ to refer to probabilities induced by SCM $\Mcal$, \eg, $P(H,B)$ to denote $P^\Mcal(H,B)$.
Now, we restate and prove Theorem~\ref{th:suboptimal_monoAP}.

\xhdr{Theorem~\ref{th:suboptimal_monoAP}}
    There exist (infinitely many) AI-assisted decision making processes $\Mcal$ satisfying Eqs.~\ref{eq:scm-1} and~\ref{eq:scm-2}, 
    with utility functions $u(T, Y)$ satisfying Eq.~\ref{eq:utility_prop_new}, such that $\bfunc$ is perfectly calibrated and $\hfunc$ is 
    monotone but any AI-assisted decision policy $\pi \in \Pi(H, B)$ that satisfies monotonicity is suboptimal, \ie, $\EE_{\pi}[ u(T, Y)] < \EE_{\pi^{*}}[ u(T, Y) ]$.
\begin{proof}
    To prove the above claim, we construct a monotone confidence function $\hfunc$, perfectly calibrated confidence function $\bfunc$ 
    and distribution $P^\Mcal$ for which any monotone AI-assisted decision policy $\pi \in \Pi(H, B)$ achieves strictly lower utility 
    than a carefully constructed non monotone AI-assisted decision policy $\tilde{\pi} \in \Pi(H, B)$.
    
    We will present the proof in three parts. First, we will introduce the main building block and idea behind the proof by a small construction of $\hfunc,\bfunc$ and $P^\Mcal$ with $|\Hcal|=|\Bcal|=3$,
    where $\Bcal \subseteq [0,1]$ denotes the (discrete) output space of the classifier's confidence function.
    We then construct examples of $\hfunc,\bfunc$ and $P^\Mcal$ for arbitrary $|\Hcal|=k$ and $|\Bcal|=m$ with $m,k \in \NN$, $m>k\geq 2$.
    Lastly, we construct examples where $\Bcal$ is non-discrete and $|\Hcal|=k$ with $k>2$.

 \subsubsection*{Main building block and small example.}

    We start by presenting the main idea of the proof using an example with a small set of confidence values $\Hcal$ and $\Bcal$. 
    Let the values of the decision maker's confidence $H$ be in $\Hcal=\{h_1,h_2,h_3\}$ and the values of the classifier's confidence $B$ be in
    $\Bcal = \{b_1,b_2, b_3\}$, with order $h_i<(h_i+1)$ and $b_i<(b_i+1)$ respectively.
    
    Our main building block, consists of two distributions $\Pm,\Pp \in \Pcal(\{0,1\})$ with $\Pm(Y=1) < c$ and $\Pp(Y=1) > c$, 
    where $c$ depends on utility $u$ as described by Eq.~\ref{eq:c_def} in Lemma~\ref{lem:trivialpolicies}.
    We use these distributions for our constructions of $\hfunc, \bfunc$ and $P^\Mcal$, so that for some realizations of $H,B$ distribution $P(Y=1\given H,B)$ is either $\Pm$ or $\Pp$.
    Using Lemma~\ref{lem:trivialpolicies} and from Eq.~\ref{eq:inner_exp}, we have that: 
    \begin{enumerate}[label=(\Roman*)]
        \item \label{case1} For any $h_i,b_i$ such that $P(Y\given H=h_i,B=b_i)=\Pm $, it holds that  
        $$\EE[u(1,Y) \given H=h_i,B=b_i] < \EE[u(0,Y) \given H=h_i,B=b_i]. $$
        Hence, \emph{decreasing $P_\pi(T=1\given H,B)$ increases $\EE[u(T,Y) \given H=h_i, B=b_i]$}.
        \item \label{case2} For any $h_i,b_i$ such that $P(Y\given H=h_i,B=b_i)=\Pp $, it holds that
        $$\EE[u(1,Y) \given H=h_i,B=b_i] > \EE[u(0,Y) \given H=h_i,B=b_i]. $$
        Hence, \emph{increasing $P_\pi(T=1\given H,B)$ increases $\EE[u(T,Y) \given H=h_i, B=b_i]$}.
    \end{enumerate}
    Intuitively, suppose we now have that, for confidence values $h_2, b_2$, $Y \sim \Pp$ and, for confidence values $h_3, b_2$, $Y\sim \Pm$, \ie, $P(Y\given H=h_2,B=b_2)=\Pp $ and $P(Y\given H=h_3,B=b_2)=\Pm $.
    Then, any non-monotone AI-assisted decision policy $\tilde \pi$ with $P_{\tilde \pi}(T=1\given H=h_2,B=b_2) > P_{\tilde \pi}(T=1\given H=h_3,B=b_2)$
    will have higher expected utility than any monotone AI-assisted decision policy given confidence values $h_2,b_2$ and $h_3,b_2$.
    Finally, under an appropriate choice of distribution $P(H,B)$, 
    such non-monotone AI-assisted decision policies $\tilde \pi$ will offer higher overall utility in expectation. 
    
    We formalize this intuition with the following lemma:
    \begin{lemma} \label{lem:optimalpi}
        Let $\Mcal$ be any AI-assisted decision making process satisfying Eqs.~\ref{eq:scm-1} and~\ref{eq:scm-2}, 
        with utility function $u(T, Y)$ satisfying Eq.~\ref{eq:utility_prop_new}.
        If $\hfunc,\bfunc$ and $P^\Mcal$ are such that 
        there exists confidence values $b \in \Bcal$, $h_i,h_j \in \Hcal$, with $h_i < h_j$, which satisfy
        \begin{equation}
            \begin{split}
            P(H=h_i,B=b)>0,&\quad P(H=h_j,B=b)>0,\\
            P(Y\given H=h_i,B=b) = \Pp &\quad \text{and} \quad P(Y\given H=h_j,B=b) = \Pm,
            \end{split}
        \end{equation}
       for some distributions $\Pm,\Pp$ with $\Pm(Y=1) < c$ and $\Pp(Y=1) > c$, where 
       \begin{equation}
            c = \frac{u(0,0)-u(1,0)}{u(1,1)-u(1,0)+u(0,0)-u(0,1)}. 
        \end{equation}
       Then, for any monotone AI-assisted decision policy $\pi \in \Pi(H, B)$, there exists an AI-assisted decision policy $\tilde \pi \in \Pi(H, B)$ 
       which is not monotone and achieves a stricly greater utility than $\pi$, \ie, $\EE_\pi[u(T,Y)] < \EE_{\tilde \pi}[u(T,Y)]$.
    \end{lemma}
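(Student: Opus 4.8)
The plan is to take an arbitrary monotone policy $\pi$ and surgically edit it only on the two ``contradictory'' cells $(h_i, b)$ and $(h_j, b)$, producing a policy $\tilde\pi$ that decides trivially and optimally on each of those cells while agreeing with $\pi$ everywhere else. Concretely, I would set $\tilde\pi(h_i, b, w) = 1$ and $\tilde\pi(h_j, b, w) = 0$ for every $w$, and $\tilde\pi(h, b', w) = \pi(h, b', w)$ for every other $(h, b')$. Since $P(Y \given H=h_i, B=b) = \Pp$ with $\Pp(Y=1) > c$, deciding $T=1$ is the better trivial choice on that cell, whereas since $P(Y \given H=h_j, B=b) = \Pm$ with $\Pm(Y=1) < c$, deciding $T=0$ is better on the other; this is precisely Lemma~\ref{lem:trivialpolicies}. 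Note that $\tilde\pi$ is manifestly non-monotone, because $P_{\tilde\pi}(T=1 \given H=h_i, B=b) = 1 > 0 = P_{\tilde\pi}(T=1 \given H=h_j, B=b)$ even though $h_i < h_j$ while the classifier confidence is held fixed at $b$.

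Next I would compute the utility gap. Using the law of total expectation together with the cell-wise decomposition in Eq.~\ref{eq:inner_exp}, the difference $\EE_{\tilde\pi}[u(T,Y)] - \EE_{\pi}[u(T,Y)]$ collapses to a sum over the two edited cells, since $\tilde\pi$ and $\pi$ coincide elsewhere:
\begin{equation*}
P(H=h_i, B=b)\,\Delta_i + P(H=h_j, B=b)\,\Delta_j ,
\end{equation*}
where $\Delta_i = \EE_{\tilde\pi}[u(T,Y) \given h_i, b] - \EE_{\pi}[u(T,Y) \given h_i, b]$ and $\Delta_j$ is defined analogously. A short calculation gives $\Delta_i = P_{\pi}(T=0 \given h_i, b)\,\bigl(\EE[u(1,Y)\given h_i,b] - \EE[u(0,Y)\given h_i,b]\bigr)$ and $\Delta_j = P_{\pi}(T=1 \given h_j, b)\,\bigl(\EE[u(0,Y)\given h_j,b] - \EE[u(1,Y)\given h_j,b]\bigr)$. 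By Lemma~\ref{lem:trivialpolicies} the two parenthesized factors are strictly positive (the $\Pp$ case at $h_i$, the $\Pm$ case at $h_j$), so $\Delta_i \geq 0$ and $\Delta_j \geq 0$, with $\Delta_i = 0$ iff $P_{\pi}(T=1 \given h_i, b) = 1$ and $\Delta_j = 0$ iff $P_{\pi}(T=1 \given h_j, b) = 0$.

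The crux of the argument, and the step I expect to carry the real content, is ruling out that $\Delta_i$ and $\Delta_j$ vanish simultaneously. If they did, then $P_{\pi}(T=1 \given h_i, b) = 1$ and $P_{\pi}(T=1 \given h_j, b) = 0$. But $\pi$ is monotone, and $h_i < h_j$ with equal classifier confidence forces the inequality of Eq.~\ref{eq:exp_inequality}, namely $P_{\pi}(T=1 \given h_i, b) \leq P_{\pi}(T=1 \given h_j, b)$, which would read $1 \leq 0$ -- a contradiction. Hence at least one of $\Delta_i, \Delta_j$ is strictly positive; combined with the hypotheses $P(H=h_i,B=b) > 0$ and $P(H=h_j,B=b) > 0$, the displayed sum is strictly positive, so $\EE_{\tilde\pi}[u(T,Y)] > \EE_{\pi}[u(T,Y)]$. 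Since $\pi$ was an arbitrary monotone policy, this proves the claim. The only mild care needed is to confirm, via the total-expectation decomposition, that editing $\pi$ on these two positive-probability cells leaves its contribution on all other cells untouched, which is immediate.
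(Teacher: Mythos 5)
Your proof is correct, and it follows the same overall strategy as the paper's: surgically modify $\pi$ only on the two contradictory cells $(h_i,b)$ and $(h_j,b)$, invoke Lemma~\ref{lem:trivialpolicies} on each, and use the decomposition of Eq.~\ref{eq:inner_exp} plus the law of total expectation to localize the utility comparison to those cells. The difference is in how $\tilde\pi$ is built. The paper constructs $\tilde\pi$ adaptively from $\pi$ via a three-case analysis: if $P_{\pi}(T=1\given H=h_i,B=b) < P_{\pi}(T=1\given H=h_j,B=b)$ it swaps the two decision probabilities; if they are equal and positive it strictly decreases the probability at $(h_j,b)$; if both are zero it strictly increases the probability at $(h_i,b)$ --- checking in each case that the result is non-monotone and strictly better. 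Your construction is a single fixed, deterministic one (always decide $1$ on $(h_i,b)$, always $0$ on $(h_j,b)$), which makes non-monotonicity immediate, and you replace the case analysis by one contradiction argument: $\Delta_i$ and $\Delta_j$ vanish simultaneously only if $P_{\pi}(T=1\given H=h_i,B=b)=1$ and $P_{\pi}(T=1\given H=h_j,B=b)=0$, which Eq.~\ref{eq:exp_inequality} forbids for a monotone $\pi$. Your version is arguably cleaner: the witness policy does not depend on $\pi$ at all (only the strictness argument does), and the monotonicity hypothesis enters exactly once, precisely where it is needed; the paper's swap-based construction preserves $\pi$'s randomization structure on the edited cells, but nothing in the lemma requires that, so nothing is lost by your simplification.
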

    \begin{proof}
        Let $\pi$ be a monotone AI-assisted decision policy, then it must hold that $P_{\pi}(T=1\given H=h_i,B=b) \leq P_{\pi}(T=1\given H=h_j,B=b)$ (see Eq.~\ref{eq:exp_inequality}).
        Let $\tilde \pi$ be an identical AI-assisted decision policy to $\pi$ up to the decision for confidence values $h_i,b$ and $h_j,b$.
        We distinguish between three cases.

        \noindent --- {\bf Case 1}: $P_{\pi}(T=1\given H=h_i,B=b) < P_{\pi}(T=1\given H=h_j,B=b)$.

        Let the probability of $T=1$ under $\tilde \pi$ for confidence values $h_i,b$ and $h_j,b$ be switched compared to $\pi$, \ie,
        \begin{equation*}
            \begin{split}
                P_{\tilde \pi}(T=1\given H=h_i,B=b) &= P_{\pi}(T=1\given H=h_j,B=b), \\
                P_{\tilde \pi}(T=1\given H=h_j,B=b) &= P_{\pi}(T=1\given H=h_i,B=b).
            \end{split}
        \end{equation*} 
        Then, $\tilde \pi$ is not monotone, as Eq.~\ref{eq:exp_inequality} is not satisfied, and it holds that
        \begin{equation*}
            \begin{split}
                P_{\tilde \pi}(T=1\given H=h_i,B=b) > P_{\pi}(T=1\given H=h_i,B=b), \\
                P_{\tilde \pi}(T=1\given H=h_j,B=b) < P_{\pi}(T=1\given H=h_j,B=b).
            \end{split}
        \end{equation*} 
        As we decreased $P(T=1\given H=h_j,B=b)$ and increased $P(T=1\given H=h_i,B=b)$, by properties \ref{case1} and \ref{case2}, it must hold that the expected utility 
        of $\tilde \pi$ given confidence values $h_i,b$ and $h_j,b$ is higher than the one of $\pi$, \ie,
        \begin{align}
            \EE_{\tilde \pi}[u(T,Y) \given H=h_i, B=b] &> \EE_{\pi}[u(T,Y) \given H=h_i, B=b] \quad \text{and} \label{eq:higher_ui}\\
            \EE_{\tilde \pi}[u(T,Y) \given H=h_j, B=b] &> \EE_{\pi}[u(T,Y) \given H=h_j, B=b]. \label{eq:higher_uj} 
        \end{align}
        
        \noindent --- {\bf Case 2}: $0<P_{\pi}(T=1\given H=h_i,B=b) = P_{\pi}(T=1\given H=h_j,B=b)\leq 1$.
        
        Let the probability of $T=1$ under $\tilde \pi$ for confidence values $h_j,b$ be strictly lower compared to $\pi$ and be the same as $\pi$ for $h_i,b$.
        Then, $\tilde \pi$ is not monotone, since by case assumption
        \begin{equation*}
            P_{\tilde \pi}(T=1\given H=h_i,B=b) = P_{\pi}(T=1\given H=h_j,B=b) > P_{\tilde \pi}(T=1\given H=h_j,B=b)
        \end{equation*} 
        and the inequality in Eq.~\ref{eq:higher_uj} holds by property \ref{case1}.
        
        \noindent --- {\bf Case 3}: $P_{\pi}(T=1\given H=h_i,B=b) = P_{\pi}(T=1\given H=h_j,B=b)=0$.

        Let the probability of $T=1$ under $\tilde \pi$ for confidence values $h_i,b$ be strictly higher compared to $\pi$ and be the same as $\pi$ for $h_j,b$.
        Then, $\tilde \pi$ is not monotone, since by case assumption
        \begin{equation*}
            P_{\tilde \pi}(T=1\given H=h_j,B=b) = P_{\pi}(T=1\given H=h_i,B=b) <P_{\tilde \pi}(T=1\given H=h_i,B=b)
        \end{equation*} 
        and the inequality in Eq.~\ref{eq:higher_ui} holds by property \ref{case2}.
     
        As in all three cases at least one of the strict inequalities in Eqs.~\ref{eq:higher_ui} or~\ref{eq:higher_uj} holds and $\tilde \pi$ is equivalent to $\pi$ (\ie, it has 
        the same expected conditional utility) given any other pair of confidence values $h' \in \Hcal$, $b' \in \Bcal$, 
        we have that
        \begin{equation*}
            \EE_{\tilde \pi}[u(T,Y)] = \EE [\EE_{\tilde \pi}[u(T,Y)]| H,B]> \EE[\EE_{\pi}[u(T,Y)| H,B] = \EE_{\pi}[u(T,Y)]. 
        \end{equation*}
    \end{proof}
    
    Before proceeding further, we would like to note that we may also state Lemma~\ref{lem:optimalpi} using $h\in \Hcal$, $b_i,b_j \in \Bcal$, with $b_i < b_j$, the proof would 
    follow analogously.
    
    Now, we construct an AI-decision making process $\Mcal$, with $\Hcal=\{h_1,h_2,h_3\}$ and $\Bcal = \{b_1,b_2, b_3\}$, such the decision maker's confidence $\hfunc$ is monotone, 
    the classifier'{}s confidence $\bfunc$ is perfectly calibrated, and the conditions of Lemma~\ref{lem:optimalpi} are satisfied. 
    %
    First, let $\hfunc, \bfunc$ and $P^\Mcal$ be such that
    \begin{equation*}
        \begin{split}
            P(\bfunc(Z) = b_j)&= \begin{cases}
                3/6 & \text{if $j =1$} \\
                2/6 & \text{if $j =2$} \\
                1/6 & \text{if $j =3$} \\
                0   & \text{otherwise}
            \end{cases}
            \quad \text{and} \\
            P(H=h_i \given B=b_j)&:=P_{X,V}(H=h_i \given \bfunc(Z) = b_j) = \begin{cases}
                \frac{1}{4-j} & \text{if $i \geq j$} \\
                0   & \text{otherwise}.
            \end{cases}
        \end{split}
    \end{equation*}
    Then, it readily follows that $P(H=h_i, B=b_j) = 1/6$ for $i \geq j$ and $P(H=h_i, B=b_j) = 0$ otherwise. 
    Moreover, for each pair of confidence values $(h_i, b_j)$ with positive probability $P(H=h_i, B= b_j)$, we set 
    \begin{equation*}
        P(Y=1\given H=h_i,B=b_j) = \begin{cases}
            \Pp & \text{if $i=j=2$ or ($i=3$ and $j \in \{1,3\}$)} \\
            \Pm & \text{if ($j=2$ and $i=3$) or ($j=1$ and $i \in \{1,2\})$},
        \end{cases}
    \end{equation*}
    as shown in Figure~\ref{fig:proof_theorem} (left).
    Then, it readily follows that $\hfunc$ is monotone with respect to the probability that $Y=1$, \ie, $P(Y=1\given H=h_i)\leq P(Y=1\given H=h_{i+1})$), and we have that
    the classifier'{}s confidence values
    \begin{equation*}
        \begin{split}
            b_j :=& \sum_{i: i\geq j} P(H=h_i \given B=b_j) \cdot P(Y=1\given H=h_i,B=b_j)
            \\ =& \begin{cases}
                2/3 \cdot \Pm + 1/3 \cdot \Pp & \text{if $j =1$} \\
                1/2 \cdot \Pm + 1/2 \cdot \Pp  & \text{if $j =2$} \\
                \Pp & \text{if $j =3$} \\
                0   & \text{otherwise}
            \end{cases} 
        \end{split}
    \end{equation*}
    are perfectly calibrated and satisfy that $b_j < b_{j+1}$.
    
    Finally, using Lemma~\ref{lem:optimalpi} with $b=b_2$, $h_i = h_2$, $h_j =h_3$, we have that any monotone AI-assisted decision policy is suboptimal for any $\Mcal$ with $\hfunc$, $\bfunc$ 
    and $P^\Mcal$ as defined above.
    \begin{figure}[t]
        \begin{center}
        \includegraphics[width=0.7\textwidth, bb=0 0 825 422]{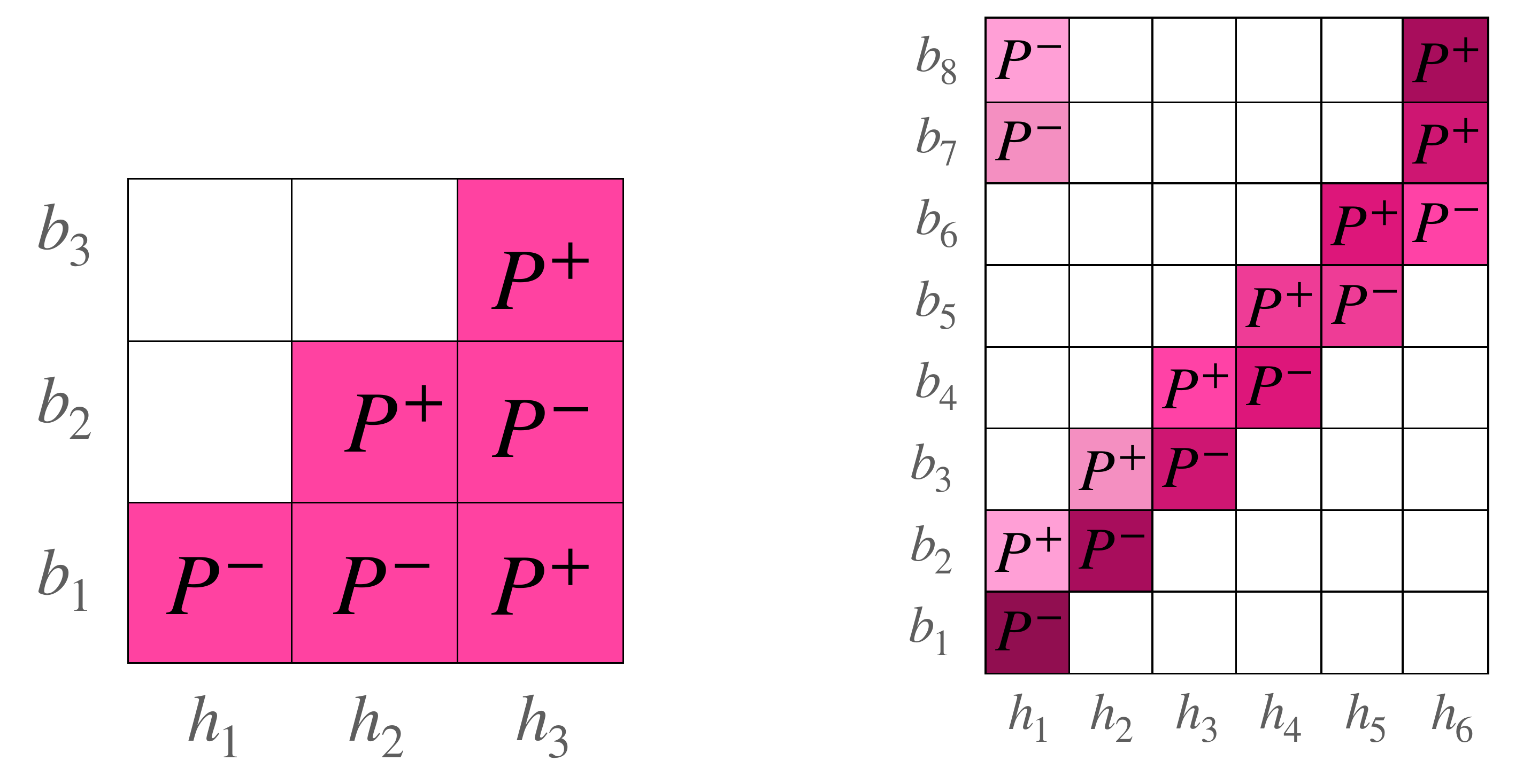}
    \end{center}
    \vspace{-2mm}
    \caption{Nonzero values of $P(Y = 1 | H=h_i, B=b_j)$ and $P(H=h_i, B=b_j)$ for every $h_i \in \Hcal$ and $b_j \in \Bcal$ used in the first (left) and
    second (right) part of the proof of Theorem~\ref{th:suboptimal_monoAP}. 
    In each cell $(h_i, b_j)$ in both panels, $P^{+}$ or $P^{-}$ is the value of $P(Y = 1 | H=h_i, B=b_j)$ and lighter color means lower value of $P(H=h_i, B=b_j)$, 
    where white means $P(Y = 1 | h=h_i, B = b_j) = 0$ and $P(H,B)=0$. 
    %
    %
    In both panels, the assignment of values is very stylized to facilitate the proof---the classifier's confidence function $\bfunc$ partitions the feature space in a way
    such that a rational decision maker is unable to take decisions that maximize utility for almost all confidence values.
    However, less stylized examples also satisfy the conditions of Lemma~\ref{lem:optimalpi}.
    For example, as long as there is one triplet of confidence values $b_2, h_2, h_3$ (or $h_3, b_1, b_2$ in the left example) for which a rational decision maker
    is unable to take decisions that maximize utility,  Lemma~\ref{lem:optimalpi} can be applied. 
    %
    }
    \label{fig:proof_theorem}
    \vspace{-3mm}
    \end{figure}   
  \subsubsection*{Construction with arbitrary $|\Hcal|=k$ and $|\Bcal|=m$, $m>k\geq 2$.}
    In this second part of the proof, we construct an AI-assisted decision making processes $\Mcal$, with $|\Hcal| = k$ and $|\Bcal|=m$ such that $m>k\geq 2$,
    such that the decision maker'{}s confidence $\hfunc$ is monotone, the classifier'{}s confidence $\bfunc$ is perfectly calibrated and the conditions
    of Lemma~\ref{lem:optimalpi} are satisfied.
    %
    
    First, let the space of confidence values be $\Hcal=\{h_i\}_{i \in [k]}$ and $\Bcal=\{b_j\}_{j \in [m]}$, with order $h_i<h_{i+1}$ and $b_i<b_{i+1}$, respectively, 
    and $\hfunc, \bfunc$ and $P^\Mcal$ be such that $P(\bfunc(Z)=b_j) = 1/m$ and
    \begin{equation}
        \begin{split}\label{eq:prob_hb}
            P(H=h_i \given B=b_j):=P_{X,V}(H=h_i \given \bfunc(Z) = b_j) = \begin{cases}
                \frac{m-j+1}{m}  & \text{if $j = i$} \\
                \frac{m-j+1}{m}  & \text{if $ i=1, j>k$} \\
                \frac{j-1}{m} & \text{if $j = i+1, j\leq k$} \\
                \frac{j-1}{m}  & \text{if $i=k, j> k$} \\
                0   & \text{otherwise}.
            \end{cases}
        \end{split}
    \end{equation}
    Moreover, for each pair of confidence values $(h_i, b_j)$ with positive probability $P(H=h_i, B= b_j)$, we set
    \begin{equation}\label{eq:prob_y_hb}
        \begin{split}
            P(Y = 1 \given H=h_i,B=b_j) =\begin{cases}
                \Pm & \text{if $j = i$} \\
                \Pm & \text{if $ i=1, j> k$} \\
                \Pp & \text{if $j = i+1, j\leq k$} \\
                \Pp & \text{if $i=k, j> k$}, 
            \end{cases}
        \end{split}
    \end{equation}
    as shown in Figure~\ref{fig:proof_theorem} (right).
    Further, we set the classifier's confidence values $b_j$ to
    \begin{equation*}
        b_j:= \frac{m-j+1}{m} \cdot P^- + \frac{j-1}{m} \cdot P^+ \,.
    \end{equation*}
    Then, it holds that $b_j<b_{j+1}$ and $\bfunc$ is perfectly calibrated as 
    \begin{equation*}
        P(Y = 1 \given B = b_j) = \begin{cases}
            P(H=h_j \given B=b_j) \cdot P^- + P(H=h_{j-1} \given B=b_j) \cdot P^+ & \text{if $j\leq k$} \\
            P(H=h_1 \given B=b_j) \cdot P^- + P(H=h_k \given B=b_j) \cdot P^+ & \text{if $j>k$}
        \end{cases} 
    \end{equation*}
    and thus, using the definitions of $P(H \given B)$ and $P(Y\given H,B)$, we have that $P(Y \given B = b_j) = b_j$.

    To show that $\hfunc$ is monotone with respect to the probability that $Y = 1$, 
    first note that $P(H=h_i, B=b_i)$ decreases as $i$ increases and $P(H=h_i, B=b_{i+1})$ increases as $i$ increases. 
    Moreover, further note that $P(Y=1 \given H=h_i, B = b_i) = P^{-} < P(Y=1 \given H=h_i, B = b_{i+1}) = P^{+}$.
    %
    Hence, for any $i \in \{2, \ldots, k-1\}$, it readily follows that
    \begin{align*}
    P(Y=1 \given H = h_i ) &= P^+ \cdot  P( B=b_{i+1} | H=h_i) + P^- \cdot  P( B=b_i | H=h_i) \\ &\leq P(Y=1 \given H = h_{i+1} ), 
    \end{align*}
    and, for $i=1$, it is evident that $P(Y=1 \given H = h_1 ) < P(Y = 1 \given H = h_2)$. 

    Finally, using Lemma~\ref{lem:optimalpi} with any choice of confidence values $b = b_j$, $h_i = h_{j-1}$ and $h_j = h_j$ with $j\in \{2, \dots, k\}$,
    we have that any monotone AI-assisted decision policy $\pi$ is suboptimal for any $\Mcal$ with $|\Hcal|=k$ and $|\Bcal|=m$, $m>k\geq 2$,
    and $\hfunc$, $\bfunc$ and $P^{\Mcal}$ as defined above. 
    Here, note that, as we do not fix the exact distributions $\Pm$ and $\Pp$, the above Lemma applies to infinitely many AI-assisted decision making 
    processes $\Mcal$.
    %

    \subsubsection*{Construction with $\Bcal \subseteq [0,1]$ and $|\Hcal|=k$.}
    \begin{figure}[t]
        \begin{center}
        \includegraphics[width=0.5\textwidth, bb=0 0 364 394]{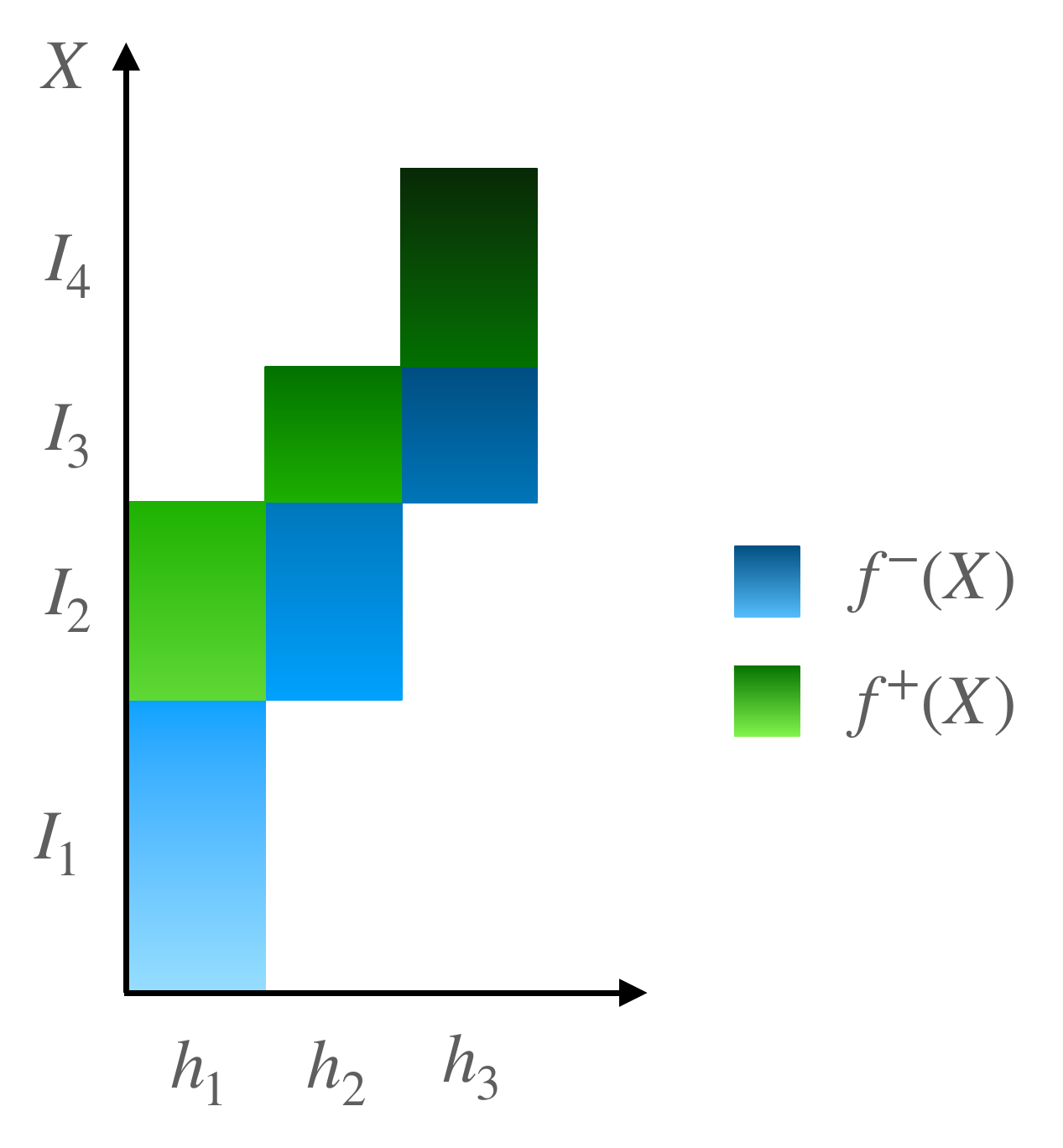}
    \end{center}
    \vspace{-2mm}
    \caption{Nonzero values of $P(Y = 1 | X, H=h_i, X \in I_j)$ for every $h_i \in \Hcal$, with $|\Hcal|=3$, and $I_j = (q_{j-1}, q_j]$, with $q_j \in Q_{4}$ 
    used in the last part of the proof of Theorem~\ref{th:suboptimal_monoAP}.
    Lighter color means lower value of $\fm$ or $\fp$.}
    \label{fig:proof_theorem2}
    \vspace{-3mm}
    \end{figure}   
    In this last part of the proof, we construct an AI-assisted decision making process $\Mcal$, with $|\Hcal|=k\geq 2$ and $\Bcal\subseteq [0,1]$,
    such that the decision maker'{}s confidence function $\hfunc$ is monotone, the classifier'{}s confidence function $\bfunc$ is perfectly calibrated
    and the conditions of Lemma~\ref{lem:optimalpi} are satisfied. 
    %
    
    First, let the space of confidence values be $\Hcal = \{h_i\}_{i\in[k]}$, with order $h_i < h_{i+1}$,
    the feature space\footnote{For a more general feature space $\Xcal$, we can use a mapping $\phi$ of $\Xcal$ to $[0,1]$. The proof works analogously by substituting $X$ with $\phi(X)$.} 
    $\Xcal = [0,1]$, 
    and 
    $\fm, \fp$ be two strictly monotone increasing functions with 
    \begin{equation}
        \fm: [0,1] \to [0, c) \quad \text{and} \quad
        \fp: [0,1] \to (c,1], 
    \end{equation}
    where
    \begin{equation}
        c = \frac{u(0,0)-u(1,0)}{u(1,1)-u(1,0)+u(0,0)-u(0,1)}. 
    \end{equation}
    Further, let $Q_{k+1} = \{q_0, q_1, \dots q_k, q_{k+1}\}$ be a set of quantiles such that $P(X \leq q_j ) = j/(k+1)$ for all $j \in \{0, 1, \dots, k+1\}$ and
    thus, we have that, for all $j \in [k+1]$,
    \begin{equation*}
        \text{for} \quad I_j := (q_{j-1}, q_j], \quad \text{it holds that} \quad P(X \in I_j) = \frac{1}{k+1}.
    \end{equation*}
    Now, let $\hfunc$ and $P^\Mcal$ be such that
    \begin{equation}
        P(H= h_i \given X, X \in I_j) = \begin{cases}
            1/2& \text{if $i \in \{j-1,j\}$} \\ 
            1& \text{if $ i=j=1$ or $(i=k$ and $j=k+1)$} \\ 
            0 & \text{otherwise},
            \label{eq:prob_hx}
        \end{cases}
    \end{equation}
    and let 
    \begin{equation}
        P(Y = 1 \given X, H= h_i, X \in I_j ) = \begin{cases}
            \fm(X) & \text{if $j=i$ or ($i=j=1$ )} \\
            \fp(X) & \text{if $j=i+1$ or ($i=k$ and $j=k+1$),}
        \label{eq:prob_y_hx}
        \end{cases}
    \end{equation}
    as shown in Figure~\ref{fig:proof_theorem2}.
    Next, we define
    \begin{equation}
       \bfunc(Z)= \bfunc(X):= P(Y = 1 \given X) = \begin{cases}
        \fm(X) & \text{if $X \in I_1$} \\ 
        \fp(X) & \text{if $X \in I_{k+1}$} \\ 
        (\fm(X) + \fp(X)) /2 & \text{otherwise,} \\ 
       \end{cases} \nonumber
    \end{equation}
    which, by construction, is perfectly calibrated.
    %
    
    To show that the decision maker's confidence function $\hfunc$ is monotone with respect to the probability that $Y = 1$,
    we first note that, using Eq.~\ref{eq:prob_hx}, we have that 
    \begin{equation}
        P(X\in I_j \given H= h_i) = \begin{cases}
            1/2 & \text{if $1< i <k$ and $j \in \{i, i+1\}$ and} \\
            1 & \text{if $i=j=1 $ } \\
            1 & \text{if $i=k $ and $j=k+1$} \\
            0 & \text{otherwise.}
            \label{eq:prob_xh}
        \end{cases}
    \end{equation}
    Hence, using Eq.~\ref{eq:prob_xh} and the law of total probability, for any $i\in \{2, \dots, k-2\}$, we have that
    \begin{align*}
        P(Y=1 \given H=h_i) &= \frac{1}{2} \left[ P(Y=1 \given H=h_i, X \in I_i)+ P(Y=1 \given H=h_i, X \in I_{i+1}) \right] \\
        &\leq \frac{1}{2} \left[ \fm(q_i) + \fp(q_{i+1}) \right] \\
        &= \frac{1}{2} \left[ \fm\left(\inf I_{i+1} \right) + \fp\left(\inf I_{i+2}\right) \right] \\
        &\leq \frac{1}{2} \left[ P(Y=1 \given H=h_{i+1}, X \in I_{i+1}) + P(Y=1 \given H=h_{i+1}, X \in I_{i+2}) \right] \\
        &= P(Y=1 \given H=h_{i+1}),
    \end{align*}
    where the inequalities follow from the fact that $\fm$ and $\fp$ are strictly monotone increasing.
    Corner cases for $i=1$ and $i=k-1$ can be shown analogously by further using that $\fm(X)<c<\fp(X)$ for all $X$.

    Finally, using Lemma~\ref{lem:optimalpi} with any choice of confidence values $h_i = h_{j-1}$ $h_j = h_j$, $j\in \{2, \cdots, k-1\}$ and $b=\bfunc(X)$ with $X\in I_j$, 
    we have that any monotone AI-assisted decision policy $\pi$ is suboptimal for any $\Mcal$ with $|\Bcal|\subseteq [0,1]$ and $|\Hcal|=k$, $k\geq 2$ and $\hfunc$, 
    $\bfunc$ and $P^{\Mcal}$ as defined above. 
\end{proof}

\subsection{Proof of Theorem~\ref{th:optimal_monoAP}}

We prove the statement by contraposition. Let $\Mcal$ be an AI-assisted decision making process satisfying Eqs.~\ref{eq:scm-1} and~\ref{eq:scm-2}, with a utility function 
$u(T, Y)$ satisfying Eq.~\ref{eq:utility_prop_new} and let $\Mcal$ be such that $\bfunc$ satisfies $\alpha$-alignment with respect to $\hfunc$ and $\bfunc$ has output 
space $\Bcal \subseteq [0,1]$. Assume there exists no (near-)optimal monotone AI-assisted decision policy for utility $u$. Thus, there must exist an optimal AI-assisted 
decision policy $\pi \in \Pi(H, B)$ which is not monotone and has strictly greater expected utility than any monotone policy. However, we show that we can modify 
$\pi$ to a monotone AI-assisted decision policy $\hat \pi \in \Pi(H, B)$ with near-optimal expected utility.

As $\pi$ is not monotone, there must exist confidence values $h_1, h_2 \in \Hcal$, $h_1\leq h_2$, and $b_1, b_2 \in \Bcal$, $b_1\leq b_2$,
such that
\begin{equation}
       \pi(h_1, b_1, w) > \pi(h_2, b_2, w) \quad \text{for some} \quad w \in \Wcal,
       \label{eq:nonmonotonicity_alpha}
\end{equation}
where $\Wcal$ denotes the space of noise values. 
In what follows, let $\tilde \Wcal^{(\pi, h_2, b_2)}_{h_1, b_1} \subseteq \Wcal$ denote the set containing any such $w$ and let $\tilde \Wcal^{(\pi,h_2, b_2)} = \bigcup_{h,b\in \Hcal \times \Bcal} \tilde \Wcal^{(\pi, h_2, b_2)}_{h, b}$. 
   
For any confidence value $h',b'\in \Hcal \times [0,1]$ with $\tilde \Wcal^{(\pi, h', b')}\neq \emptyset$, we modify policy $\pi$ to a policy $\hat \pi$ as follows.
Let $\{\Sbar_h\}_{h \in \Hcal}$ denote the sets satisfying the $\alpha$-alignment condition for $\bfunc$ with respect to $\hfunc$ and, given confidence $h'$, 
let $\hat b_{h'}$ denote the smallest confidence value of $f_B$, such that there exist $h\leq h'$ such that $P(Y=1 \given B=\hat b_{h'}, Z\in \Sbar_{h}) \geq c$, 
\ie,
\begin{equation}
        \hat b_{h'} := \min \{ b \in \Bcal \given P(Y=1 \given B=b, Z\in \Sbar_{h}) \geq c \text{ for } h \leq h'\}
\end{equation}
Now, we define a new AI-assisted policy $\hat \pi$ from $\pi$ as follows,
   \begin{equation}\label{eq:def_hatpi}
       \hat \pi(h',b',w) := \begin{cases}
           1  & \text{if } b' \geq \hat b_h \text{ and } w \in \bigcup_{h\leq h', b\in \left[\hat b_{h'}, b'\right]} \tilde \Wcal^{(\pi,h,b)}\\
           0  & \text{if } b' < \hat b_h \text{ and } w \in \bigcup_{h\geq h', b\in \left[b',\hat b_{h'}\right)} \tilde \Wcal^{(\pi,h,b)}\\
           \pi(h',b',w) & \text{otherwise.} 
       \end{cases}
   \end{equation}
Next, we show that $\hat \pi$ is monotone and $\EE_{\hat \pi}[u(T,Y)]\geq \EE_\pi[u(T,Y)] + \alpha \cdot a$ for some constant $a$. 

\xhdr{Proof $\hat \pi$ is a monotone assisted policy}

To prove that $\hat \pi \in \Pi(H, B)$ is a monotone AI-assisted decision policy, we show that, for all $h',h'' \in \Hcal, b',b'' \in \Bcal$, with $h' \leq h'', b' \leq b''$, it holds that 
$\tilde \Wcal^{(\hat \pi, h'', b'')}_{h', b'}=\emptyset$.
We distinguish between three cases.

\noindent --- {\bf Case 1}: $b' \geq \hat b_{h'}$ and $b'' \geq \hat b_{h''}$.

Since $h'\leq h''$, $b' \leq b''$ and, by definition, $\hat b_{h''} \leq \hat b_{h'}$ since $h'\leq h''$, we have that 
\begin{equation*}
\bigcup_{h\leq h', b\in \left[\hat b_{h'}, b'\right]} \tilde \Wcal^{(\pi,h,b)} \subseteq \bigcup_{h\leq h'', b\in \left[\hat b_{h''}, b''\right]} \tilde \Wcal^{(\pi,h,b)}.
\end{equation*}
Hence, we can conclude that
\begin{equation}
        \hat\pi(h',b',w) \leq 1 = \hat\pi(h'',b'',w) \text{ for all } w\in \bigcup_{h\leq h'', b\in \left[\hat b_{h''}, b''\right]} \tilde \Wcal^{(\pi,h,b)}.
        \label{eq:one_case1}
\end{equation}
Further, for any other $w \in \Wcal - \bigcup_{h\leq h'', b\in \left[\hat b_{h''}, b''\right]} \tilde \Wcal^{(\pi,h,b)} \subseteq \Wcal - \tilde \Wcal^{(\pi, h'', b'')}_{h', b'}$,
we have that $\hat \pi (h',b',w)= \pi (h',b',w)$ and $\hat \pi (h'',b'',w)= \pi (h'',b'',w)$ and, by definition of $\tilde \Wcal^{(\pi, h'', b'')}_{h', b'}$, it follows 
that 
\begin{equation}
           \hat \pi(h',b',w) \leq \hat\pi(h'',b'',w) \text{ for all } w \in \Wcal - \bigcup_{h\leq h'', b\in \left[\hat b_{h''}, b''\right]} \tilde \Wcal^{(\pi,h,b)}.
            \label{eq:two_case1}
\end{equation}
From Eqs.~\ref{eq:one_case1} and~\ref{eq:two_case1}, it follows that $\tilde \Wcal^{(\hat \pi, h'', b'')}_{h', b'}=\emptyset$.
       
\noindent --- {\bf Case 2}: $b' < \hat b_{h'}$ and $b'' \geq \hat b_{h''}$.

By definition of $\hat \pi$, we have that
\begin{equation}
            \hat\pi(h',b',w) \leq 1 = \hat\pi(h'',b'',w) \text{ for all } w\in \bigcup_{h\leq h'', b\in \left[\hat b_{h''}, b''\right]} \tilde \Wcal^{(\pi,h,b)}
            \label{eq:one_case2}
\end{equation}
and 
\begin{equation}
             \hat\pi(h',b',w) = 0 \leq \hat\pi(h'',b'',w) \text{ for all } w\in \bigcup_{h\geq h', b\in \left[b',\hat b_{h'}\right)} \tilde \Wcal^{(\pi,h,b)}
             \label{eq:two_case2}
\end{equation}
Analogously to case 1, since the values of $w$ below are also in $\Wcal - \tilde \Wcal^{(\pi, h'', b'')}_{h', b'}$ and $\hat \pi$ is equivalent to $\pi$ for these values,
we have that
\begin{equation}
            \hat \pi(h',b',w) \leq \hat\pi(h'',b'',w) \text{ for all } w \in \Wcal - \bigcup_{h\leq h'', b\in \left[\hat b_{h''}, b''\right]} \tilde \Wcal^{(\pi,h,b)} - \bigcup_{h\geq h', b\in \left[b',\hat b_{h'}\right)} \tilde \Wcal^{(\pi,h,b)}
             \label{eq:three_case2}
\end{equation}
From Eqs.~\ref{eq:one_case2}~\ref{eq:two_case2} and~\ref{eq:three_case2}, it follows that $\tilde \Wcal^{(\hat \pi, h'', b'')}_{h', b'}=\emptyset$.

\noindent --- {\bf Case 3}: $b' < \hat b_{h'}$ and $b'' < \hat b_{h''}$.

Since $h'\leq h''$, $b' \leq b''$ and, by definition, $\hat b_{h''} \leq \hat b_{h'}$ since $h'\leq h''$, we have that 
\begin{equation*}
	\bigcup_{h\geq h'', b\in \left[b'',\hat b_{h''}\right)} \tilde \Wcal^{(\pi,h,b)} \subseteq \bigcup_{h\geq h', b\in \left[b',\hat b_{h'}\right)} \tilde \Wcal^{(\pi,h,b)}.
\end{equation*}
Hence, we can conclude that
\begin{equation}
        \hat\pi(h',b',w) = 0 \leq \hat\pi(h'',b'',w) \text{ for all } w\in \bigcup_{h\geq h', b\in \left[b',\hat b_{h'}\right)} \tilde \Wcal^{(\pi,h,b)}
        \label{eq:one_case3}
\end{equation}
Again analogously to case 1, since the values of $w$ below are also in $\Wcal - \tilde \Wcal^{(\pi, h'', b'')}_{h', b'}$ and $\hat \pi$ is equivalent to $\pi$ for these values,
we have that
         \begin{equation}
            \hat \pi(h',b',w) \leq \hat\pi(h'',b'',w) \text{ for all } w \in \Wcal - \bigcup_{h\geq h', b\in \left[b',\hat b_{h'}\right)} \tilde \Wcal^{(\pi,h,b)}
             \label{eq:two_case3}
        \end{equation}
From \cref{eq:one_case3,eq:two_case3}, it follows that $\tilde \Wcal^{(\hat \pi, h'', b'')}_{h', b'}=\emptyset$.    

Since, in all three cases, we have shown that $\tilde \Wcal^{(\hat \pi, h'', b'')}_{h', b'}=\emptyset$, we can conclude that $\hat \pi \in \Pi(H, B)$ is monotone.

\xhdr{Proof $\hat \pi$ is near optimal}

First, we rewrite the inner expectation in Eq.~\ref{eq:inner_exp} as 
\begin{multline*}
     \EE_{\pi}[u(T,Y) \given H, B] 
     = \EE[u(0,Y) \given H, B] + \left( \EE[u(1,Y) \given H, B] \right. \\ \left. - \EE[u(0,Y) \given H, B] \right) \cdot P_\pi(T=1\given H,B).
\end{multline*}
Further, recall that $|\Sbar_h| \geq (1-\alpha/2) |\Scal_{h}|$ for all $h\in \Hcal$ and, for all $h',h'' \in \Hcal$, $h'\leq h''$ and all $b',b'' \in [0,1]$, $b' \leq b''$, 
we have that
\begin{equation}
        \begin{split}
            P(Y=1 \given \bfunc(Z)=b', Z \in \Sbar_{h'}) - P(Y=1 \given \bfunc(Z)=b'', Z \in \Sbar_{h''}) \leq \alpha
        \end{split}
\end{equation}
Now, for any $h' \in\Hcal, b' \in \Bcal$, we show an upper bound on $\EE_{\pi}[u(T,Y)  \given H=h', B=b']  - \EE_{\hat \pi}[u(T,Y)  \given H=h', B=b']$.
We distinguish between three cases.

\noindent --- {\bf Case 1}: $b' \geq \hat b_{h'}$ and $P(Y=1 \given H= h', B= b') \geq c$.

Using Lemma~\ref{lem:trivialpolicies}, we have that
\begin{equation}
                \left( \EE[u(1,Y) \given H=h', B=b'] - \EE[u(0,Y) \given H=h', B=b'] \right)\geq 0
\end{equation}
Moreover, as $b' \geq \hat b_{h'}$, the distribution of positive decisions in $\hat \pi$ may also increases for $h', b'$ compared to $\pi$ (see Eq.~\ref{eq:def_hatpi}), \ie,
            \begin{equation*}
                P_{\pi}(T=1\given H=h',B=b') - P_{\hat \pi}(T=1\given H=h',B=b') \leq 0
            \end{equation*}
Hence, it follows that
\begin{equation} 
\begin{split}
\label{eq:finalbound_case1}
                \EE_{\pi}[u(T,Y)  &\given H=h', B=b']  - \EE_{\hat \pi}[u(T,Y)  \given H=h', B=b'] \\ 
                &=(\EE[u(1,Y) \given H=h', B=b']-\EE[u(0,Y) \given H=h', B=b'] ) \\
                &\times (P_{\pi}(T=1\given H=h',B=b') - P_{\hat \pi}(T=1\given H=h',B=b')) \leq 0.
\end{split}
\end{equation}  
            
\noindent --- {\bf Case 2}:  $b' \geq \hat b_{h'}$ and $P(Y=1 \given H= h', B= b') < c$.

Since $b' \geq \hat b_{h'}$, there exists $h, b \in \Hcal \times \Bcal$, with $h\leq h'$, $ b \leq b'$, such that $P(Y=1 \given B= b, Z \in \Sbar_h) \geq c$.
Moreover, using the definition of $\alpha$-alignment, we have that
\begin{equation}
                 P(Y=1 \given B= b, Z \in \Sbar_h) \leq P(Y=1 \given B=b',Z \in \Sbar_{h'}) + \alpha 
                \label{eq:alpha_align_h'}
\end{equation}
Then, we can use this to lower bound the expected utility of $T=1$ given $B=b'$ and $Z \in \Sbar_{h'}$ as follows:
\begin{equation}
                \begin{split}\label{eq:lowerbound}
                \EE[u(1,Y) \given B&= b, Z \in \Sbar_h] - \EE[u(1,Y) \given B=b',Z \in \Sbar_{h'}]
                \\&=  u(1,1) \cdot (P(Y=1 \given B= b, Z \in \Sbar_h) -P(Y=1 \given B=b',Z \in \Sbar_{h'}) 
                \\ &+ u(1,0) \cdot (P(Y=1 \given B=b',Z \in \Sbar_{h'})-P(Y=1 \given B= b, Z \in \Sbar_h))
                \\ &\leq  (u(1,1)-u(1,0))  \cdot \alpha,
                \end{split}
\end{equation}
where the last inequality due to Eq.~\ref{eq:alpha_align_h'} and the assumption that $u(1,1)-u(1,0)>0$.
Analogously, we can also upper bound the expected utility of $T=0$ given $H=h', B=b'$ and $Z \in \Sbar_{h'}$
as follows:
\begin{equation}
                \begin{split}\label{eq:upperbound}
                \EE[u(0,Y) \given  B&= b, Z \in \Sbar_h] - \EE[u(0,Y) \given B=b',Z \in \Sbar_{h'}]
                \\ &= u(0,1) \cdot (P(Y=1 \given B= b, Z \in \Sbar_h) -P(Y=1 \given B=b',Z \in \Sbar_{h'}) 
                \\ & + u(0,0) \cdot (P(Y=1 \given B=b',Z \in \Sbar_{h'})-P(Y=1 \given B= b, Z \in \Sbar_h))
                \\ &\geq (u(0,1)-u(0,0))  \cdot \alpha,
                \end{split}
\end{equation}
where the last inequality holds due to Eq.~\ref{eq:alpha_align_h'} and the assumption that $u(0,1)-u(0,0)<0$.
            
Now, as $P(Y=1 \given B= b, Z \in \Sbar_h) \geq c$, by Lemma~\ref{lem:trivialpolicies}, we have that 
\begin{equation}
                 \EE[u(1,Y) \given B=b, Z \in \Sbar_h] \geq \EE[u(0,Y) \given B=b, Z \in \Sbar_h]
                 \label{eq:lemma_ineq}
\end{equation}
Combining Eqs.~\ref{eq:lowerbound},~\ref{eq:upperbound} and~\ref{eq:lemma_ineq}, we obtain
\begin{equation}
\begin{split}\label{eq:alpha_bound}
                 \EE[u(1,Y) \given B&=b',Z \in \Sbar_{h'}] + \alpha (u(1,1)-u(1,0)) 
                 \\ &\geq \EE[u(0,Y) \given B=b',Z \in \Sbar_{h'}] + \alpha (u(0,1)-u(0,0)) 
\end{split}
\end{equation}
In addition, note that we have following trivial bound for the expectation when $H=h'$ but $Z \notin \Sbar_{h'}$
\begin{align}
                u(1,0) &\leq \EE[u(1,Y) \given H=h', B=b'] \leq u(1,1), 
                \label{eq:trivial_lowerbound}
                \\ u(0,1) &\leq \EE[u(0,Y) \given H=h', B=b'] \leq u(0,0)
                \label{eq:trivial_upperbound}
\end{align}
Moreover, since $b' \geq \hat b_{h'}$, the distribution of positive decisions in $\hat \pi$ may also increase for $h', b'$ compared 
to $\pi$, \ie,
\begin{equation*}
                 P_{\pi}(T=1\given H=h',B=b') - P_{\hat \pi}(T=1\given H=h',B=b') \leq 0
\end{equation*}
Hence, we have that 
\begin{equation}
                \begin{split}\label{eq:exp_bound}
                \EE_{\pi}[u(T,Y) \given H&=h', B=b']  - \EE_{\hat \pi}[u(T,Y) \given H=h', B=b'] 
                \\ &\leq (-1) \cdot (\EE[u(1,Y) \given H=h', B=b'] - \EE[u(0,Y) \given H=h', B=b']),
                \end{split}
\end{equation}
where the inequality follows since $\EE[u(1,Y) \given H=h', B=b'] - \EE[u(0,Y) \given H=h', B=b'] \leq 0$ by Lemma~\ref{lem:trivialpolicies} as $P(Y=1 \given H=h', B=b') < c$.

Finally, combining Eqs.~\ref{eq:alpha_bound},~\ref{eq:trivial_lowerbound},~\ref{eq:trivial_upperbound} and~\ref{eq:exp_bound} and using the law of total expectation, we obtain
\begin{equation}
                \begin{split}\label{eq:finalbound_case2}
                    \EE_{\pi}[u(T,Y) &\given H=h', B=b']  - \EE_{\hat \pi}[u(T,Y) \given H=h', B=b'] 
                    \\ &\leq (1-\beta_{(h',b')}) ( \EE[u(0,Y) \given B=b',Z \in \Sbar_{h'}] -\EE[u(1,Y) \given B=b',Z \in \Sbar_{h'}]) 
                    \\& + \beta_{(h',b')} ( \EE[u(0,Y) \given H=h', B=b'] - \EE[ u(1,Y) \given H=h', B=b'] ) 
                    \\ &\leq  (1-\beta_{(h',b')}) \alpha (u(1,1)-u(1,0) + u(0,0)-u(0,1)) + \beta_{(h',b')} (u(0,0)-u(1,0)),
                \end{split}
\end{equation} 
where $\beta_{(h',b')}$ denotes the probability of $Z \notin \Sbar_{h'} $ given $H=h', B=b'$, \ie, $\beta_{(h',b')} = P(Z \notin \Sbar_{h'} | H=h', B=b')$. 
            
\noindent --- {\bf Case 3}:  $b' < \hat b_{h'}$.

For all $h, b$, with $h \leq h'$, $ b\leq b'$, we have that $P(Y=1 \given B= b, Z \in \Sbar_h) < c$.
In particular, $P(Y=1 \given B= b', Z \in \Sbar_{h'}) < c$.
Thus, by Lemma~\ref{lem:trivialpolicies}, 
\begin{equation}
                \EE[u(1,Y) \given B=b',Z \in \Sbar_{h'}] < \EE[u(0,Y) \given B=b',Z \in \Sbar_{h'}] 
                \label{eq:alpha_bound2}
\end{equation}
In this case, since $b' < \hat b_{h'}$, the distribution of positive decisions in $\hat \pi$ may decrease for $h, b$ compared to $\pi$, \ie,
\begin{equation*}
            0 \leq P_{\pi}(T=1\given H=h,B=b)-P_{\hat \pi}(T=1\given H=h,B=b)
\end{equation*}
Combining Eqs.\ref{eq:alpha_bound2},~\ref{eq:trivial_lowerbound} and~\ref{eq:trivial_upperbound} and using the law of total expectation, we obtain
\begin{equation}
                \begin{split}\label{eq:finalbound_case3}
                    \EE_{\pi}[u(T,Y) &\given H=h', B=b']  - \EE_{\hat \pi}[u(T,Y) \given H=h', B=b'] 
                    \\ &\leq (\EE[u(1,Y) \given H=h', B=b'] - \EE[u(0,Y) \given H=h', B=b']) \cdot 1\\ 
                    &=  (1-\beta_{(h',b')} ) ( \EE[u(1,Y) \given B=b', Z \in \Sbar_{h'}] - \EE[u(0,Y) \given B=b', Z \in \Sbar_{h'}]) 
                    \\& + \beta_{(h',b')}  ( \EE[ u(1,Y) \given H=h', B=b'] - \EE_{Y}[u(0,Y) \given H=h', B=b']) 
                    \\ &\leq \beta_{(h',b')}  ( u(1,1)-u(0,1)),
                \end{split}
\end{equation}
where again $\beta_{(h',b')} = P(Z \notin \Sbar_{h'} | H=h', B=b')$. 

Now, for a fixed $h' \in \Hcal$, since $|\Sbar_{h'}|\geq (1- \alpha/2) |\Scal_{h'}|$, we know that $0\leq\sum_{b \in \Bcal} \beta_{(h',b)} \leq \alpha/2$.
Hence, combining Eqs.~\ref{eq:finalbound_case1},~\ref{eq:finalbound_case2} and~\ref{eq:finalbound_case3} from the three cases above, we have that
\begin{multline*}
        \EE_B[\EE_{\pi}[u(T,Y) \given H=h', B=b']] - \EE_B[\EE_{\hat \pi}[u(T,Y) \given H=h', B=b']]
        \\ =\EE_B[\EE_{\pi}[u(T,Y) \given H=h', B=b'] - \EE_{ \hat \pi}[u(T,Y) \given H=h', B=b']]
        \\ \leq \max\{ \alpha (u(1,1)-u(1,0)+ u(0,0)-u(0,1)) + \frac{\alpha}{2} \cdot (u(0,0)-u(1,0)),
        \ \frac{\alpha}{2} \cdot (u(1,1)-u(0,1))\}
        \\ \leq \alpha \cdot (u(1,1)-u(0,1)+ \frac{3}{2} \cdot (u(0,0)-u(1,0))). 
\end{multline*}
Finally, since by assumption $\pi$ is optimal, \ie, $\EE_\pi[u(T,Y)] = \EE_{\pi^*}[u(T,Y)] =\max_{\pi' \in \Pi(H, B)} \EE_{\pi'}[u(T,Y)]$, we can conclude by the law of total expectation
that
\begin{equation*}
        \begin{split}
        \EE_{\pi^*}[u(T,Y)] &=\EE_{X,H} \EE_B[\EE_{Y,\ T \given \pi}[u(T,Y) \given H, B]]
        \\&\leq \EE_{\hat \pi}[u(T,Y)] + \alpha  \cdot (u(1,1)-u(0,1)+ \frac{3}{2} \cdot (u(0,0)-u(1,0)))   \,. 
        \end{split}
\end{equation*}
%
This concludes the proof. 

\subsection{Proof of Theorem~\ref{th:multcal_to_alignment}}
    If $\bfunc$ is $\alpha/2$-multicalibrated with respect to $\{\Scal_{h}\}_{h \in \Hcal}$, then, by definition, 
    for any $h \in \Hcal$, there exists $\Sbar_h\subset \Scal_{h}$ with $|\Scal| \geq (1-\alpha/2) |\Scal_h|$ 
    such that, for any $b\in[0,1]$, it holds that
    \begin{equation*}
        | P( Y=1 \given \bfunc(Z)=b, Z \in \Sbar_h) -b| \leq \alpha/2.
    \end{equation*}
    %
    This directly implies that, for any $h',h'' \in \Hcal$ and $b',b'' \in [0,1]$, we have that
    \begin{equation}
            P( Y=1 \given \bfunc(Z)=b', Z \in \Sbar_{h'}) - b'
            -P( Y=1 \given \bfunc(Z)=b'', Z \in \Sbar_{h''}) - b'' \leq \alpha 
    \end{equation}
    %
    and, using linearity of expectation, we further have that
    \begin{equation}
        \begin{split}
        P(Y=1 \given \bfunc(Z)=b', Z \in \Sbar_{h'}) 
        - P(Y=1 \given \bfunc(Z)=b'', Z \in \Sbar_{h''}) \leq \alpha + b' - b'',
        \end{split}
    \end{equation}
    showing that, whenever $b' \leq b''$, the $\alpha$-alignment condition is met. This proves that $\bfunc$ is $\alpha$-aligned with respect to $\hfunc$.


Finally, if $\bfunc$ is $\alpha/2$-multicalibrated with respect to $\{ \Scal_h \}_{h \in \Hcal}$, then, it is $\alpha/2$-calibrated with respect to any 
of the sets $\Scal_{h}$. 
Since $\Zcal = \cup_{h \in \Hcal} \Scal_h$, this implies that $\bfunc$ is $\alpha/2$-calibrated with respect to $\Zcal$.
This concludes the proof.

\subsection{Proof of Proposition~\ref{prop:multical_to_alignment}}
%

Given a discretization parameter $\lambda$, Algorithm~\ref{alg:multical_alg} works with a discretized notion of $\alpha$-multicalibration, 
namely ($\alpha,\lambda$)-multicalibration:

\begin{definition}
    Let $\Ccal\subseteq 2^\Zcal$ be a collection of subsets of $\Zcal$. For any $\alpha,\lambda >0$,
    confidence function $\bfunc: \Zcal \to [0,1]$ is $(\alpha,\lambda)$-multicalibrated with respect to 
    $\Ccal$ if, for all $S \in \Ccal$, $b \in \Lambda[0,1]$, and all $S_{h,\lambda(b)}(g)$ such that $|\Set{h}{b}| \geq \alpha\lambda |\Scal_h|$, 
    it holds that
    \begin{equation}
       | \EE[ \bfunc(X,H) - P(Y=1\given X,H)\given (X,H) \in \Set{h}{b}] | \leq \alpha \,.
    \end{equation}
\end{definition}
Here, we can analogously define a discretized notion of $\alpha$-alignment, namely ($\alpha,\lambda$)-alignment.
\begin{definition}
    For $\alpha, \lambda>0$, a confidence function $\bfunc: \Zcal \to [0,1]$ is $(\alpha,\lambda)$-aligned with respect to $\hfunc$ 
    if, for all $h',h'' \in \Hcal$, $h'\leq h''$, and all $b',b'' \in \Lambda[0,1]$, $b' \leq b''$, with 
    $|\Set{h'}{b'}| > \alpha/2 \cdot \lambda|S_{h'}|$ and $|\Set{h''}{b''}| > \alpha/2 \cdot \lambda|S_{h''}|$, we have
    \begin{equation}
       P(Y=1\given (X,H) \in \Set{h'}{b'}) - P(Y=1 \given (X,H) \in \Set{h''}{b''}) \leq \alpha \,.
    \end{equation}
\end{definition}
In what follows, we first show that $(\alpha,\lambda)$-multicalibration with respect to $\{\Scal_h\}_{h \in \Hcal}$ implies $(2\alpha +\lambda,\lambda)$-alignment 
with respect to $\hfunc$.

\begin{theorem}\label{th:lambdamultcal_to_lambdaalignment}
    For $\alpha, \lambda>0$, if $\bfunc$ is $(\alpha,\lambda)$-multicalibrated with respect to $\{\Scal_h\}_{h \in \Hcal}$, then
    $\bfunc$ is $(2\alpha +\lambda,\lambda)$-aligned with respect to $\hfunc$ .
\end{theorem}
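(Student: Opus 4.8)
The plan is to prove the statement directly, by bounding, for an arbitrary admissible pair of cells, the two conditional positive rates separately in terms of their bin centers and then subtracting. First I would fix $h',h'' \in \Hcal$ with $h' \leq h''$ and bin centers $b',b'' \in \Lambda[0,1]$ with $b' \leq b''$ satisfying the size requirement of $(2\alpha+\lambda,\lambda)$-alignment, namely $|\Set{h'}{b'}| > \frac{2\alpha+\lambda}{2}\,\lambda\,|\Scal_{h'}|$ and the analogous bound for $(h'',b'')$. Since $\frac{2\alpha+\lambda}{2} = \alpha + \lambda/2 > \alpha$, each of the two cells is large enough to trigger the $(\alpha,\lambda)$-multicalibration guarantee, i.e. $|\Set{h'}{b'}| \geq \alpha\lambda|\Scal_{h'}|$ and $|\Set{h''}{b''}| \geq \alpha\lambda|\Scal_{h''}|$. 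This threshold check is the only place where the $\lambda/2$ slack in the target parameter is consumed, and it is the single piece of mildly delicate bookkeeping in the argument.

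Next I would invoke multicalibration on each cell. By the tower property, $\EE[P(Y=1\given X,H)\given (X,H)\in\Set{h}{b}] = P(Y=1\given (X,H)\in\Set{h}{b})$, so the multicalibration bound can be rewritten as
\begin{equation*}
\bigl| \EE[\bfunc(X,H)\given (X,H)\in\Set{h}{b}] - P(Y=1\given (X,H)\in\Set{h}{b}) \bigr| \leq \alpha
\end{equation*}
for each of the two cells. I would then use the defining property of the $\lambda$-discretization: every point in $\Set{h}{b}$ has $\bfunc(x,h)\in\lambda(b)=[b-\lambda/2,b+\lambda/2)$, so the conditional mean $\EE[\bfunc(X,H)\given (X,H)\in\Set{h}{b}]$ also lies in $[b-\lambda/2,b+\lambda/2]$. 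Combining these two facts yields the two-sided sandwich $b - \lambda/2 - \alpha \leq P(Y=1\given (X,H)\in\Set{h}{b}) \leq b + \lambda/2 + \alpha$ for each cell.

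Finally I would apply the upper bound to the $(h',b')$ cell and the lower bound to the $(h'',b'')$ cell and subtract:
\begin{equation*}
P(Y=1\given (X,H)\in\Set{h'}{b'}) - P(Y=1\given (X,H)\in\Set{h''}{b''}) \leq (b' - b'') + \lambda + 2\alpha \leq 2\alpha + \lambda,
\end{equation*}
where the last step uses $b' \leq b''$. This is exactly the $(2\alpha+\lambda,\lambda)$-alignment inequality, completing the proof. I do not expect any genuine obstacle: the argument is a direct composition of the multicalibration guarantee with the bin-width bound, and the only thing requiring care is confirming that the alignment size threshold—which carries the larger parameter $2\alpha+\lambda$—dominates the multicalibration size threshold, as verified in the first step.
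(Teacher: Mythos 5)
Your proof is correct and follows essentially the same route as the paper's: both arguments invoke the $(\alpha,\lambda)$-multicalibration guarantee on each of the two cells (contributing $2\alpha$), use the $\lambda$-discretization together with $b' \leq b''$ to absorb the bin-width term $\lambda$, and verify that the alignment size threshold $(2\alpha+\lambda)/2 \cdot \lambda$ exceeds the multicalibration threshold $\alpha\lambda$. The only difference is presentational---you sandwich each conditional probability around its bin center before subtracting, whereas the paper subtracts the two multicalibration deviations first and then bounds the difference of conditional mean confidences by $\lambda$---which is an equivalent rearrangement of the same inequalities.
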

\begin{proof}
    If $\bfunc$ is $(\alpha,\lambda)$-multicalibrated with respect to $\{S_{h}\}_{h \in \Hcal}$, then, by definition, for all
    $h \in \Hcal$, $b \in \Lambda[0,1]$, and all $\Set{h}{b}$ such that $|\Set{h}{b}| \geq \alpha \cdot\lambda |\Scal_h|$, 
    it holds that
     \begin{equation}
        | \EE[ \bfunc(X,H) - P(Y=1\given X,H)\given (X,H) \in \Set{h}{b}] | \leq \alpha.
     \end{equation}
    %
    This directly implies that, for all $h',h'' \in \Hcal, b',b'' \in \Lambda[0,1]$ with $|\Set{h'}{b'}| \geq \alpha \cdot \lambda |S_{h'}|$
    and $|\Set{h''}{b''}| \geq \alpha \cdot\lambda |S_{h''}|$, it holds that  
    \begin{equation}
        \begin{split}
        \Exp &[ \bfunc(X,H) - P(Y=1\given X,H)\given (X,H) \in \Set{h''}{b''}] 
        \\&- \Exp[ \bfunc(X,H) - P(Y=1\given X,H)\given (X,H) \in \Set{h'}{b'}] \leq 2\alpha 
        \end{split}
    \end{equation}
    and, using the linearity of expectation, we have that 
    \begin{equation}
        \begin{split}
         & P(Y=1 \given (X,H) \in \Set{h'}{b'} )- P(Y=1 \given (X,H) \in \Set{h''}{b''})
        \\\leq &2\alpha + \Exp[ \bfunc(X,H)\given (X,H) \in \Set{h'}{b'}]  - \Exp[ \bfunc(X,H)\given (X,H) \in \Set{h''}{b''}].
        \end{split}
    \end{equation}
    Whenever $b' \leq b''$, due to the $\lambda$-discretization, we have that 
    \begin{equation}
        \Exp[ \bfunc(X,H)\given (X,H) \in \Set{h'}{b'}]  - \Exp[ \bfunc(X,H)\given (X,H) \in \Set{h''}{b''}] \leq \lambda
    \end{equation}

    Hence, we have shown that if $\bfunc$ is $\alpha$-multicalibrated, then for all 
    $h',h'' \in \Hcal, b',b'' \in \Lambda[0,1]$ with $|\Set{h'}{b'}| \geq \alpha \cdot \lambda |S_{h'}|$
    and $|\Set{h''}{b''}| \geq \alpha \cdot\lambda |S_{h''}|$, we have 
    \begin{equation}
        \begin{split}
            P(Y=1 \given (X,H) \in \Set{h'}{b'} )- P(Y=1 \given (X,H) \in \Set{h''}{b''}) \leq 2\alpha + \lambda \,.
        \end{split}
    \end{equation}
    Further, note that $(2\alpha+\lambda)/2 \cdot \lambda>\alpha \cdot \lambda $ as $\lambda>0$.
    This concludes the proof.
\end{proof}

Next, we show that, 
if $\bfunc$ is $(\alpha,\lambda)$-aligned, then $f_{B,\lambda}$ is $\alpha$-aligned with respect to $\hfunc$. 
\begin{theorem}\label{th:lambdaalignment_to_alignment}
    For $\alpha, \lambda>0$, if $\bfunc$ is $(\alpha,\lambda)$-aligned with respect to $\hfunc$, then
    $f_{B,\lambda}$ is $\alpha$-aligned with respect to $\hfunc$.
\end{theorem}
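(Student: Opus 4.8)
The plan is to exploit the fact that the discretized function $f_{B,\lambda}$ is constant on each bin group $\Set{h}{b}$, so that conditioning on a value of $f_{B,\lambda}$ inside a fixed confidence level $h$ is nothing but conditioning on membership in one such group. Concretely, for a fixed $h$ the level sets $\{(x,h) : f_{B,\lambda}(x,h) = b\}$, as $b$ ranges over the output range of $f_{B,\lambda}$, are exactly the groups $\Set{h}{b_0}$ with $b_0 \in \Lambda[0,1]$, because every point with $\bfunc(x,h) \in \lambda(b_0)$ receives the same discretized value. Moreover, since consecutive bins $\lambda(b_0)$ are disjoint intervals of width $\lambda$ and $f_{B,\lambda}$ assigns each point the average of $\bfunc$ over its bin, the discretized values are strictly increasing in the bin index, so $b' \le b''$ as values of $f_{B,\lambda}$ is equivalent to $b_0' \le b_0''$ as bin positions.

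First I would construct the subsets $\Sbar_h$ required by $\alpha$-alignment by discarding from $\Scal_h$ exactly the light groups, i.e.\ I set $\Sbar_h = \bigcup \{ \Set{h}{b} : |\Set{h}{b}| > (\alpha/2)\lambda|\Scal_h| \}$. Since there are $1/\lambda$ bins and each discarded group contributes at most $(\alpha/2)\lambda|\Scal_h|$ points, the removed mass is at most $(1/\lambda)\cdot(\alpha/2)\lambda|\Scal_h| = (\alpha/2)|\Scal_h|$, giving $|\Sbar_h| \ge (1-\alpha/2)|\Scal_h|$, as required by Definition~\ref{def:alignment}.

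Next I would verify the alignment inequality. Fix $h' \le h''$ and output values $b' \le b''$ of $f_{B,\lambda}$, and let $\lambda(b_0')$, $\lambda(b_0'')$ be the corresponding bins, so that $b_0' \le b_0''$ by the monotonicity noted above. If the event $\{f_{B,\lambda}=b'\}\cap\Sbar_{h'}$ is empty—because $\Set{h'}{b_0'}$ was a light group we removed—the condition is vacuous, so I may assume both relevant groups survived. Because $\Sbar_{h'}$ is a union of whole groups, a surviving group lies entirely inside it, and conditioning on $f_{B,\lambda}(X,H)=b'$ together with $(X,H)\in\Sbar_{h'}$ recovers exactly $\Set{h'}{b_0'}$; likewise for $h'',b''$. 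Hence the left-hand side of the $\alpha$-alignment inequality for $f_{B,\lambda}$ equals $P(Y=1 \given (X,H)\in\Set{h'}{b_0'}) - P(Y=1 \given (X,H)\in\Set{h''}{b_0''})$. Crucially, a surviving group satisfies $|\Set{h}{b_0}| > (\alpha/2)\lambda|\Scal_h|$, which is precisely the size threshold appearing in the definition of $(\alpha,\lambda)$-alignment, so applying the hypothesis that $\bfunc$ is $(\alpha,\lambda)$-aligned bounds this difference by $\alpha$ and closes the argument.

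The main obstacle—really the only point requiring care—is the bookkeeping that makes $\Sbar_h$ a union of whole groups, so that the two conditioning operations (on the discretized value and on membership in $\Sbar_{h'}$) collapse into conditioning on a single group without leaving partial bins behind. This is what lets the surviving-group threshold $(\alpha/2)\lambda|\Scal_h|$ coincide exactly with the size hypothesis in the $(\alpha,\lambda)$-alignment definition, and it is simultaneously what keeps the discarded mass within the $\alpha/2$ budget needed for $|\Sbar_h| \ge (1-\alpha/2)|\Scal_h|$.
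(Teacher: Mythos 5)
Your proof is correct and takes essentially the same route as the paper's: drop the light groups $\Scal_{h,\lambda(b)}$, observe that within each $\Scal_h$ the level sets of $f_{B,\lambda}$ are exactly the surviving groups (with the discretized values ordered consistently with the bins), and invoke the $(\alpha,\lambda)$-alignment hypothesis on those groups. If anything, your bookkeeping is tighter than the paper's: by discarding only groups of size at most $(\alpha/2)\lambda|\Scal_h|$ you obtain $|\Sbar_h|\geq(1-\alpha/2)|\Scal_h|$, exactly matching Definition~\ref{def:alignment}, whereas the paper's own proof thresholds at $\alpha\lambda|\Scal_h|$ and hence only establishes $|\Sbar_h|\geq(1-\alpha)|\Scal_h|$.
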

\begin{proof}
    The proof is similar to the proof of Lemma 1 in H{\'e}bert-Johnson et al.~\cite{hebert2018multicalibration}.
    Consider all $\Set{h}{b}$ such that $|\Set{h}{b}| < \alpha \lambda |\Scal_h|$. By the $\lambda$-discretization,
    there are at most $1/\lambda$ such sets, thus, the cardinality of their union is at most $1/\lambda \alpha \lambda |\Scal_h|= \alpha |\Scal_h|$.
    Hence, for all $h\in \Hcal$, there exists a subset $\Sbar_h \subset \Scal_h$ with  $|\Sbar_h| \geq (1-\alpha) |\Scal_h|$ such that, 
    for all $h',h'' \in \Hcal$, with $h'\leq h''$, and all $b',b'' \in \Lambda[0,1]$, with $b' \leq b''$, it holds that
    \begin{equation}
            P(Y=1\given (X,H) \in \Set{h'}{b'}\cap \Sbar_{h'}) - P(Y=1 \given (X,H) \in \Set{h''}{b''}\cap \Sbar_{h''}) \leq \alpha \,.
            \label{eq:lambda_align}
    \end{equation}
    The $\lambda$-discretization sets all values of $(x,h)\in \Set{h'}{b'} $ to $f_{B,\lambda}(x,h) = \EE[ \bfunc(X,H) \given \bfunc(X,H) \in \lambda(b')]$.
    Note that, for $(x,h)\in \Set{h'}{b'}$, $f_{B,\lambda}(x,h)\in \lambda(b')$ and for $(x,h)\in \Set{h''}{b''}$, $f_{B,\lambda}(x,h)\in \lambda(b'')$,
    so it still holds that $\EE[ \bfunc(X,H) \given \bfunc(X,H) \in \lambda(b')] \leq \EE[ \bfunc(X,H) \given \bfunc(X,H) \in \lambda(b'')] $.
    Thus, using Eq.~\ref{eq:lambda_align}, we have that
    \begin{equation}
        \begin{split}
            &P(Y=1 \given \bfunc(X,H)= \EE[ \bfunc(X,H) \given (X,H) \in \lambda(b')], (X,H) \in \Sbar_{h'}) 
            \\ &- P(Y=1 \given \bfunc(X,H)=\EE[ \bfunc(X,H) \given (X,H) \in \lambda(b'')], (X,H) \in \Sbar_{h''}) \leq \alpha
        \end{split}
    \end{equation}
    This concludes the proof. 
\end{proof}

Finally, using Theorems~\ref{th:lambdamultcal_to_lambdaalignment} and \ref{th:lambdaalignment_to_alignment}, it readily follows that,
given a parameter $\alpha'$, the discretized confidence function $f_{B, \lambda}$ returned by Algorithm~\ref{alg:multical_alg} satisfies 
($2\alpha' + \lambda$)-aligned calibration with respect to $\hfunc$.

\subsection{Proof Theorem \ref{th:UMD_calibration}}

We structure the proof in three parts. 
We first explain the calibration guarantee that UMD provides and how it relates to human-aligned calibration.
Then, we derive a lower bound on the size of the subsets $\Dcal \cap \Scal_h$ so that the discretized confidence function $f_{B, \lambda}$
satisfies $\alpha$-aligned calibration with respect to $\hfunc$ with high probability. 
Finally, building on this result, we derive an upper bound on $|\Dcal|$ so that $f_{B, \lambda}$ satisfies $\alpha$-aligned calibration with high 
probability as long as there exists $\gamma> 0$ so that $P((X,H)\in \Scal_h) \geq \gamma$ for all $h \in \Hcal$.

\xhdr{Conditional Calibration implies Human-Aligned Calibration}
Running UMD on a dataset $\Dcal \in (\Zcal \times \Ycal)^n$, where each datapoint is sampled from $P^\Mcal$,
guarantees $(\alpha, \xi)$-conditional calibration, a PAC-style calibration guarantee~\citep{gupta2021distribution}.
Given a dataset $\Dcal$,  a confidence function $\bfunc$ 
satisfies $(\alpha, \xi)$-conditional calibration if, with probability at least $1-\xi$ over the randomness in $\Dcal$,
\begin{equation*}
    \forall b \in [0,1],\quad |P(Y=1 | \bfunc(X,H)=b) - b| \leq \alpha \,.
\end{equation*}
This stands in contrast to the definition of $\alpha$-calibration, which requires only that the confidence $\bfunc(X,H)$ is
at most $\alpha$ away from the true probability for $1-\alpha$ fraction of $\Zcal$.

Similarly, using an union bound over all $h \in \Hcal$, $(\alpha/2, \xi/|\Hcal|)$-conditional calibration of $\bfunc$ on each 
$\Scal_h$, $h \in \Hcal$, implies that, with probability at least $1-\xi$ over the randomness in $\Dcal$, $\bfunc$ 
satisfies that 
\begin{equation}
    \forall h\in \Hcal,\quad  \forall b \in [0,1], \quad |P(Y=1 | \bfunc(X,H)=b, H=h) - b| \leq \alpha/2 \,.
    \label{eq:PAC_calibration}
\end{equation}
Hence, analogously to the proof of Theorem~\ref{th:multcal_to_alignment}, this implies that, with probability at least $1-\xi$ 
over the randomness in $\Dcal$, $\bfunc$ also satisfies that 
\begin{equation}
    \begin{split}
        \forall h,h'\in \Hcal,& h\leq h',\ \  \forall b,b' \in \Gcal, b\leq b', \\ 
        &P(Y=1 | \bfunc(X,H)=b, H=h) - P(Y=1 | \bfunc(X,H)=b', H=h')  \leq \alpha \,.
        \label{eq:PAC_alignment}
    \end{split}
\end{equation}
In summary, from Eqs.~\ref{eq:PAC_calibration} and~\ref{eq:PAC_alignment}, we can conclude that $(\alpha/2, \xi/|\Hcal|)$-conditional calibration of $\bfunc$ 
on each $\Scal_h$, $h \in \Hcal$, implies that, with probability at least $1-\xi$, $\bfunc$ satisfies $\alpha$-aligned calibration, where, for all $h \in \Hcal$, we have
that $\Sbar_h = \Scal_h$.

\xhdr{Lower bound on $|\Dcal \cap \Scal_h|$ to achieve conditional calibration with UMD}
Running UMD on each partition $\Dcal \cap \Scal_h$ of $\Dcal$ induced by $h\in \Hcal$ achieves $(\alpha/2, \xi/|\Hcal|)$-conditional calibration
as long as each subset $\Dcal \cap \Scal_h$ of the data is large enough.
More specifically, the following lower bound on the size of the subsets $\Dcal \cap \Scal_h$ readily follows from Theorem 3 in Gupta et 
al.~\citep{gupta2021distribution}.

\begin{lemma}\label{lem:UMD_calibration} 
    The discretized confidence function $f_{B, \lambda}$ returned by $|\Hcal|$ instances of UMD, one per $\Scal_h$, is $(\alpha/2,\xi/|\Hcal|)$-conditional calibrated on $\Scal_h$ for any $\xi \in (0,1)$ if
    \begin{equation}\label{eq:lowerbound_psi_h}
       |\Dcal \cap \Scal_h| \geq n_\text{min} := \left( \frac{2\log\left(\frac{2|\Hcal|}{\xi}\cdot \left\lceil \frac{1}{\lambda} \right\rceil \right)}{\alpha^2} +2 \right) \cdot \left\lceil \frac{1}{\lambda} \right\rceil
    \end{equation}
\end{lemma}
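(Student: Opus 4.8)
The plan is to derive Lemma~\ref{lem:UMD_calibration} as a direct instantiation of Theorem 3 in Gupta et al.~\cite{gupta2021distribution}, which supplies a distribution-free, PAC-style calibration guarantee for uniform mass binning. That theorem says that, when UMD is run with $B$ bins on a calibration set, the resulting binned predictor is $(\epsilon,\delta)$-conditionally calibrated provided the set is large enough that each of the $B$ bins is populated with at least $\tfrac{\log(2B/\delta)}{2\epsilon^2}$ samples. First I would recall the mechanism it certifies: the value assigned to a bin is the empirical frequency of $Y=1$ among the samples falling in that bin, and the statement follows by bounding, for every bin, the deviation of this empirical frequency from the true conditional probability $P(Y=1\given \bfunc(X,H)\in\text{bin})$ via two-sided concentration, then applying a union bound over the $B$ bins to obtain the uniform-over-$b$ conclusion demanded by conditional calibration. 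The one genuinely delicate point—namely that the bin boundaries are themselves empirical quantiles of the \emph{same} calibration sample, so the within-bin frequencies are not plain i.i.d.\ averages—is exactly what Theorem 3 resolves, and I would invoke it as a black box rather than reprove it.

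Next I would instantiate this guarantee with the parameters dictated by our construction. Since the algorithm runs one independent UMD instance on each subsample $\Dcal\cap\Scal_{h}$ with $B=N:=\lceil 1/\lambda\rceil$ bins induced by the $\lambda$-discretization, I would apply Theorem 3 on the single partition $\Scal_{h}$ with target accuracy $\epsilon=\alpha/2$ and target failure probability $\delta=\xi/|\Hcal|$. Substituting these values, the two-sided Hoeffding tail together with the union bound over the $N$ bins yields the per-bin sample requirement
\begin{equation*}
    m \;=\; \frac{\log\!\left(2N|\Hcal|/\xi\right)}{2(\alpha/2)^2} \;=\; \frac{2\log\!\left(\frac{2|\Hcal|}{\xi}\cdot\left\lceil\frac{1}{\lambda}\right\rceil\right)}{\alpha^2},
\end{equation*}
which already exhibits the exact logarithmic factor appearing in the statement of the lemma.

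It then remains to pass from this per-bin requirement to a requirement on the total size $|\Dcal\cap\Scal_{h}|$. Here I would use the second component of Theorem 3, namely the guarantee that uniform mass binning of $n$ points into $N$ quantile bins places at least the prescribed number of points in every bin once $n\geq(m+2)\cdot N$, where the additive slack of $2$ per bin absorbs the off-by-one effects of empirical quantile estimation and the possibility of coincident boundary values. Taking $|\Dcal\cap\Scal_{h}|\geq(m+2)\cdot N = n_\text{min}$ therefore ensures each of the $N$ bins contains at least $m$ samples, so the concentration-plus-union-bound argument applies and $f_{B,\lambda}$ restricted to $\Scal_{h}$ is $(\alpha/2,\xi/|\Hcal|)$-conditionally calibrated, as claimed.

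I expect the only real obstacle to be bookkeeping rather than mathematics. One must carefully track the factor of $2$ inside the logarithm—contributed jointly by the two-sided tail bound and the union bound over the $N=\lceil 1/\lambda\rceil$ bins—and faithfully reproduce the additive $+2$ slack from the uniform-mass construction of Gupta et al. The conceptual content is entirely routine; the care lies in instantiating the external guarantee with the shifted parameters $\alpha/2$ and $\xi/|\Hcal|$ and verifying that the resulting constants collapse exactly to the stated $n_\text{min}$.
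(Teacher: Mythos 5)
Your proposal is correct and takes essentially the same route as the paper: both treat Theorem 3 of Gupta et al.\ as a black box, instantiate it with $\epsilon=\alpha/2$, $\delta=\xi/|\Hcal|$ and $B=\lceil 1/\lambda\rceil$ bins, and then do the algebra converting the resulting concentration bound into the stated $n_\text{min}$ (your per-bin-count-plus-slack bookkeeping is equivalent to the paper's step of solving $\epsilon=\sqrt{\log(2B/\delta)/\bigl(2(\lfloor n/B\rfloor-1)\bigr)}$ for $n$). The only detail worth adding is the hypothesis under which the paper invokes that theorem, namely that $\bfunc(X,H)$ is absolutely continuous with respect to the Lebesgue measure (otherwise a randomization trick is needed), which your black-box invocation leaves implicit.
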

\begin{proof}
    Let $B$ denote the number of bins in UMD. 
    Theorem 3 in Gupta et al.~\citep{gupta2021distribution} states that, if $\bfunc(X,H)$ is absolutely continuous with respect to the Lebesgue measure\footnote{If $\bfunc$ 
    is not continuous with respect to the Lebesgue measure (or equivalently put, $\bfunc$ does not have a probability density function), 
    a randomization trick can be used to ensure that the results of the theorem hold.}
    and $|\Dcal \cap \Scal_h|\geq 2B$, then the discretized confidence function output by UMD is $(\epsilon,\xi')$-conditionally calibrated for any $\xi' \in (0,1)$ and
    \begin{equation}\label{eq:theorem_gupta}
        \epsilon = \sqrt{\frac{\log(2B/\xi')}{2(\lfloor |\Dcal \cap \Scal_h|/B \rfloor -1)}}\,.
    \end{equation}
    Then, for a given $\alpha$, setting $\epsilon=\alpha/2$, $B=\lceil 1/\lambda \rceil$ and $\xi'=\xi/|\Hcal|$, 
    we can solve Eq.~\ref{eq:theorem_gupta} for the lower bound on $|\Dcal \cap \Scal_h| \geq n_\text{min}$ with $n_\text{min}$ as defined in Eq.~\ref{eq:lowerbound_psi_h}.
\end{proof}

\xhdr{Upper bound on $|\Dcal|$ to achieve conditional calibration with UMD}
Suppose $P((X,H)\in \Scal_h) \geq \gamma$ for all $h \in \Hcal$.
When $|\Hcal| \geq 2$, we give an upper bound on $|\Dcal|$ so that with high probability $|\Dcal \cap \Scal_h| \geq n_\text{min}$ for all $h\in \Hcal$.

In the process of sampling $\Dcal \in (\Zcal \times \Ycal)^n$ from $P^\Mcal$, let $R^{(h)}_i=1$ denote the event
that the $i$-th datapoint $(x_i, h_i, y_i)$ has confidence value $h$, \ie, $h_i=h$.
Then, we can express $|\Dcal \cap \Scal_h|$ in terms of random variable $R^{(h)}$, defined as
\begin{equation}
    R^{(h)}= \sum_{i=1}^{|\Dcal|} R^{(h)}_i.
\end{equation}
Since $R^{(h)}_i$ is a Bernoulli-distributed variable with $P(R^{(h)}_i)=P((X,H)\in \Scal_h)$, the expected value
of $R^{(h)}$ is $\mu(h):=\EE[R^{(h)}] = P((X,H)\in \Scal_h) \cdot |\Dcal| \geq \gamma \cdot |\Dcal|$.

Let $|\Dcal| = 2 \cdot |\Hcal|\cdot \log(2/\xi)\cdot 1/\gamma  \cdot n_\text{min}$, observe that in this case 
\begin{equation*}
P(R^{(h)} \leq n_\text{min}) = P\left(R^{(h)} \leq \frac{\gamma}{2|\Hcal|\cdot \log(2/\xi)} \cdot |\Dcal|\right).
\end{equation*}
For $|\Hcal|\geq 2$ and $\xi \in (0,1)$, we have $1/(2|\Hcal|\cdot \log(2/\xi))\in (0,1)$ and we can use a variation of the Chernoff bound to show
\begin{equation*}
    \begin{split}
        P(R^{(h)} \leq n_\text{min}) &\leq P\left(R^{(h)} \leq \frac{1}{2|\Hcal|\cdot \log(2/\xi)} \cdot \mu(h)\right) 
        \\&\leq e^{-\mu(h) \left(\frac{2|\Hcal|\cdot \log(2/\xi)-1}{2|\Hcal|\cdot \log(2/\xi)}\right)^2\cdot \frac{1}{2}} 
        \\&= e^{-\mu(h)\cdot \frac{1}{2} \cdot \left( 1 - \frac{1}{|\Hcal|\cdot \log(2/\xi)} + \frac{1}{(2|\Hcal|\cdot \log(2/\xi))^2}\right)} 
        \\ &\leq \frac{\xi}{2} \cdot e^{- |\Hcal| \cdot n_\text{min} \cdot \left( \frac{1}{2} - \frac{1}{2|\Hcal|\cdot \log(2/\xi)}+ \frac{1}{2(2|\Hcal|\cdot \log(2/\xi))^2} \right)},
    \end{split}
\end{equation*}
where the first and last inequality results from using $\mu(h)>\gamma \cdot |\Dcal|$.
We can now use a union bound to obtain a lower bound on the probability that for any $h \in \Hcal$, $|\Dcal \cap \Scal_h| \leq n_\text{min}$, \ie,
\begin{equation}
    \begin{split}
        P(\exists h \in \Hcal:\ |\Dcal \cap \Scal_h| \leq n_\text{min}) 
        &\leq \frac{\xi}{2} \cdot |\Hcal| \cdot e^{- |\Hcal| \cdot n_\text{min} \cdot \left( \frac{1}{2} - \frac{1}{2|\Hcal|\cdot \log(2/\xi)}+ \frac{1}{2(2|\Hcal|\cdot \log(2/\xi))^2} \right)}
    \end{split}
\end{equation}
One can verify that for $|\Hcal| \geq 2$ and $n_\text{min} \geq 1$, we have $P(\exists h \in \Hcal:\ |\Dcal \cap \Scal_h| \leq n_\text{min}) \leq \frac{\xi}{2}$.
Hence, if $|\Dcal|=2 \cdot |\Hcal|\cdot \log(2/\xi)\cdot 1/\gamma  \cdot n_\text{min}$, then, for all $h\in \Hcal$, $|\Dcal \cap \Scal_h| \leq n_\text{min}$ with probability $1-\xi/2$.

Combining this result and Lemma~\ref{lem:UMD_calibration}, we have that the discretized confidence function $f_{B, \lambda}$ returned by $|\Hcal|$ instances of UMD, one per $\Scal_h$,
 is $(\alpha/2,\xi/(2|\Hcal|))$-conditional calibrated on each $\Scal_h$ 
 with probability at least $1-\xi/2$ for any $\xi \in (0,1)$ if
 \begin{equation}
    |\Dcal| =  2 \cdot |\Hcal|\cdot \frac{\log(2/\xi)}{\gamma } \cdot\left( \frac{2\log\left(\frac{4|\Hcal|}{\xi}\cdot \left\lceil \frac{1}{\lambda} \right\rceil \right)}{\alpha^2} +2 \right) \cdot \left\lceil \frac{1}{\lambda} \right\rceil
 \end{equation}
 Finally, using a union bound, we can conclude that $f_{B, \lambda}$ achieves $\alpha$-aligned calibration with respect to $\hfunc$ 
 with probability at least $1-\xi$ from
 $$ |\Dcal| = O\left( |\Hcal| \cdot \frac{\log(|\Hcal|/\xi\lambda)}{\alpha^2\cdot \lambda \cdot  \gamma }\right)$$ 
 samples. This concludes the proof.

\clearpage
\newpage

%
\section{Multicalibration Algorithm}
\label{sec:link_multical}

In this section, we give a high-level description of the post-processing algorithm for multicalibration 
introduced by H{\'e}bert-Johnson et al.~\cite{hebert2018multicalibration}. 
The algorithm works with a discretization of $[0,1]$ into uniform sized bins of size $\lambda$,
for a $\lambda >0$. Formally the $\lambda$-discretization of $[0,1]$, is defined as 
\begin{definition}[$\lambda$-discretization \citep{hebert2018multicalibration}]
    Let $\lambda>0$. The $\lambda$-discretization of $[0,1]$, denoted by $\Lambda[0,1]=\{\frac{\lambda}{2},\frac{3\lambda}{2}, \dots, 1-\frac{\lambda}{2}\}$,
    is the set of $1/\lambda$ evenly spaced real values over $[0,1]$. For $b \in \Lambda[0,1]$, let 
    \begin{equation}
        \lambda(b)=[b-\lambda/2, v+\lambda/2)
    \end{equation} 
    be the $\lambda$-interval centered around $b$ (except for the final interval, which will be $[1-\lambda,1]$).
\end{definition}

It starts by partitioning each subspace $\Scal_h$ into $1/\lambda$ 
groups $\Scal_{h, \lambda(b)} = \{ (x, h) \in \Scal_h \given \bfunc(x, h) \in \lambda(b)\}$, with $b \in \Lambda[0,1]$. 
Then, it repeatedly looks for a large enough group $\Scal_{h, \lambda(b)}$ such that the absolute difference between the average confidence 
value $\EE[ \bfunc(X, H) \given (X, H) \in \Scal_{h, \lambda(b)}]$ and the probability $P(Y = 1 \given (X, H) \in \Scal_{h, \lambda(b)})$ 
is larger than $\alpha$ and, if it finds it, it updates the confidence value $\bfunc(x,h)$ of each $(x, h) \in \Scal_{h, \lambda(b)}$ 
by this difference.
Once the algorithm cannot find any more such a group, 
it returns a discretized confidence function $f_{B, \lambda}(x, h) = \EE[\bfunc(X,H) \given \bfunc(X, H) \in \lambda(b)]$, with $b \in \Lambda[0,1]$ 
such that $\bfunc(x,h) \in \lambda(b)$, which is guaranteed to satisfy $(\alpha + \lambda)$-multicalibration. 

Algorithm~\ref{alg:multical_alg} provides a pseudocode implementation of the overall algorithm. 
Within the implementation, it is worth noting that the expectations and probabilities can be estimated with fresh samples from the distribution or from 
a fixed dataset using tools from differential privacy and adaptive data analysis, as discussed in H{\'e}bert-Johnson et al.~\cite{hebert2018multicalibration}.

\begin{algorithm}[h]
    \caption{Post-processing algorithm for $(\alpha+\lambda)$-multicalibration} 
     \label{alg:multical_alg}
     \begin{algorithmic}[1]
        \STATE \textbf{Input:} confidence function $\bfunc$, parameters $\alpha, \lambda >0$
        \STATE \textbf{Output:} confidence function $f_{B,\lambda}$
        \vspace{1mm}
        
        \REPEAT 
        	    \STATE $\text{updated} \leftarrow \texttt{false}$
        	    \FOR {$\Scal_h \in \Ccal$ \& $b \in \Lambda[0,1]$}
                \STATE $\Set{h}{b} \leftarrow \Scal_h \cap \{(x,h) \in \Zcal \given \bfunc(x,h) \in \lambda(b)\}$
                
                \IF{$P((X,H)\in \Set{h}{b})< \alpha\lambda \cdot P((X,H)\in \Scal_h)$}
                    \STATE \textbf{continue}
                \ENDIF
                \STATE $\bar b_{h,\lambda(b)} \leftarrow \EE[\bfunc(X,H) \given (X,H) \in \Set{h}{b}]$
                \STATE $r_{h,\lambda(b)} \leftarrow P(Y=1 \given (X,H) \in \Set{h}{b})$
                \IF{$ |r_{h,\lambda(b)} - \bar b_{h,\lambda(b)}| > \alpha$}
                	   \STATE $\text{updated} \leftarrow \texttt{true}$
	   	   \FOR {$(x,h) \in \Set{h}{b}$}
                    	\STATE $\bfunc(x,h) \leftarrow \bfunc(x,h) + (r_{h,\lambda(b)} - \bar b_{h,\lambda(b)})$ \COMMENT{project into $[0,1]$ if necessary}
                    \ENDFOR
                \ENDIF
            \ENDFOR
        \UNTIL{$\text{updated} = \texttt{false}$}
        \vspace{1mm}
        \FOR {$b \in \Lambda[0,1]$}
            \STATE $\bar b_{\lambda(b)} \leftarrow \Exp[\bfunc(X,H) | \bfunc(X,H) \in \lambda(b)]$
            \FOR{$(x, h) \in \Zcal \,:\, \bfunc(x,h)\in \lambda(b)$} 
            	\STATE $f_{B,\lambda}(x,h) \leftarrow \bar b_{\lambda(b)}$
	    \ENDFOR
        \ENDFOR
        \STATE \textbf{return} $f_{B,\lambda}$
     \end{algorithmic}
\end{algorithm}

\clearpage
\newpage

\section{Additional Details about the Experiments}
\label{sec:app_experiments}

\xhdr{Transformation of confidence values}
In the Human-AI Interactions dataset, the AI model is 
a simple statistical model where $b$ is just a noisy average confidence $h$ of an independent set of ca. $50$ human labelers on each 
task instance.
Moreover, the confidence values were originally recorded on a scale of $[-1,1]$,
where $1$ means complete certainty on the correct true label and $-1$ means complete certainty on the 
incorrect label.
To better match our theoretical framework, we transform all confidence values to a scale of $[0,1]$, where 
$1$ means complete certainty that the true label $y=1$ and $0$ means complete certainty that the true 
label is $y\neq 1$.
%
More formally, let $\hat b, \hat h, \hat h_{\text{+AI}} \in [-1,1]$ be the original confidence values in the dataset, then 
we obtain $b \in [0, 1]$ via the following transformation:
\begin{equation*}
    b = \begin{cases}
        (\hat b + 1)/2  & \text{if $y=1$} \\
        1 - (\hat b + 1)/2 & \text{if $y=0$, } 
    \end{cases} 
\end{equation*}
and analogously for $h$ and $h_\text{+AI}$.

\xhdr{Comparing decision policies $\pi_B$, $\pi_H$ and $\pi_{H_\text{+AI}}$}
Figure~\ref{fig:roc-plot} shows the ROC curves for the decision policies $\pi_B$, $\pi_H$ and $\pi_{H_\text{+AI}}$ in each
of the four tasks in the Human-AI Interactions dataset.

\begin{figure*}[h!]
    \centering
\subfloat[Art]{\includegraphics[width=0.48\textwidth, bb=0 0 461 346]{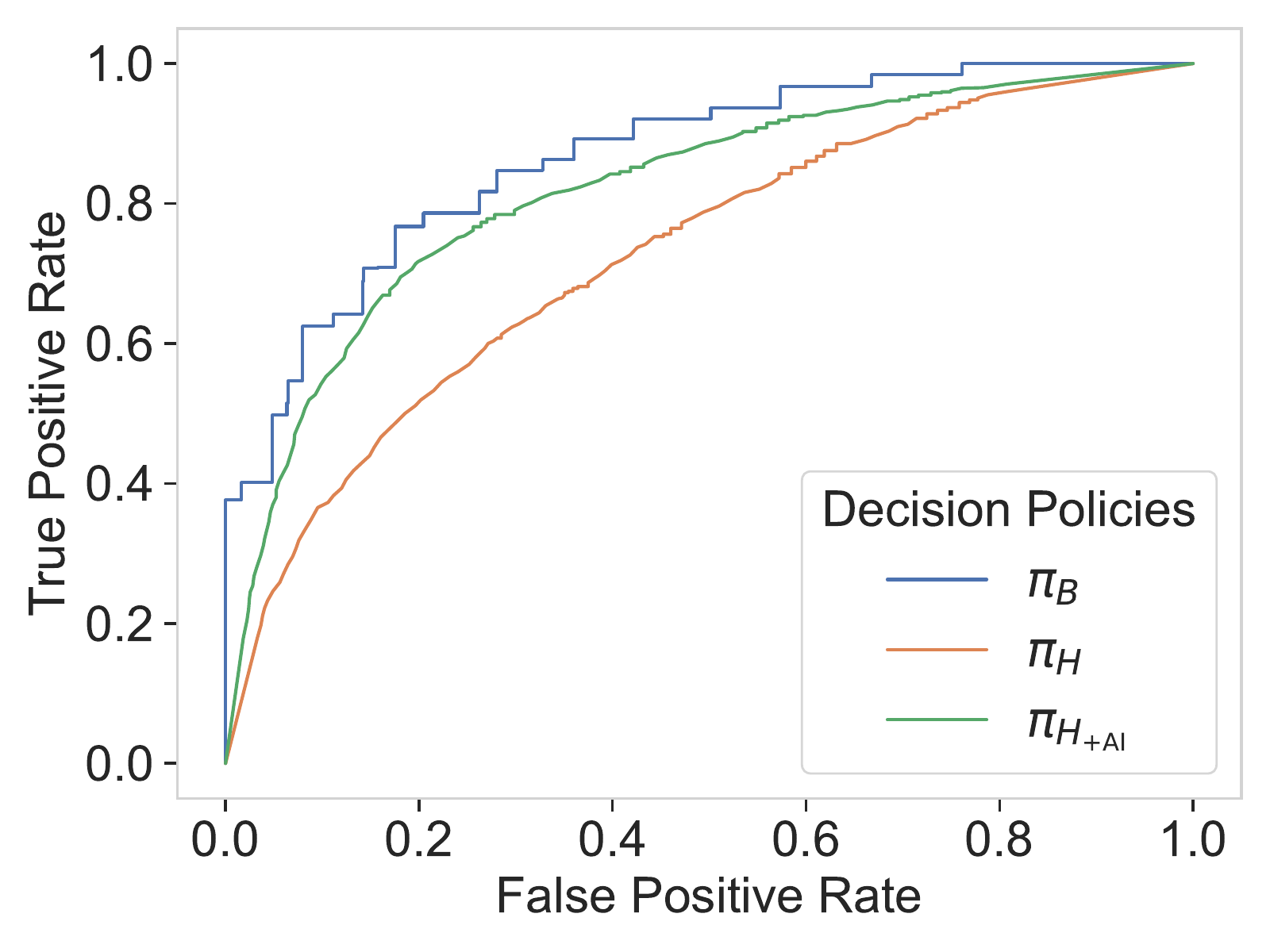}}
\hspace{1em}
\subfloat[Sarcasm]{\includegraphics[width=0.48\textwidth, bb=0 0 461 346]{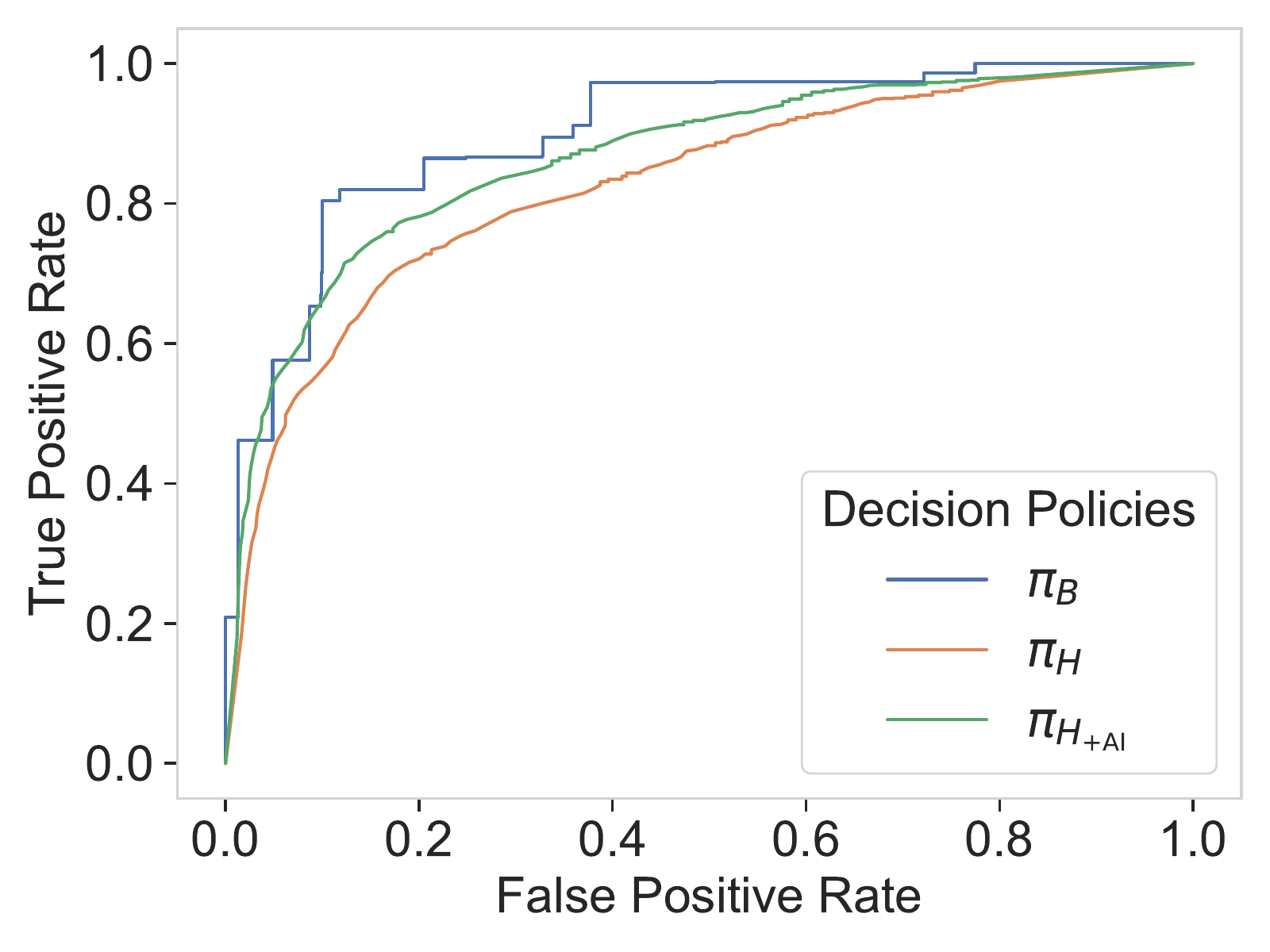}}
\\
    \subfloat[Cities]{\includegraphics[width=0.48\textwidth, bb=0 0 461 346]{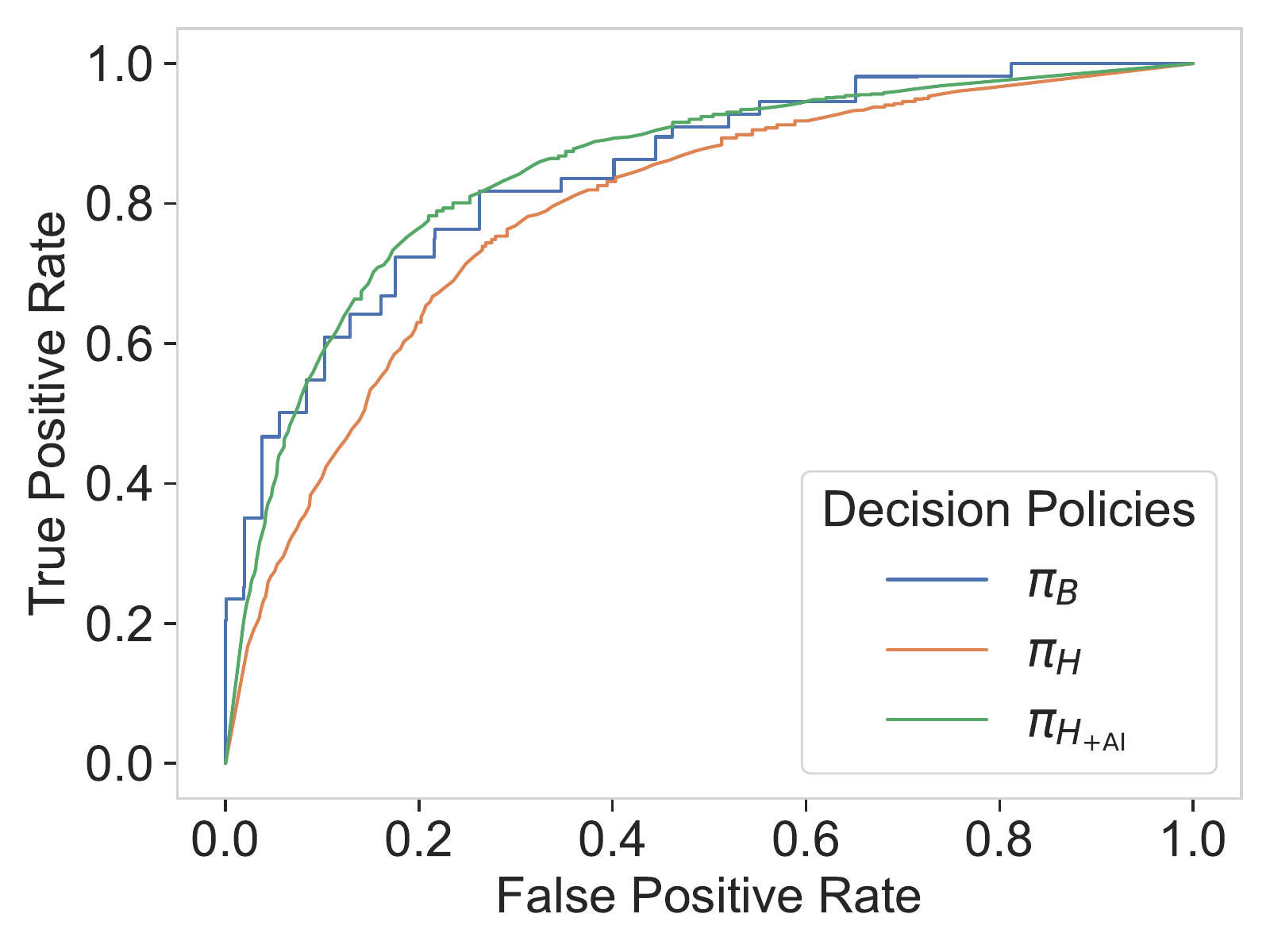}}
\hspace{1em}
    \subfloat[Census]{\includegraphics[width=0.48\textwidth, bb=0 0 461 346]{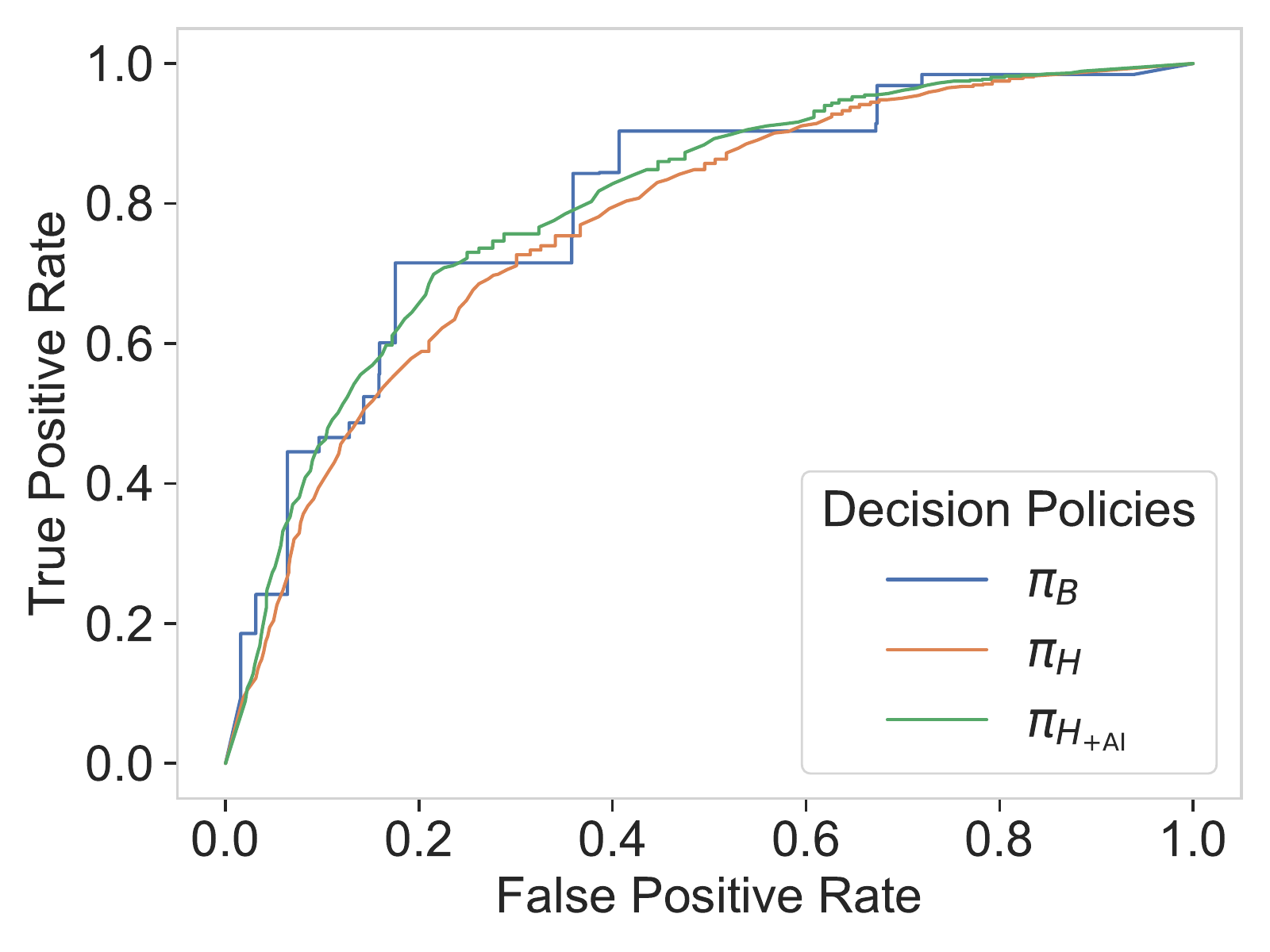}}
    \caption{
        ROC curves for the decision policies $\pi_B$, $\pi_H$ and $\pi_{H_\text{+AI}}$.
    }
    \vspace{-3mm}
    \label{fig:roc-plot}
\end{figure*}

\end{document}